\newtheorem{theorem}{Theorem}
\newtheorem{lemma}{Lemma}
\newtheorem{proposition}{Proposition}
\newtheorem{definition}{Definition}
\DeclareMathOperator*{\argmax}{arg\,max}
\title{Adjustable Robust Reinforcement Learning for Online 3D Bin Packing}
\author{%
Yuxin Pan$^{1}$ \quad Yize Chen$^{2}$$^{\ast}$ \quad Fangzhen Lin$^{3}$\thanks{Co-corresponding Authors}\\
$^{1}$EMIA, The Hong Kong University of Science and Technology \\
$^{2}$AI Thrust, The Hong Kong University of Science and Technology (Guangzhou) \\
$^{3}$CSE, The Hong Kong University of Science and Technology \\
\texttt{yuxin.pan@connect.ust.hk} \quad \texttt{yizechen@ust.hk} \quad \texttt{flin@cs.ust.hk}\\
}
\begin{document}

\maketitle

\begin{abstract}
    Designing effective policies for the online 3D bin packing problem (3D-BPP) has been a long-standing challenge, primarily due to the unpredictable nature of incoming box sequences and stringent physical constraints.  While current deep reinforcement learning (DRL) methods for online 3D-BPP have shown promising results in optimizing average performance over an underlying box sequence distribution, they often fail in real-world settings where some worst-case scenarios can materialize. Standard robust DRL algorithms tend to overly prioritize optimizing the worst-case performance at the expense of performance under normal problem instance distribution. To address these issues, we first introduce a permutation-based attacker to investigate the practical robustness of both DRL-based and heuristic methods proposed for solving online 3D-BPP. Then, we propose an adjustable robust reinforcement learning (AR2L) framework that allows efficient adjustment of robustness weights to achieve the desired balance of the policy's performance in average and worst-case environments. Specifically, we formulate the objective function as a weighted sum of expected and worst-case returns, and derive the lower performance bound  by relating to the return under a mixture dynamics. To realize this lower bound, we adopt an iterative procedure that searches for the associated mixture dynamics and improves the corresponding policy. We integrate this procedure into two popular robust adversarial algorithms to develop the exact and approximate AR2L algorithms. Experiments demonstrate that AR2L is versatile in the sense that it improves policy robustness while maintaining an acceptable level of performance for the nominal case.     
\end{abstract}

\section{Introduction}

The offline 3D bin packing problem (3D-BPP) is a classic NP-hard combinatorial optimization problem (COP)~\citep{de2003greedy}, which aims to optimally assign cuboid-shaped items with varying sizes to the minimum number of containers while satisfying physical constraints~\citep{martello2000three}. In such a setting, a range of physical constraints can be imposed to meet diverse packing requirements and preferences~\citep{gzara2020pallet}. The primary constraint requires items to be packed stably, without any overlaps, and kept within the bin. The online counterpart of 3D-BPP is more prevalent and practical in logistics and warehousing~\citep{wang2020robot}, as it does not require complete information about all the unpacked items in advance. In this setting, only a limited number of upcoming items on a conveyor can be observed, and items must be packed after the preceding items are allocated~\citep{seiden2002online}. Thus, besides physical constraints already present in the offline counterpart, the item permutation on the conveyor introduces additional constraints, while the solution approach must take packing order into consideration.

Solution techniques for online 3D-BPP can be broadly categorized into heuristic and learning-based methods. As heuristics often heavily rely on manually designed task-specific score functions~\citep{chazelle1983bottomn, ha2017online}, they have limitations in expressing complex packing preferences and adapting to diverse scenarios. By comparison, learning-based methods usually involve utilizing deep reinforcement learning (DRL) techniques to develop a more adaptable packing policy capable of accommodating diverse packing requirements~\citep{zhao2021online, zhao2022learning_iclr, song2023towards}. While emerging DRL-based methods for online 3D-BPP are effective at optimizing average performance, they do not account for worst-case scenarios. This is because these methods often fail on ``hard'' box sequences arising from the inherent uncertainty in incoming box sequences. In addition, few studies have investigated the robustness of these methods, as incoming item sequences can vary a lot in practice. These limit the practical usage of learning-based approaches.

To study the algorithm robustness under worst-case scenarios, one plausible approach is to design an attacker that can produce perturbations commonly encountered in real-world settings. Various proposed attackers for perturbing RL models often add continuous-value noise to either received observations or performed actions~\citep{tessler2019action, sun2021strongest}. Yet such methods are mostly not well-suited for online 3D-BPP, as the state is defined by the bin configuration and the currently observed items, and adding continuous-value noise may not correspond to real-world perturbations. As previously discussed, item permutations can significantly impact the performance of a given packing strategy, and in real-world scenarios, the packing strategy may face challenging box sequences. By contrast, perturbing the bin configuration may violate practical constraints or packing preferences. Therefore, we argue that \emph{using diverse item permutations to evaluate the model's robustness} is a more suitable approach. Some of general robust reinforcement learning frameworks against perturbations can be applied to online 3D-BPP~\citep{pinto2017robust, ying2022towards, ho2022robust, panaganti2022robust}, yet these methods may develop overly conservative policies, as they usually prioritize improving worst-case performance at the expense of average performance. This motivates us to design a robust algorithm that can effectively handle worst-case problem instances while attaining satisfactory average performance.

In this paper, we develop a novel permutation-based attacker to practically evaluate robustness of both DRL-based and heuristic bin packing policies. We propose an Adjustable Robust Reinforcement Learning (AR2L) framework, which preserves high performance in both nominal and worst-case environments. Our learnable attacker selects an item and place it to the most preceding position in the observed incoming item sequence, as only the first item in this sequence has the priority to be packed first. Essentially, this attacker attempts to identify a problem instance distribution that is challenging for a given packing policy by simply reordering item sequences. The developed AR2L incorporates such novel attacker along with an adjustable robustness weight parameter. Specifically, to consider both the average and worst-case performance, we formulate the packing objective as a weighted sum of expected and worst-case returns defined over space utilization rate. We  derive a lower bound for the objective function by relating it to the return under a mixture dynamics, which guarantees AR2L's performance. To realize this lower bound, we turn to identifying \emph{a policy with the highest lower bound as the surrogate task}. In this task, we use an iterative procedure that searches for the associated mixture dynamics for the policy evaluation, and improves the policy under the resulting mixture dynamics. To further put AR2L into practice, we find the connecting link between AR2L with the Robust Adversarial Reinforcement Learning (RARL) algorithm~\citep{pinto2017robust} and the Robust $f$-Divergence Markov Decision Process (RfMDP) algorithm~\citep{ho2022robust, panaganti2022robust}, resulting in the exact AR2L and the approximate AR2L algorithms respectively. Empirical evidence shows our method is able to achieve competitive worst-case performance in terms of space utilization under the worst-case attack, while still maintaining good average performance compared to all previously proposed robust counterparts.

\section{Related Work}
Numerous heuristic~\cite{ha2017online, karabulut2005hybrid, wang2019stable, li2015hybrid, hu2017solving, hu2020tap} and learning-based methods~\citep{hu2017solving, verma2020generalized, zhao2021online, zhao2022learning, yang2021packerbot, zhao2022learning_iclr, song2023towards} have been proposed to solve online 3D-BPP. However, these methods typically only consider average performance and may not be robust against perturbations. For more detailed reviews of related algorithms on 3D-BPP, please refer to the Appendix.

\textbf{Adversarial Attacks in Deep RL.}
More recently, several studies have demonstrated that deep RL algorithms are vulnerable to adversarial perturbations from attackers, and robust policies can be trained accordingly~\citep{huang2017adversarial, zhang2020robust, zhang2021robust, sun2021strongest}. Yet standard DRL attack schemes cannot generate realistic attacks for 3D-BPP due to the setting of added perturbations. In the field of COP,~\cite{lu2023roco} used a RL-based attacker to modify the underlying graph. Yet this method is limited to offline COPs that can be formulated as graph. ~\cite{kong2019new} employed a generative adversarial network (GAN)~\citep{goodfellow2020generative} to generate some worst-case problem instances from Gaussian noise under restricted problem setting. 

\textbf{Robust RL.} In order to find policies that are robust against various types of perturbations, a great number of studies have investigated different strategies, such as regularized-based methods~\citep{zhang2020robust, shen2020deep, oikarinen2021robust, kumar2021policy, kuang2022learning}, attack-driven methods~\citep{kos2017delving, pattanaik2017robust, behzadan2017whatever}, novel bellman operators~\citep{liang2022efficient, wang2021online, wang2022policy}, and conditional value at risk (CVaR) based methods~\citep{chow2017risk, tang2019worst, hiraoka2019learning, ying2022towards}. However, these methods are generally intended to deal with $l_p$-norm perturbations on vectorized observations, which could limit their applicability to online 3D-BPP.
Built upon robust MDP framework~\citep{iyengar2005robust},~\cite{pinto2017robust} modeled the competition between the agent and the adversary as a zero-sum two-player game, but such game formulation has an excessive prioritization of the worst-case performance.
\cite{jiang2021monotonic} introduced Monotonic Robust Policy Optimization (MRPO) to enhance domain generalization by connecting worst-case and average performance. Yet such approach imposes Lipschitz continuity assumptions, which are unaligned with 3D-BPP. The recently developed robust $f$-divergence MDP can approximate the worst-case value of a policy by utilizing nominal samples instead of relying on samples from an adversary~\citep{ho2022robust, panaganti2022robust}. However, this method still cannot account for expected cases, as its focus is solely on learning values in worst-case scenarios.

\section{Preliminaries}

\textbf{MDP Formulation of Online 3D-BPP.} To learn a highly effective policy via RL, the online 3D-BPP is formulated as an MDP. Inspired by PCT~\citep{zhao2022learning_iclr}, in this formulation, the state $s_{t}^{\mathrm{pack}} \in \mathcal{S}^{\mathrm{pack}}$ observed by the packing agent consists of the already packed $N_C$ items $\bm{\mathrm{C}}_{t}$ in the bin, the observed incoming $N_B$ items $\bm{\mathrm{B}}_{t}$, and a set of potential positions $\bm{\mathrm{L}}_{t}$. The packed items in $\bm{\mathrm{C}}_{t}$ have spatial properties like sizes and coordinates, while each item $b_{t,i} \in \bm{\mathrm{B}}_{t}, i \in \{1, .., N_B\}$ only provides size information. The potential positions are typically generated for the most preceding item in $\bm{\mathrm{B}}_{t}$ using heuristic methods~\citep{martello2000three, crainic2008extreme, ha2017online}. Then, the action $a_t^{\mathrm{pack}} \in \mathcal{A}^{\mathrm{pack}}$ is to choose one position $l_{t,j} \in \bm{\mathrm{L}}_{t}, j\in \{ 1, .., N_L\}$ for the first item in $\bm{\mathrm{B}}_{t}$. In the termination state $s_{T}$ ($T$ is the episode length), the agent cannot pack any more items. As a result, the agent receives a delayed reward $r_{T}$ that represents the space utilization at $s_{T}$, instead of immediate rewards ($r_{t}=0, t<T$). The discount factor $\gamma$ here is set to 1. The aim is to maximize the space utilization by learning a stochastic packing policy $\pi_{\mathrm{pack}}(l_{t,j} | \bm{\mathrm{C}}_{t}, \bm{\mathrm{B}}_{t}, \bm{\mathrm{L}}_{t})$.

\textbf{Robust MDP.} The goal of robust MDP is to find the optimal robust policy that maximizes the value against the worst-case dynamics from an uncertainty set $\mathcal{P}^{w}$. The uncertainty set is defined in the neighborhood of the nominal dynamics $P^{o}= (P_{s,a}^{o}, (s,a) \in \mathcal{S}\times\mathcal{A})$ and satisfies rectangularity condition~\citep{iyengar2005robust}, defined as $\mathcal{P}^{w} = \otimes_{(s,a) \in \mathcal{S}\times\mathcal{A}}\mathcal{P}_{s,a}^{w}, \quad \mathcal{P}_{s,a}^{w} = \{ P_{s,a} \in \Delta(\mathcal{S}): D_{TV}(P_{s,a}||P^{o}_{s,a}) \leq \rho \}$, where $D_{TV}(\cdot)$ denotes the total variation (TV) distance, $\otimes$ is the Cartesian product, $\Delta(\mathcal{S})$ is a set of probability measures defined on $\mathcal{S}$, and $\rho\in[0,1]$ is the radius of the uncertainty set. Consequently, the robust value function under a policy $\pi$ is $V_{r}^{\pi} = \inf_{P^{w} \in \mathcal{P}^{w}} V_{r}^{\pi, P^{w}}$. And the robust Bellman operator $\mathcal{T}_{r}$ is defined as $\mathcal{T}_{r}V_{r}^{\pi}(s) = \mathbb{E}_{a\sim\pi}[r(s,a) + \gamma\inf_{P_{s,a}^{w}\in \mathcal{P}_{s,a}^{w}}\mathbb{E}_{s^{\prime}\sim P_{s,a}^{w}}[V_{r}^{\pi}(s^{\prime})]]$, where $s^{\prime}$ denotes the next state. To empirically solve the robust MDP,~\cite{pinto2017robust} proposed RARL that learns the robust policy under the environment perturbed by a learned optimal adversary.

\textbf{Robust $f$-divergence MDP.} To avoid the need for a costly trained adversary, RfMDP~\citep{ho2022robust, panaganti2022robust} formulates the problem as a tractable constrained optimization task to approximate the robust value using samples from $P^{o}$. The objective is to find a transition distribution that minimizes the robust value function, subject to the constraint in $\mathcal{P}^{w}$ described by the $f$-divergence. As a result, a new robust Bellman operator is introduced by solving the dual form of this constrained optimization problem through the Lagrangian multiplier method. 

\section{Adjustable Robust Reinforcement Learning}
In this section, we begin by introducing a novel yet practical adversary capable of generating worst-case problem instances for the online 3D-BPP.  
Next, we present the adjustable robust reinforcement learning (AR2L) framework to address the robustness-performance tradeoff.
Finally, we integrate AR2L into both the RARL algorithm and the RfMDP algorithm to derive the exact and approximate AR2L algorithms in a tractable manner.

\subsection{Permutation-based Attacker}

\setlength\intextsep{0pt}
\begin{wrapfigure}{r}{0pt}
\includegraphics[width=0.6\textwidth]{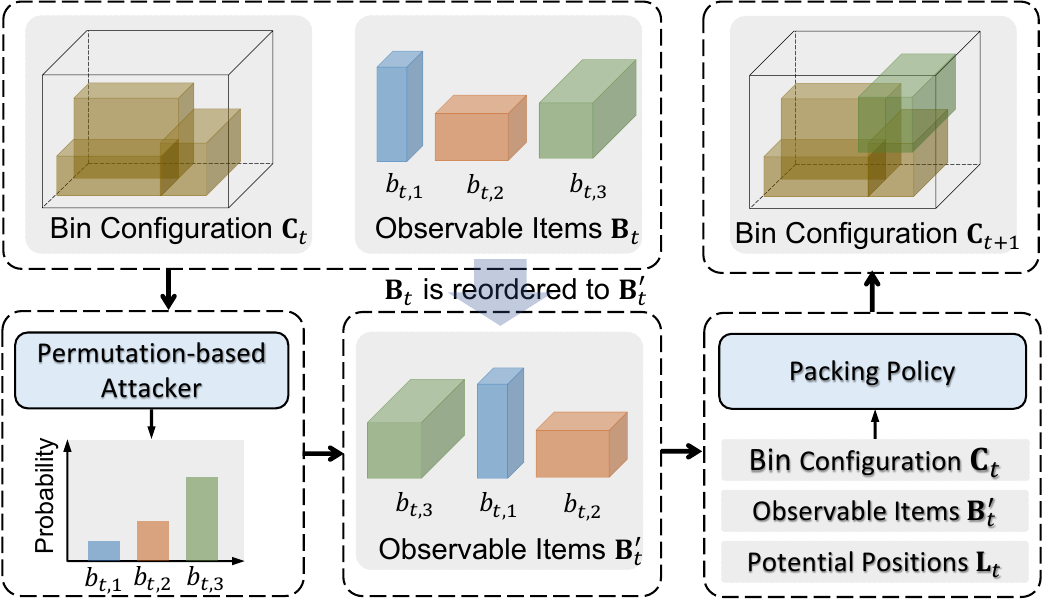}
\caption{Overview of our attack framework.}
\label{fig:attacker}
\end{wrapfigure}

In online 3D-BPP, a problem instance is comprised of a sequence of items, and these items are randomly permuted and placed on a conveyor. Such randomness can result in certain instances where trained policy may fail. Therefore, this observation motivates us to design a simple yet realistic adversary called permutation-based attacker. By reordering the item sequence for a given packing policy, our approach can explore worst-case instances without compromising realism (See Figure~\ref{fig:attacker} for attacker overview). In contrast to the approach of directly permuting the entire item sequence as described in~\citep{kong2019new}, our method involves permuting the observable item sequence, thereby progressively transforming the entire item sequence. Through the behavior analysis of our permutation-based attacker in the Appendix, we observe the attacker appears to prefer smaller items when constructing harder instances as the number of observable items increases. This aligns with the findings mentioned in~\citep{zhao2022learning_iclr}, where larger items simplify the scenario while smaller items introduce additional challenges. As a result, the act of permuting the entire sequence might enable the attacker to trickily select certain types of items, and thus carries potential risks associated with shifts in the underlying item distribution. However, we aim to ensure that the attacker genuinely learns how to permute the item sequence to identify and create challenging scenarios. To mitigate these risks, we limit the attacker's capacity by restricting the number $N_B$ of observable items to proactively address concerns related to changes in the item distribution.

To target any type of solver intended for solving the online 3D-BPP for robustness evaluation, we train a RL-based policy which acts as the permutation-based attacker. The permuted state $s_{t}^{\mathrm{perm}} \in \mathcal{S}^{\mathrm{perm}}$ of the attacker is comprised of the current bin configuration $\bm{\mathrm{C}}_{t}$ and the item sequence $\bm{\mathrm{B}}_{t}$. The action $a_{t}^{\mathrm{perm}} \in \mathcal{A}^{\mathrm{perm}}$ involves moving one item from $\bm{\mathrm{B}}_{t}$ to a position ahead of the other observed items, resulting in the reordered item sequence $\bm{\mathrm{B}}_{t}^{\prime}=\{b_{t,i}^{\prime}\}$. Then, the packing policy will pack the most preceding one into the bin based on the perturbed observation. The reward for the attacker is defined as $r_{t}^{\mathrm{perm}}=-r_{t}^{\mathrm{pack}}$, since the objective is to minimize the space utilization by training an optimal stochastic policy $\pi_{\mathrm{perm}}(b_{t,i}|\bm{\mathrm{C}}_{t}, \bm{\mathrm{B}}_{t})$. The online packing process allows only the first item in the observable item sequence to be packed at each step. To address the issue of exponential growth in the action space as the number of observable items increases, the permutation-based attacker strategically selects one item and positions it ahead of the other items. We can consider a simplified value function represented as $V^{\pi_{\mathrm{pack}}}(\bm{\mathrm{C}}_{t}, \bm{\mathrm{B}}_{t}^{\prime}) = r^{\mathrm{pack}}_{t} + V^{\pi_{\mathrm{pack}}}(\bm{\mathrm{C}}_{t+1}, \bm{\mathrm{B}}_{t+1}^{\prime})$. This function indicates that the value of $\pi_{\mathrm{pack}}$ depends on $r^{\mathrm{pack}}_{t}$, $\bm{\mathrm{C}}_{t+1}$, and $\bm{\mathrm{B}}_{t+1}^{\prime}$. Furthermore, $r^{\mathrm{pack}}_{t}$ and $\bm{\mathrm{C}}_{t+1}$ are influenced by the first item in $\bm{\mathrm{B}}_{t}^{\prime}$, while the remaining items in $\bm{\mathrm{B}}_{t}^{\prime}$ combine with a new item to form a new sequence, which undergoes further permutations as $\bm{\mathrm{B}}_{t+1}^{\prime}$. As a result, the permutation of the remaining items at time step $t$ is disregarded in the attack process. To model such an attacker, we use the Transformer~\citep{vaswani2017attention} that is capable of capturing long-range dependencies in spatial data and sequential data involved in the attacker's observation. Please refer to the Appendix for more details on the implementations.

\subsection{Adjustable Robust Reinforcement Learning}
For online 3D-BPP, item observed at each timestep is independently generated from a stationary distribution $p_{b}(b_{t,i})$ that does not change over time. Once an item is placed into the container, the bin configuration $\bm{\mathrm{C}}_{t}$ becomes deterministic, and a new item $b_{t+1, N_B}$ is appended to the observable item sequence to construct $\bm{\mathrm{B}}_{t+1}$. Thus, the nominal environment is defined as $P^{o}(\bm{\mathrm{C}}_{t+1},\bm{\mathrm{B}}_{t+1}|\bm{\mathrm{C}}_{t}, \bm{\mathrm{B}}_{t}, a_{t}^{\mathrm{pack}})=p_{b}(b_{t+1, N_B})$. We omit $\bm{\mathrm{L}}_{t}$ for brevity. Since we use the permutation-based attacker to reorder the observable item sequence, the worst-case environment transition is $P^{w}(\bm{\mathrm{C}}_{t+1}, \bm{\mathrm{B}}_{t+1}^{\prime}|\bm{\mathrm{C}}_{t}, \bm{\mathrm{B}}_{t}^{\prime}, a_{t}^{\mathrm{pack}}) = p_{b}(b_{t+1, N_B})\pi_{\mathrm{perm}}(b_{t+1,1}^{\prime}|\bm{\mathrm{C}}_{t+1}, \bm{\mathrm{B}}_{t+1})$, with $P^w$ as the worst-case dynamics.

Existing DRL-based methods are developed to learn a packing policy that maximizes space utilization (cumulative reward) under the nominal distribution $P^o$. However, in real-world scenarios where the order of items can be adversarially permuted, it is not sufficient to only consider the expected return under $P^o$. In contrast, the existing general robust methods that can be deployed to online 3D-BPP overly prioritize robustness at the cost of average performance by solely focusing on the return under the worst-case dynamics $P^w$. To address these issues, we should be aware of the returns under both the average and worst-case scenarios. Given that nominal cases are more common than worst-case scenarios in the online 3D-BPP setting, the objective function is defined as
\begin{equation}
\label{equ:new_obj}
    \pi^{\ast} = \arg\max_{\pi \in \Pi} {\eta(\pi, P^{o}) + \alpha \eta(\pi, P^{w})}
\end{equation}
where $\alpha \in (0, 1]$ is the weight of robustness, $\eta(\cdot)$ is the return, and $P^{w}$ is built on the optimal permutation-based attacker. Here symbols without superscripts represent those used by the packing agent (e.g., $\pi_{\mathrm{pack}}=\pi$). 

However, how can we learn such a policy with the presence of both $P^{o}$ and $P^{w}$?
To address this matter, we derive a lower bound for the objective function defined in Equation~\ref{equ:new_obj} by relating it to the return under an unknown mixture dynamics defined as $P^{m}$, as shown in the following theorem.
\begin{theorem}
\label{theorem1}
    The model discrepancy between two models can be described as $d(P^{1} || P^{2}) \triangleq \max_{s,a} D_{TV}(P^{1}(\cdot|s,a)||P^{2}(\cdot|s,a))$. The lower bound for objective~\ref{equ:new_obj} is derived as follows:
    \begin{equation}
    \label{equ:lower_bound}
         \eta(\pi, P^{o}) + \alpha \eta(\pi, P^{w}) \geq (1+\alpha)\eta(\pi, P^{m}) - \frac{2\gamma|r|_{\mathrm{max}}}{(1-\gamma)^{2}}(d(P^{m} || P^{o}) + \alpha d(P^{m} || P^{w}))
    \end{equation}
\end{theorem}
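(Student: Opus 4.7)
The plan is to reduce the inequality to the standard model-mismatch (simulation) lemma: for any policy $\pi$ and any two dynamics $P^{1}, P^{2}$ sharing the same reward,
\begin{equation*}
\bigl|\eta(\pi, P^{1}) - \eta(\pi, P^{2})\bigr| \;\le\; \frac{2\gamma\,|r|_{\max}}{(1-\gamma)^{2}}\, d(P^{1}\|P^{2}).
\end{equation*}
Once this is available, the theorem follows by applying it with $P^{1}=P^{m}$ and $P^{2}=P^{o}$, and separately with $P^{1}=P^{m}$ and $P^{2}=P^{w}$, keeping only the lower-bound direction of each absolute value, and combining with weights $1$ and $\alpha$. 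Concretely, $\eta(\pi,P^{o}) \ge \eta(\pi,P^{m}) - \tfrac{2\gamma|r|_{\max}}{(1-\gamma)^{2}}\,d(P^{m}\|P^{o})$ and $\alpha\,\eta(\pi,P^{w}) \ge \alpha\,\eta(\pi,P^{m}) - \alpha\,\tfrac{2\gamma|r|_{\max}}{(1-\gamma)^{2}}\,d(P^{m}\|P^{w})$, and summing these two inequalities produces exactly \eqref{equ:lower_bound}. So the real content is the model-mismatch lemma, while the rest is an additive combination.

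To prove the simulation bound, I would argue at the level of value functions. Let $V^{\pi}_{P^{i}}$ denote the value under dynamics $P^{i}$, and observe via the Bellman equation that
\begin{equation*}
V^{\pi}_{P^{1}}(s) - V^{\pi}_{P^{2}}(s) \;=\; \gamma\,\mathbb{E}_{a\sim\pi(\cdot|s)}\!\left[\bigl(P^{1}(\cdot|s,a) - P^{2}(\cdot|s,a)\bigr)^{\!\top}\!V^{\pi}_{P^{1}} + \mathbb{E}_{s'\sim P^{2}(\cdot|s,a)}\bigl[V^{\pi}_{P^{1}}(s')-V^{\pi}_{P^{2}}(s')\bigr]\right].
\end{equation*}
Bounding the first term by $2\,D_{TV}(P^{1}(\cdot|s,a)\|P^{2}(\cdot|s,a))\cdot\|V^{\pi}_{P^{1}}\|_{\infty}$ (Pinsker-style: a bounded function against a signed measure is at most its sup norm times the total variation), and using $\|V^{\pi}_{P^{1}}\|_{\infty}\le |r|_{\max}/(1-\gamma)$, the recursion on $\|V^{\pi}_{P^{1}}-V^{\pi}_{P^{2}}\|_{\infty}$ becomes a geometric series that sums to $\tfrac{2\gamma|r|_{\max}}{(1-\gamma)^{2}}\,d(P^{1}\|P^{2})$. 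Taking an expectation over the initial-state distribution converts the value-function bound into the return bound.

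The main obstacle is getting the correct constant $\tfrac{2\gamma|r|_{\max}}{(1-\gamma)^{2}}$ cleanly; a naive triangle-inequality pass gives the right order in $(1-\gamma)$ but can be off by a factor, so care is needed in the telescoping (peeling the contribution at each timestep contributes $\gamma^{t}$ and the sum telescopes correctly only after using $\|V\|_{\infty}\le |r|_{\max}/(1-\gamma)$ uniformly). Everything else—passing from value functions to returns, swapping to the lower-bound direction, and weighting by $1$ and $\alpha$—is routine. A minor subtlety is that $P^{w}$ here is induced by the permutation attacker and $P^{m}$ is an arbitrary (unknown) mixture dynamics, but the lemma is stated for \emph{any} two transition kernels sharing the same reward, so no additional structure on $P^{m}$ or $P^{w}$ is needed for the bound to apply.
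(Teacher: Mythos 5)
Your proposal is correct and follows essentially the same route as the paper: both reduce the theorem to the two-sided simulation bound $|\eta(\pi,P^{1})-\eta(\pi,P^{2})|\le \tfrac{2\gamma|r|_{\max}}{(1-\gamma)^{2}}\,d(P^{1}\|P^{2})$ via the same Bellman-equation decomposition (kernel-difference term bounded by $2D_{TV}\cdot\|V\|_{\infty}$ plus a propagated value-difference term), and then apply it once to $(P^{m},P^{o})$ and once to $(P^{m},P^{w})$, summing with weights $1$ and $\alpha$. The only cosmetic difference is how the recursion is closed — you use a sup-norm geometric-series argument while the paper invokes the discounted state-visitation identity (its Lemma~1) — but both yield the identical constant, so the proofs are interchangeable.
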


The RHS of Inequality~\ref{equ:lower_bound} provides the lower bound for the objective function defined in Equation~\ref{equ:new_obj}, where the first term represents the expected return of policy $\pi$ under the mixture dynamics $P^m$, while the second term denotes the weighted sum of deviations of $P^{m}$ from both $P^{o}$ and $P^{w}$. Motivated by Theorem~\ref{theorem1}, we turn to maximizing the RHS of Inequality~\ref{equ:lower_bound} to concurrently improve both the average and worst-case performance. Therefore, the primal problem in Equation~\ref{equ:new_obj} is equivalently reformulated to a surrogate problem where we expect to identify an optimal policy with the maximal lower bound, represented as
\begin{equation}
\label{equ:optim}
    \pi^{\ast} = \argmax_{\pi\in\Pi} \max_{P^{m} \in \mathcal{P}} \eta(\pi, P^{m}) - (d(P^{m}||P^{o}) + \alpha d(P^{m}||P^{w}));
\end{equation}
The objective of this optimization problem is to maximize the return by updating both the policy and the mixture dynamics controlled by robustness weight $\alpha$, given the nominal and worst-case dynamics. 
Furthermore, the second term in Equation~\ref{equ:optim} can be viewed as a constraint that penalizes the deviations between environments, resulting in a constrained optimization problem written as $\pi^{\ast} = \argmax_{\pi\in\Pi}\{ \max_{P^{m} \in \mathcal{P}} \eta(\pi, P^{m}): d(P^{m}||P^{o}) + \alpha d(P^{m}||P^{w}) \leq \rho^{\prime} \}$, where $\rho^{\prime}\in[0,1+\alpha]$. However, this constraint renders the constrained form of Equation~\ref{equ:optim} impractical to solve. Instead, a heuristic approximation is used for this constraint, which considers the TV distance at each state-action pair, leading to an uncertainty set for the mixture dynamics:
\begin{equation}\small 
    \label{equ:mix_set}
    \mathcal{P}^{m} = \otimes_{(s,a) \in \mathcal{S}\times\mathcal{A}}\mathcal{P}_{s,a}^{m}; \; \mathcal{P}_{s,a}^{m} = \{ P_{s,a} \in \Delta(\mathcal{S}): D_{TV}(P_{s,a}||P^{o}_{s,a}) + \alpha D_{TV}(P_{s,a}||P^{w}_{s,a}) \leq \rho^{\prime} \}.
\end{equation}
The uncertainty set $\mathcal{P}^{m}$ satisfies the rectangularity condition~\cite{iyengar2005robust}. To solve the constrained optimization problem, we carry out an
iterative procedure which searches for the associated mixture dynamics for the policy evaluation, and improves the policy under the resulting mixture dynamics. Therefore, we define the adjustable robust value function as $V_{a}^{\pi} = \sup_{P^{m}\in\mathcal{P}^{m}} V_{a}^{\pi, P^{m}}$ for the policy evaluation. And the adjustable robust Bellman operator $\mathcal{T}_{a}$ can be defined as 
\begin{equation}
\label{equ:adjust_value}
\mathcal{T}_{a} V_{a}^{\pi}(s) = \mathbb{E}_{a\sim\pi}[r(s,a) + \gamma \sup_{P_{s,a}^{m}\in\mathcal{P}_{s,a}^{m}}\mathbb{E}_{s^{\prime}\sim P_{s,a}^{m}}[V_{a}^{\pi}(s^{\prime})]].
\end{equation}
The adjustable robust Bellman operator assumes both the nominal and worst-case dynamics corresponding to $\pi$ are available for constructing the uncertainty set $\mathcal{P}^{m}$. Unlike the traditional robust Bellman operator~\citep{iyengar2005robust} designed for the minimal value, our proposed operator focus on maximizing the value of a given policy by updating the mixture dynamics. Meanwhile, to avoid the optimistic evaluation, the uncertainty set for the mixture dynamics used in Equation~\ref{equ:adjust_value} is constrained with regard to both nominal and worst-case environment. Following policy evaluation, the policy is improved using samples drawn from the mixture dynamics. In addition, the following theorem guarantees that $\mathcal{T}_{a}$ can facilitate the convergence of the value function to a fixed point.
\begin{theorem}
\label{theorem2}
    For any given policy $\pi$ and its corresponding worst-case dynamics $P^{w}$, the adjustable robust Bellman operator $\mathcal{T}_{a}$ is a contraction whose fixed point is $V_{a}^{\pi}$. The operator equation $\mathcal{T}_{a}V_{a}^{\pi} = V_{a}^{\pi}$ has a unique solution.
\end{theorem}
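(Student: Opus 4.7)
The plan is to establish the contraction property of $\mathcal{T}_a$ in the sup-norm on the space of bounded value functions, then invoke the Banach fixed-point theorem to conclude existence and uniqueness of a fixed point, and finally identify that fixed point with $V_a^\pi$ using the rectangularity of the uncertainty set $\mathcal{P}^m$.

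First I would prove the $\gamma$-contraction property. For any two bounded value functions $V_1, V_2$ and any state $s$, I would bound $|\mathcal{T}_a V_1(s) - \mathcal{T}_a V_2(s)|$ by pulling the absolute value inside the expectation over $a \sim \pi$, then applying the elementary inequality $|\sup_x f(x) - \sup_x g(x)| \leq \sup_x |f(x) - g(x)|$ to the inner supremum over $P^m_{s,a} \in \mathcal{P}^m_{s,a}$. This yields
\begin{equation*}
|\mathcal{T}_a V_1(s) - \mathcal{T}_a V_2(s)| \leq \gamma \, \mathbb{E}_{a\sim\pi}\Bigl[\sup_{P^m_{s,a} \in \mathcal{P}^m_{s,a}} \bigl| \mathbb{E}_{s'\sim P^m_{s,a}}[V_1(s') - V_2(s')] \bigr|\Bigr] \leq \gamma \, \|V_1 - V_2\|_\infty,
\end{equation*}
where the last step uses that any expectation is bounded by the sup-norm of its integrand. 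Taking the supremum over $s$ gives $\|\mathcal{T}_a V_1 - \mathcal{T}_a V_2\|_\infty \leq \gamma \|V_1 - V_2\|_\infty$.

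Next, since the space of bounded real-valued functions on $\mathcal{S}$ equipped with the sup-norm is a complete metric space and $\gamma \in [0,1)$ (or, for the packing MDP setting, one works on the equivalent finite-horizon analogue), the Banach fixed-point theorem immediately gives existence and uniqueness of a fixed point $V^\star$ of $\mathcal{T}_a$. This disposes of the uniqueness claim.

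Finally, I would verify $V^\star = V_a^\pi$. Here the key tool is the rectangularity of $\mathcal{P}^m$ assumed in Equation~\ref{equ:mix_set}: because the set factorizes across $(s,a)$ pairs, the global supremum $\sup_{P^m \in \mathcal{P}^m} V_a^{\pi, P^m}(s)$ can be decomposed into independent per-state-action suprema, mirroring the classical argument of Iyengar for robust MDPs. Concretely, I would show $V_a^\pi$ satisfies the Bellman equation~\ref{equ:adjust_value} by exchanging the global $\sup_{P^m}$ with a state-action-wise $\sup_{P^m_{s,a}}$ under the rectangular structure, and then appeal to the uniqueness established above to conclude $V_a^\pi = V^\star$. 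I expect this last step to be the main obstacle, since unlike the classical robust case with an infimum (where conservatism makes the exchange straightforward), here we take a supremum; nevertheless, the argument goes through identically because rectangularity allows one to construct, for any $\varepsilon > 0$, a product measure over $\mathcal{S}\times\mathcal{A}$ that is $\varepsilon$-close to the supremum at every coordinate, and then a standard greedy-stitching argument yields equality of the two value functions.
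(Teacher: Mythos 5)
Your proposal is correct and follows essentially the same route as the paper: the paper also establishes the $\gamma$-contraction in the sup-norm (using the argmax transition kernels $P^{m,1}_{s,a}, P^{m,2}_{s,a}$ of the two suprema, which is just a less tidy form of your inequality $|\sup_x f(x) - \sup_x g(x)| \leq \sup_x|f(x)-g(x)|$) and then derives uniqueness from the contraction inequality applied to two hypothetical fixed points. Your third step --- verifying via rectangularity that the fixed point actually coincides with $V_a^\pi = \sup_{P^m\in\mathcal{P}^m}V_a^{\pi,P^m}$ --- is a genuine addition: the paper asserts this identification without argument, so your greedy-stitching sketch makes the proof more complete than the one in the appendix. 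One caveat worth flagging in both versions: the contraction constant is $\gamma$, and the paper's packing MDP sets $\gamma=1$, so strictly speaking the result applies to the discounted (or finite-horizon) formulation, as you correctly note parenthetically.
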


\begin{figure*}[t]
\centering
\subfigure[Exact AR2L Algorithm]{\label{fig:exact_ar2l}\includegraphics[width=0.46\textwidth]{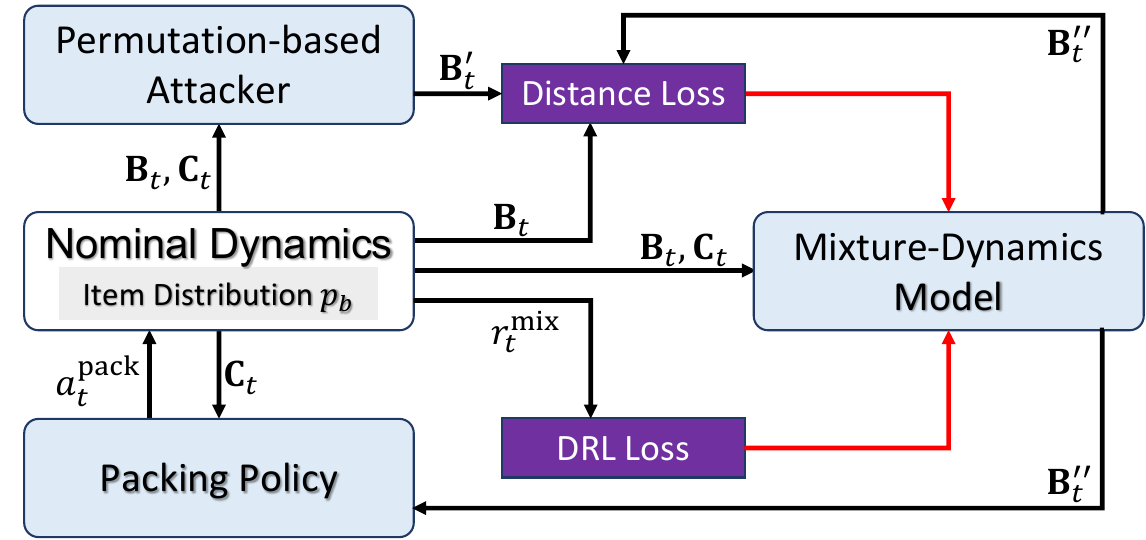}}
\subfigure[Approximate AR2L Algorithm]{\label{fig:approx_ar2l}\includegraphics[width=0.46\textwidth]{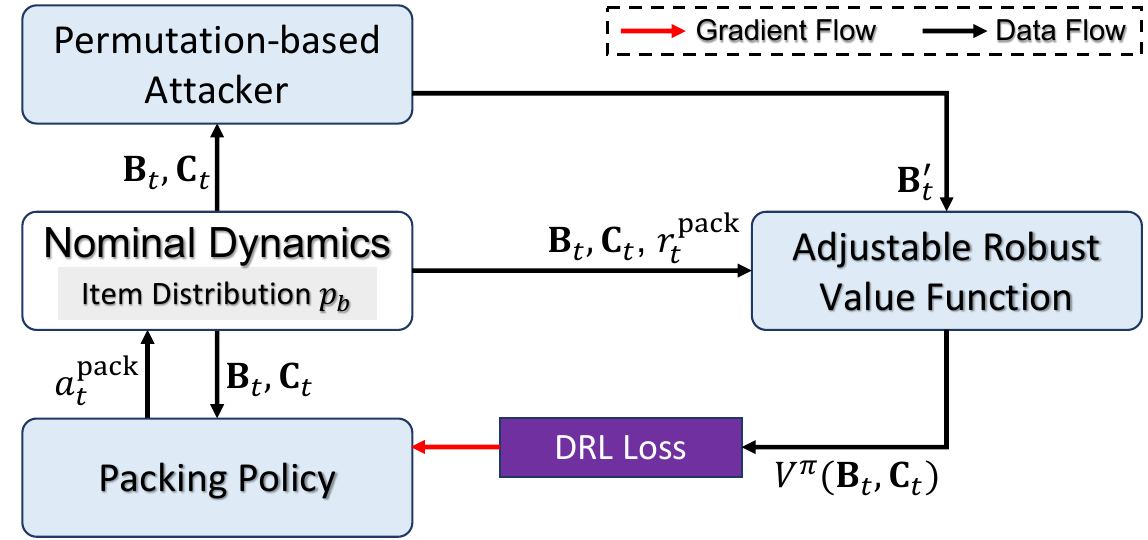}}
\caption{Implementations of the AR2L algorithm. \textbf{Left:} The exact AR2L algorithm requires to learn a mixture-dynamics model to generate problem instances for the training of the packing policy. \textbf{Right:} The approximate AR2L algorithm relies on the samples from both the nominal dynamics and the permutation-based attacker to estimate adjustable robust values of the packing policy.\vspace{-10pt}}
\label{fig:defence}
\end{figure*}

\textbf{Exact AR2L Algorithm.}
The AR2L algorithm evaluates and improves policies based on the mixture dynamics that exists in the neighborhoods of both the nominal and worst-case scenarios. However, this algorithm remains impractical since the adjustable robust Bellman operator defined in Equation~\ref{equ:adjust_value} involves computing the expectations w.r.t. all models in the uncertainty set $\mathcal{P}^{m}$. This potentially results in high computational complexity. Inspired by the classic RARL algorithm~\citep{pinto2017robust} which trains a robust policy utilizing perturbations injected by a learned optimal adversary, we propose an iterative training approach that involves training a mixture-dynamics model $\pi_{\mathrm{mix}}$ alongside the packing policy $\pi_{\mathrm{pack}}$ and the permutation-based attacker $\pi_{\mathrm{perm}}$ to obtain the exact AR2L algorithm. The mixture-dynamics model is corresponding to the mixture dynamics, and  it permutes the observable item sequences $\bm{\mathrm{B}}_{t}$ to $\bm{\mathrm{B}}_{t}^{\prime\prime}$ based on observed packed items $\bm{\mathrm{C}}_{t}$. As illustrated in Figure~\ref{fig:exact_ar2l}, such a model receives the same reward as the packing policy ($r^{\mathrm{mix}}_{t}=r^{\mathrm{pack}}_{t}$). Based on Equation~\ref{equ:optim}, the mixture-dynamics model strives to maximize the return under a given policy, while also penalizing deviations between dynamics. As $P^{m}$ and $P^{w}$ have the same form with the same underlying item distribution $p_{b}$, we can only measure the discrepancy between $P^{m}$ and $P^{w}$ by evaluating the deviation between $\pi_{\mathrm{mix}}$ and $\pi_{\mathrm{perm}}$. In practice, the Kullback-Leibler (KL) divergence is adopted  to constrain the deviation between distributions. Thus, the training loss is 
\begin{equation}
\label{equ:loss_mix}
    \mathcal{L}_{\mathrm{mix}} = -\eta(\pi_{\mathrm{pack}}, \pi_{\mathrm{mix}}) + (D_{KL}(\pi_{\mathrm{mix}}||\mathbbm{1}_{\{x=b_{t+1,1}\}}) + \alpha D_{KL}(\pi_{\mathrm{mix}}||\pi_{\mathrm{perm}})),
\end{equation}
where $D_{KL}$ is the KL divergence, and $\mathbbm{1}_{\{x=b_{t+1,1}\}}$ denotes the indicator function. Thus, the mixture-dynamics model is optimized by minimizing both the DRL loss (the first term) and the distance loss (the second term). Based on this, the exact AR2L algorithm iterates through three stages to progressively improve the packing policy. First, the permutation-based attacker is optimized for a given packing policy, as is done in RARL. Next, the mixture-dynamics model $\pi_{\mathrm{mix}}$ is learned using the loss function defined in Equation~\ref{equ:loss_mix}. Finally, the packing policy is evaluated and improved using problem instances that are permuted by $\pi_{\mathrm{mix}}$.

\textbf{Approximate AR2L Algorithm.}
Training the mixture-dynamics model in exact AR2L introduces additional computation to the entire framework. We thus propose the approximate AR2L algorithm, which uses samples from both the nominal and worst-case dynamics to estimate adjustable robust values for the policy iteration as shown in Figure~\ref{fig:approx_ar2l}. Inspired by RfMDP~
\citep{ho2022robust, panaganti2022robust}, we use the dual reformulation of the adjustable robust Bellman operator, as given below.
\begin{proposition}
    The policy evaluation of AR2L can be formulated as a constrained optimization problem. The objective is to maximize the value function over the uncertainty set $\mathcal{P}^{m}$, subject to the constraint defined in Equation~\ref{equ:mix_set}. By representing the TV distance using the $f$-divergence~\citep{shapiro2017distributionally}, the adjustable robust Bellman operator $\mathcal{T}_{a}$ given in Equation~\ref{equ:adjust_value} can be equivalently written as
    \begin{equation}
    \label{equ:approx_value}
    \begin{split}
        \mathcal{T}_{a}V_{a}^{\pi}(s) = \mathbb{E}_{a\sim\pi}[r(s,a) + \frac{\gamma}{1+\alpha}
        &\inf_{\lambda, \mu_{1}, \mu_{2}, \mu}(\mathbb{E}_{s^{\prime}\sim P_{s,a}^{o}}[V_{a}^{\pi}(s^{\prime})-\mu_{1}(s^{\prime})]_{+} + \\ &\alpha\mathbb{E}_{s^{\prime}\sim P_{s,a}^{w}}[V_{a}^{\pi}(s^{\prime})-\mu_{2}(s^{\prime})]_{+}
        + \mu + \lambda\rho(1+\alpha))],
    \end{split}
    \end{equation}
    where $[x]_{+}$ denotes $\max\{x, 0\}$, $\mu_{1}(s)$ and $\mu_{2}(s)$ are Lagrangian multipliers for each $s\in\mathcal{S}$; $\mu = \mu_{1}(s) + \alpha\mu_{2}(s)$ holds for each $s\in\mathcal{S}$, and $\lambda=\max_{s}\{ V_{a}^{\pi}(s)-\mu_{1}(s), V_{a}^{\pi}(s)-\mu_{2}(s), 0 \}$.
\end{proposition}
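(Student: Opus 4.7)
The plan is to prove the proposition by deriving the Lagrangian dual of the constrained supremum embedded in the adjustable robust Bellman operator, and then invoking strong duality. Fix $(s,a)$ and consider the inner problem
\[
J(s,a) \;\triangleq\; \sup_{P\in\mathcal{P}_{s,a}^{m}} \mathbb{E}_{s'\sim P}[V_{a}^{\pi}(s')],
\]
whose feasible set is a convex subset of the simplex cut out by a single inequality on a weighted sum of two TV divergences. Under the standard assumption that $\rho$ is large enough for the interior of $\mathcal{P}_{s,a}^{m}$ to be nonempty (Slater's condition is met, e.g., by a suitable mixture of $P_{s,a}^o$ and $P_{s,a}^w$), strong duality applies to the primal-dual pair. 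The bulk of the work lies in evaluating the Lagrangian in closed form and matching it to the stated formula.

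The key algebraic trick is the pointwise inequality $V_a^\pi(s')\leq \mu_i(s')+(V_a^\pi(s')-\mu_i(s'))_+$ for any function $\mu_i$, applied once with $\mu_1$ and once with $\mu_2$ and weighted by $1$ and $\alpha$:
\[
(1+\alpha)V_a^\pi(s') \;\leq\; [\mu_1(s')+\alpha\mu_2(s')] + (V_a^\pi-\mu_1)_+(s') + \alpha(V_a^\pi-\mu_2)_+(s').
\]
Restricting attention to pairs $(\mu_1,\mu_2)$ with $\mu_1(s')+\alpha\mu_2(s')\equiv\mu$ constant in $s'$ collapses the bracket to the scalar $\mu$, which is precisely the per-state coupling stated in the proposition. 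Taking $\mathbb{E}_P$ and using the nonnegative-function TV bound $\mathbb{E}_P[g]-\mathbb{E}_Q[g]\leq\|g\|_\infty D_{TV}(P\|Q)$ with $g=(V_a^\pi-\mu_i)_+$ and $\|g\|_\infty\leq\lambda$ whenever $\lambda\geq\max_{s'}\{V_a^\pi(s')-\mu_i(s'),0\}$, shifts both expectations onto $P^o_{s,a}$ and $P^w_{s,a}$ at the cost of $\lambda[D_{TV}(P\|P^o_{s,a})+\alpha D_{TV}(P\|P^w_{s,a})]$. Invoking the feasibility constraint in Equation~\ref{equ:mix_set} replaces this bracket by $\rho(1+\alpha)$, so dividing by $1+\alpha$ and taking infimum over $(\lambda,\mu_1,\mu_2,\mu)$ exhibits the RHS of Equation~\ref{equ:approx_value} as an upper bound on $J(s,a)$.

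For tightness, I would invoke the Fenchel/Shapiro duality theorem for $f$-divergence-constrained programs cited in the statement: since TV is an $f$-divergence and the feasible set is convex with nonempty interior, strong duality delivers equality between $J(s,a)$ and the infimum. A direct KKT check provides the optimal multipliers---$\lambda^\star=\max_{s'}\{V_a^\pi(s')-\mu_1^\star(s'),V_a^\pi(s')-\mu_2^\star(s'),0\}$ from stationarity in $P$, the coupling $\mu_1^\star(s')+\alpha\mu_2^\star(s')=\mu^\star$ from stationarity in the per-state splitting, and complementary slackness pinning $D_{TV}(P^\star\|P^o_{s,a})+\alpha D_{TV}(P^\star\|P^w_{s,a})=\rho(1+\alpha)$ at the optimum. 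Wrapping the outer $\mathbb{E}_{a\sim\pi}[r(s,a)+\gamma(\cdot)]$ around the resulting identity (standard, since the inner infimum is attained) produces the claimed expression for $\mathcal{T}_a V_a^\pi(s)$.

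The main obstacle is the nonstandard form of the constraint: a sum of two TV divergences against \emph{different} reference measures, whereas Shapiro-style $f$-divergence duals are usually stated for a single reference. The remedy is the double splitting with coupled multipliers $(\mu_1(\cdot),\mu_2(\cdot))$ constrained by $\mu_1+\alpha\mu_2\equiv\mu$, so that one scalar $\lambda$ can simultaneously dominate both TV penalties while each $\mu_i$ independently clips $V_a^\pi$ relative to its own reference. Verifying that this splitting is lossless---that the inf over coupled pairs recovers the true primal value rather than a looser relaxation---is the delicate step; it ultimately hinges on the Lipschitz structure of TV transport, which guarantees that the upper bound on each $(V_a^\pi-\mu_i)_+$ translates one-for-one into the penalty coefficient $\lambda$ without slack.
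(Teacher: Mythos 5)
Your proposal takes a genuinely different route from the paper, and the half you actually prove is correct. The paper parametrizes the candidate mixture dynamics by \emph{two} likelihood ratios $\zeta^{o}=\mathrm{d}P/\mathrm{d}P^{o}$ and $\zeta^{w}=\mathrm{d}P/\mathrm{d}P^{w}$ tied together by a pointwise consistency constraint $\zeta^{o}\mathrm{d}P^{o}=\zeta^{w}\mathrm{d}P^{w}$, attaches a function-valued multiplier $\mu_{3}(\cdot)$ to that constraint, pushes the supremum inside the integral via the decomposability/interchange theorem of Shapiro, and evaluates the conjugate $\phi^{\ast}(y)=\max\{y,-\tfrac{1}{2}\}$ of $\phi(x)=\tfrac{1}{2}|x-1|$; the per-state coupling $\mu_{1}(s)+\alpha\mu_{2}(s)\equiv\mu$ and the shared $\lambda$ then \emph{fall out} of the change of variables $\mu_{1}^{\prime}(s)=\mu_{1}+\mu_{3}(s)-\tfrac{\lambda}{2}$, $\mu_{2}^{\prime}(s)=\mu_{2}-\tfrac{1}{\alpha}\mu_{3}(s)-\tfrac{\lambda}{2}$. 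You instead work directly with the measure $P$, use the clipping inequality $V\leq\mu_{i}+(V-\mu_{i})_{+}$ and the bound $\mathbb{E}_{P}[g]-\mathbb{E}_{Q}[g]\leq\|g\|_{\infty}D_{TV}(P\|Q)$ for $g\geq 0$. This gives a short, transparent, and fully rigorous proof that the right-hand side of Equation~\ref{equ:approx_value} upper-bounds $\mathcal{T}_{a}V_{a}^{\pi}(s)$ --- arguably cleaner than the paper's computation for that direction.

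The genuine gap is the reverse inequality. The proposition asserts an \emph{equivalence}, and your argument for tightness is only a pointer to ``Fenchel/Shapiro duality plus KKT.'' As you yourself note, the cited duality results are stated for a single $f$-divergence ball around a single reference measure, whereas here the constraint is $D_{TV}(P\|P^{o})+\alpha D_{TV}(P\|P^{w})\leq\rho^{\prime}$ with two distinct references; moreover your dual family is \emph{a priori} restricted (one scalar $\lambda$ dominating both clipped functions, and $\mu_{1}+\alpha\mu_{2}$ forced constant in $s$), and nothing in your argument shows this restricted family attains the primal value rather than a strictly larger infimum. The appeal to ``the Lipschitz structure of TV transport'' does not substitute for a proof: you would need either to reproduce something like the paper's explicit Lagrangian with the pointwise multiplier $\mu_{3}(s)$ --- which is precisely the mechanism that certifies the coupled family is rich enough --- or to exhibit, for every $\varepsilon>0$, a feasible $P$ and multipliers $(\lambda,\mu_{1},\mu_{2},\mu)$ whose primal and dual values differ by at most $\varepsilon$. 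Also note that Slater's condition is not automatic here: for small $\rho$ the set $\mathcal{P}^{m}_{s,a}$ can be empty or a singleton (it requires $D_{TV}(P^{o}_{s,a}\|P^{w}_{s,a})$ to be small relative to $\rho^{\prime}$), so the nonempty-interior hypothesis you invoke needs to be stated and justified rather than assumed.
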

The approximate AR2L algorithm eliminates the requirement of using samples from the mixture dynamics for policy evaluation, yet it introduces  estimation errors into the process. In practice, this results in mild performance degradation and sometimes unstable learning. Nevertheless, it remains valuable as the approximate AR2L still demonstrates superior performance compared to RfMDP.

In order to implement the AR2L algorithm, we opt to use the PPO algorithm~\citep{schulman2017proximal} to train packing policy $\pi_{\mathrm{pack}}$, permutation-based attacker $\pi_{\mathrm{perm}}$ and mixture-dynamics model $\pi_{\mathrm{mix}}$. The packing policy is implemented using the PCT method~\citep{zhao2022learning_iclr} to accommodate the continuous solution space. All of the relevant implementation details, pseudocode for the algorithms, and both derivations and proofs are included in the Appendix. 

\section{Experiments}

\textbf{Training and Evaluation Settings} In the online 3D-BPP setting, the container sizes $S^d, d\in\{x,y,z\}$ are equal for each dimension ($S^{x}=S^{y}=S^{z}$), and the item sizes $s^{d}, d\in\{x,y,z\}$ are limited to no greater than $S^{d}/2$ to create more complex scenarios. The stability of each item is checked based on constraints used in~\citep{zhao2022learning_iclr,zhao2022learning}. Moreover, we adopt two different popular settings used in~\citep{zhao2022learning_iclr,zhao2022learning} for training DRL-based policies. In the discrete setting, where item sizes are uniformly generated from a discrete set, i.e., $s^{d} \in \{1,2,3,4,5\}$, resulting in a total of 125 types of items, and the container size $S^{d}$ is fixed at 10. In the continuous setting, the container size $S^{d}$ is set to 1, and item sizes $s^{d}, d\in\{x,y\}$ follow a continuous uniform distribution $U(0.1, 0.5)$; and $s^{z}$ is uniformly sampled from a finite set $\{0.1, 0.2, 0.3, 0.4, 0.5\}$. Then, discrete and continuous datasets are created following instructions from the two aforementioned training settings to evaluate both heuristic and DRL-based methods. Each dataset consists of 3,000 problem instances, where each problem instance contains 150 items. We use three metrics to evaluate the performance of various packing strategies: $Uti.$ represents the average space utilization rate of the bin's volume; $Std.$ is the standard deviation of space utilization which evaluates the algorithm's reliability across all instances; $Num.$ evaluates the average number of packed items.

\subsection{Algorithm Robustness under Permutation-based Attacks} 
\begin{table*}[t]
    \caption{The performance of existing methods under the perturbation in the discrete setting.}
    \setlength\tabcolsep{0.9pt}
    \renewcommand{\arraystretch}{0.9}
    \footnotesize
    \centering
    \begin{tabular}{cccccccccccccccc}
    \toprule[1.2pt]
       \multirow{2}{*}{Methods}  & \multicolumn{3}{c}{$N_B=5$} & \multicolumn{3}{c}{$N_B=10$} & \multicolumn{3}{c}{$N_B=15$} & \multicolumn{3}{c}{$N_B=20$} & \multicolumn{3}{c}{w/o attack} \\
    \cmidrule(r){2-4} \cmidrule(r){5-7} \cmidrule(r){8-10} \cmidrule(r){11-13} \cmidrule(r){14-16}
        & \scriptsize{$Uti.(\%)$} & \scriptsize{$Std.$} & \scriptsize{$Num.$} & \scriptsize{$Uti.(\%)$} & \scriptsize{$Std.$} & \scriptsize{$Num.$} & \scriptsize{$Uti.(\%)$} & \scriptsize{$Std.$} & \scriptsize{$Num.$} & \scriptsize{$Uti.(\%)$} & \scriptsize{$Std.$} & \scriptsize{$Num.$} & \scriptsize{$Uti.(\%)$} & \scriptsize{$Std.$} & \scriptsize{$Num.$} \\
    \midrule[0.7pt]
    DBL & 40.4 & 14.3 &	18.4 & 28.5 &16.1 & 13.9 & 26.0 & 14.9 & 14.9 & 21.3 & 12.6 & 8.5 & 63.6 & 11.9 &	25.8 \\
    BMF & 46.9 & 11.5 & 21.0 & 40.9 & 12.2 & 22.2 &	38.6 &	12.9 & 21.1 & 33.7 & 11.9 &	22.1 & 62.0 & 9.2 & 24.8 \\
    LSAH & 46.0 & 10.1 & 20.3 & 42.2 & 11.3 & 20.9 & 38.6 & 11.0 & 19.4 & 35.6 & 11.8 & 21.3 & 60.9 & 10.9 & 24.6 \\
    OnlineBPH & 47.1 & 21.0 & 19.8 & 44.0 & 18.9 & 22.8 & 29.8 & 18.5 & 14.6 & 22.4 & 12.7 & 14.1 & 64.1 & 8.9 & 25.8 \\
    HMM & 49.1 & 11.1 & 22.5 & 46.5 & 13.8 & 22.5 & 43.4 & 13.0 & 21.4 & 40.4 & 10.0 & 24.1 & 56.1 & 10.4 & 22.6 \\
    MACS & 43.0 & 9.7 & 21.9 & 40.8 & 9.0 & 24.8 & 39.0 & 9.8 & 23.7 & 38.3 & 9.0 & 27.0 & 53.0 & 10.8 & 21.5 \\
    CDRL & 61.5 & 7.8 & 27.9 & 56.1 & 6.9 & 28.7 & \textbf{54.6} & 7.6 & 31.3 & \textbf{51.0} & 7.1 & 29.9 & 74.1 & 7.3 & 29.1 \\
    PCT & \textbf{63.6} & 9.9 & 27.3 & \textbf{58.7} & 11.3 & 25.8 & 50.9 & 13.1 & 25.6 & 40.5 & 15.3 & 21.8 & \textbf{76.6} &6.0 & 30.0 \\
    \bottomrule[1.2pt]
    \end{tabular}
    \label{tab:attacker}
    \vspace{-8pt}
\end{table*}

In our experiment, we evaluate the robustness of six heuristic methods, including deep-bottom-left (DBL) method~\citep{karabulut2005hybrid}, best-match-first (BMF) method~\citep{li2015hybrid}, least-surface-area heuristics (LSAH)~\citep{hu2017solving}, online bin packing heuristics (OnlineBPH)~\citep{ha2017online}, heightmap-minimization (HMM) method~\citep{wang2019stable} and maximize-accessible-convex-space (MACS) method~\citep{hu2020tap}. For the DRL-based methods, we evaluate the constrained deep reinforcement learning (CDRL)~\citep{zhao2021online} algorithm, and the packing configuration tree (PCT) method~\citep{zhao2022learning_iclr}. We train specific permutation-based attackers for each of them. Since the capacity of the permutation-based attacker is highly related to the number of observable items of the attacker, we choose $N_{B}=5, 10, 15, 20$ to obtain different attackers. Given that the number of observable items for the packing policy is typically limited to the first item in most methods, except for PCT, we can establish a consistent approach by setting the number of observable items for the packing policy to 1 in this case.

Table~\ref{tab:attacker} presents the performance of various methods under perturbations from corresponding permutation-based attackers in the discrete setting. We can observe that though the HMM algorithm does not perform the best in the nominal scenario compared to other heuristic methods, it exhibits superior performance under attackers with different attack capacities. By comparison, the DBL method is the most vulnerable heuristic method. As attack capabilities increase, the CDRL algorithm demonstrates greater robustness compared to the PCT method, consistently achieving a higher average number of packed items with smaller variance of volume ratio. It is worth noting that the larger the value of $N_B$ is, the more the performance can be degraded for all methods, indicating that harder problem instances can be explored by increasing the value of $N_B$. In addition, the number of packed items does not necessarily decrease as attack capacity increases. This is because permutation-based attackers aim to minimize space utilization, which means they may select smaller items to make the packing problem more challenging for the packing policy.

\subsection{Performance of AR2L Algorithm}
We then benchmark four other RL algorithms compatible with online 3D-BPP tasks,
which include packing configuration tree (PCT)~\citep{zhao2022learning_iclr} method, the CVaR-Proximal-Policy-Optimization (CPPO)~\citep{ying2022towards} algorithm, the robust adversarial reinforcement learning (RARL) algorithm~\citep{pinto2017robust}, and the robust $f$-divergence MDP (RfMDP)~\citep{ho2022robust, panaganti2022robust}. The PCT algorithm serves as the baseline method and is used as the packing policy for all other methods in our implementations. The CPPO algorithm trains the packing policy with worst-case returns as a constraint, which can potentially improve both average and worst-case performance. The RARL algorithm is the baseline model for the exact AR2L algorithm due to its reliance on worst-case samples from the attacker. Likewise, the RfMDP framework serves as the baseline model for the approximate AR2L algorithm because it involves approximating values of the unknown dynamics. As both the exact AR2L (ExactAR2L) algorithm and the approximate AR2L algorithm (ApproxAR2L) can develop policies with different levels of robustness by adjusting the value of $\alpha$, we choose $\alpha = 0.3, 0.5, 0.7, 1.0$ to explore the relationship between $\alpha$ and policy robustness. Furthermore, we use different attackers with varying attack capabilities ($N_{B}=5, 10, 15, 20$) to investigate the robustness of packing policies. It is important to note that in this scenario, the packing policy is permitted to observe the same number of items as its corresponding attacker. During testing, we construct mixture datasets by randomly selecting $\beta \%$ nominal box sequences and reordering them using the learned permutation-based attacker for each  packing policy. The empirical results in the continuous setting are included in the Appendix. Since we implement PCT with different training configurations, we conducted a comparison between our implementation and the official implementation in the Appendix.

\begin{table*}[t]
    \caption{The performance of robust methods under the perturbation in the discrete setting.}
    \setlength\tabcolsep{1.2pt}
    \renewcommand{\arraystretch}{0.9}
    \scriptsize
    \centering
    \begin{tabular}{cccccccccccccccccc}
    \toprule[1.2pt]
    &\multirow{2}{*}{Methods} & \multicolumn{3}{c}{$\beta=0$}
    & \multicolumn{3}{c}{$\beta=25$} & \multicolumn{3}{c}{$\beta=50$} & \multicolumn{3}{c}{$\beta=75$} & \multicolumn{3}{c}{$\beta=100$} \\
    \cmidrule(r){3-5} \cmidrule(r){6-8} \cmidrule(r){9-11} \cmidrule(r){12-14} \cmidrule(r){15-17}
    && \tiny{$Uti.(\%)$} & \tiny{$Std.$} & \tiny{$Num.$} & \tiny{$Uti.(\%)$} & \tiny{$Std.$} & \tiny{$Num.$} & \tiny{$Uti.(\%)$} & \tiny{$Std.$} & \tiny{$Num.$} & \tiny{$Uti.(\%)$} & \tiny{$Std.$} & \tiny{$Num.$} & \tiny{$Uti.(\%)$} & \tiny{$Std.$} & \tiny{$Num.$} \\
    \midrule[0.7pt]
    \multirow{12}{*}{\rotatebox{90}{$N_{B}=5$}}
    &PCT & 76.2 & 6.8 & 29.9 & 73.6 & 7.9 & 29.9 & 70.6 &9.2 & 28.5 & 67.5 & 9.1 & 27.8 & 64.9 & 8.2 & 27.1 \\
    & CPPO & 75.5 & 6.9 & 29.6 & 73.1 & 8.0 & 29.0 & 70.7 & 8.4 & 28.5 & 67.9 & 8.2 & 27.9 & 65.2 & 7.3 & 27.4\\
    & RARL & 74.6 & 6.2 & 29.2 & 73.1 & 7.3 & 29.2 & 70.8 & 8.4 & 29.0 & 67.8 & 8.4 & 28.4 & \textbf{66.7} & 8.1 & 28.5 \\
    & ExactAR2L(0.3) & 76.3 & 6.6 & 29.8 & 73.8 & 8.5 & 29.6 & 71.0 & 9.1 & 29.2 & 67.5 & 9.6 & 28.4 & 64.5 & 9.4 & 27.7 \\
    & ExactAR2L(0.5) & 76.5 & 7.1 & 30.1 & 73.5 & 8.8 & 29.4 & 70.5 & 10.4 & 28.8 & 68.2 & 10.3 & 28.1 & 65.3 & 10.0 & 27.6 \\
    & ExactAR2L(0.7) & 76.8 & 6.1 & 30.1 & \textbf{74.4} & 7.7 & 29.7 & 71.1 & 8.9 & 29.1 & 68.8 & 9.7 & 28.7 & 66.3 & 8.6 & 28.5 \\
    & ExactAR2L(1.0) & \textbf{77.4} & 6.5 & 30.2 & 74.3 & 9.3 & 29.5 & \textbf{72.0} & 9.5 & 29.0 & \textbf{69.5} & 10.0 & 28.4 & 66.5 & 8.7 & 27.9 \\
    & RfMDP & 75.5 & 6.7 & 29.6 & 71.8 & 9.8 & 28.8 & 69.6 & 9.4 & 28.5 & 67.1 & 9.0 & 28.0 & 63.9 & 8.1 & 27.3 \\
    & ApproxAR2L(0.3) & 75.3 & 6.3 & 29.6 & 72.6 & 7.7 & 28.9 & 70.0 & 8.6 & 28.4 & 66.4 & 9.7 & 27.6 & 63.2 & 9.5 & 26.9 \\
    & ApproxAR2L(0.5) & 76.1 & 5.8 & 29.8 & 72.6 & 8.4 & 29.0 & 69.8 & 9.9 & 28.4 & 67.3 & 9.6 & 27.7 & 64.7 & 8.8 & 27.4 \\
    & ApproxAR2L(0.7) & 76.9 & 5.7 & 30.0 & 73.7 & 8.0 & 29.1 & 71.6 & 8.5 & 28.7 & 68.2 & 8.4 & 27.8 & 65.2 & 7.9 & 27.0 \\
    & ApproxAR2L(1.0) & 76.5 & 6.5 & 30.1 & 73.4 & 8.9 & 29.2 & 70.8 & 9.3 & 28.6 & 68.4 & 9.3 & 28.0 & 65.7 & 8.5 & 27.5 \\
    \midrule[0.7pt]
    \multirow{12}{*}{\rotatebox{90}{$N_{B}=10$}}
    &PCT & 76.4 & 6.6 & 29.9 & 70.6 & 12.4 & 28.5 & 65.1 & 14.5 & 27.3 & 61.4 & 14.8 & 26.4 & 55.7 & 12.9 & 25.2\\
    &CPPO & 75.6 & 7.2 & 29.8 & 70.7 & 11.6 & 28.6 & 66.2 & 12.7 & 27.7 & 62.3 & 12.9 & 26.9 & 57.4 & 11.4 & 25.8\\
    &RARL & 74.3 & 7.2 & 29.4 & 71.1 & 8.7 & 29.0 & 69.2 & 8.5 & 28.7 & 65.6 & 8.6 & 28.0 & 63.3 & 8.2 & 27.7 \\
    &ExactAR2L(0.3) & 76.5 & 5.9 & 29.9 & 71.5 & 12.1 &	28.9 & 66.8 & 14.0 & 27.8 & 62.9 & 14.1 & 27.0 & 59.2 &	12.7 & 26.4 \\
    &ExactAR2L(0.5) & \textbf{77.6} & 5.8 & 30.3 & \textbf{73.1} & 10.2 &	29.5 & 68.0 & 13.1 & 28.8 & 64.0 & 12.7 & 28.0 & 59.7 & 10.2	&27.4 \\
    &ExactAR2L(0.7) & 77.4 & 5.6 & 30.2 & \textbf{73.1} & 9.7 & 29.5 & 69.5 & 11.2 & 28.9 & 65.6 & 11.1 & 28.1 & 62.3 & 9.7 & 27.4 \\
    &ExactAR2L(1.0) & 76.0 & 7.0 & 29.8 & 72.4 & 9.7 & 30.0 & \textbf{70.3} & 9.4 & 30.3 & \textbf{66.7} & 9.3 & 30.3 & \textbf{63.8} & 8.0	&30.6 \\
    &RfMDP & 74.4 & 7.2 & 29.7 & 70.5 & 11.4 & 28.7 & 65.7 & 14.3 & 28.0 & 60.8 & 14.4 & 26.8 & 55.9 & 12.5 & 25.9\\
    &ApproxAR2L(0.3) & 76.1 & 5.9 & 29.7 & 70.5 & 12.5 & 28.8 & 66.1 & 14.6 & 28.2 & 61.3 & 14.5 & 27.2 & 55.7 & 11.8 & 26.2 \\
    &ApproxAR2L(0.5) & 76.2 & 5.9 & 29.9 & 72.1 & 11.7 &	29.2 & 66.9 & 14.7 & 28.1 & 62.1 & 15.0 & 26.9 & 56.1 &	13.6 & 25.5 \\
    &ApproxAR2L(0.7) & 73.0 & 7.0 & 28.8 & 70.1 & 9.5 & 29.2 & 65.4 & 11.1 & 28.4 & 61.0 & 10.7 & 27.9 & 56.2 & 8.4 & 27.1 \\
    &ApproxAR2L(1.0) & 73.6 & 6.8 & 28.9 & 69.3 & 10.7 &	29.2 & 66.1 & 11.8 & 29.3 & 61.9 & 11.6 & 29.4 & 57.1 &	9.1 & 29.8 \\
    \midrule[0.7pt]
    \multirow{12}{*}{\rotatebox{90}{$N_{B}=15$}}
    &PCT & 76.8 & 6.9 & 29.9 & 69.4 & 15.4 & 28.1 & 62.0 &	18.3 & 26.3 & 55.2 & 18.2 & 24.3 & 48.6 & 14.7 & 22.5 \\
    &CPPO & 75.2 & 7.7 & 29.3 & 69.9 & 13.1 & 28.2 & 63.8 & 15.2 & 26.8 & 58.0 & 15.7 & 25.2 & 52.3 & 13.3 & 23.9\\
    &RARL & 73.2 & 7.2 & 28.7 & 70.3 & 9.1 & 28.7 & 67.2 & 10.9 & 28.4 & 63.1 & 12.4 & 28.0 & \textbf{58.7} & 11.0 & 27.5 \\
    &ExactAR2L(0.3) & 76.8 & 6.4 & 30.1 & 71.4 & 11.4 & 29.3 & 65.3 & 13.8 & 28.4 & 59.7 & 14.0 & 27.3 & 53.4 & 11.6 & 26.5 \\
    &ExactAR2L(0.5) & \textbf{77.7} & 5.8 & 30.3 & \textbf{72.0} & 11.9 & 29.3 & \textbf{68.0} & 13.8 & 28.8 & 61.2 & 15.4 & 27.2 & 55.0 & 13.3 & 26.0 \\
    &ExactAR2L(0.7) & 76.5 & 7.5 & 30.0 & 71.5 & 13.4 &	29.0 & 66.4 & 14.8 & 28.0 & 62.0 & 14.5 & 26.7 & 56.3 & 12.3 & 25.6 \\
    &ExactAR2L(1.0) & 76.6 & 5.4 & 30.0 & 71.7 & 10.8 & 29.4 & 67.7 & 12.3 & 29.0 & \textbf{63.3} & 11.9 & 28.2 & 58.5 & 11.1 & 27.6 \\
    &RfMDP & 73.6 & 8.0 & 29.0 & 68.4 & 12.8 & 29.1 & 62.2 & 15.0 & 28.9 & 58.2 & 14.7 & 29.2 & 54.1 & 12.5 & 29.6 \\
    &ApproxAR2L(0.3) & 75.4 & 7.3 & 29.5 & 69.9 & 12.3 & 29.4 & 65.5 & 13.5 & 29.2 & 59.3 & 13.2 & 28.0 & 54.5 & 10.6 & 27.5 \\
    &ApproxAR2L(0.5) & 75.2 & 6.4 & 29.6 & 69.9 & 10.6 & 29.5 & 65.2 & 11.9 & 29.5 & 60.1 & 10.9 & 29.2 & 55.1 & 6.8 & 29.4 \\
    &ApproxAR2L(0.7) & 74.6 & 7.5 & 29.2 & 70.2 & 10.3 & 29.1 & 65.0 & 11.9 & 28.8 & 60.6 & 11.5 & 28.6 & 55.8& 8.0 & 28.6 \\
    &ApproxAR2L(1.0) & 73.5 & 7.5 & 29.1 & 69.1 & 10.2 & 29.0 & 65.0 & 12.4 & 29.3 & 60.4 & 11.6 & 29.0 & 55.8 & 8.8 & 29.2 \\
    \midrule[0.7pt]
    \multirow{12}{*}{\rotatebox{90}{$N_{B}=20$}}
    &PCT & \textbf{77.0} & 5.5 & 30.1 & 68.4 & 17.0 & 28.6 & 59.7 & 19.3 & 27.0 & 50.9 & 18.2 & 25.5 & 41.9 & 12.2 & 24.2 \\
    &CPPO & 74.1 & 7.5 & 29.2 & 66.7 & 16.2 & 27.9 & 59.1 &	18.5 & 26.5 & 53.2 & 17.3 & 25.3 & 45.8 & 13.6 & 24.0\\
    &RARL & 72.0 & 6.4 & 28.4 & 68.6 & 9.4 & 29.1 & 64.6 & 10.6 & 29.3 & 61.7 & 10.0 & 30.0 & \textbf{58.7} & 8.4 & 30.4\\
    &ExactAR2L(0.3) & 76.7 & 6.7 & 30.0 & 70.8 & 13.3 & 29.4 & 65.6 & 15.8 & 28.9 & 59.4 & 16.3 & 28.0 & 52.7 & 14.3 & 27.4\\
    &ExactAR2L(0.5) & 76.8 & 6.2 & 30.1 & 70.0 & 14.3 & 30.2 & 64.7 & 15.8 & 30.5 & 60.0 & 15.3 & 30.6 & 54.4 & 13.3	& 30.8\\
    &ExactAR2L(0.7) & 76.3 & 6.1 & 30.0 & \textbf{71.1} & 11.4 & 29.6 & 66.2 & 14.1 & 29.3 & 61.9 & 14.2 & 29.2 & 57.0 & 12.3 & 28.8\\
    &ExactAR2L(1.0) & 76.1 & 7.3 & 30.0 & 70.9 & 11.8 & 29.4 & \textbf{66.7} & 12.7 & 29.0 & \textbf{62.8} & 12.6 & 28.6 & 58.5 & 10.3 & 28.2\\
    &RfMDP & 73.8 & 7.0 & 29.0 & 69.4 & 11.0 & 26.6 & 64.7 & 13.3 & 24.2 & 59.4 & 15.1 & 21.5 & 54.4 & 13.0 & 19.2\\
    &ApproxAR2L(0.3) & 76.1 & 6.1 & 29.8 & 70.3 & 12.1 & 30.4 & 64.1 & 14.1 & 30.8 & 57.7 & 12.3 & 31.0 & 51.7 & 6.7 & 31.5\\
    &ApproxAR2L(0.5) & 75.0 & 7.6 &	29.5 & 70.1 & 12.2 & 30.1 & 63.9 & 15.4 & 30.5 & 58.8 & 14.4 & 30.7 & 53.1 &	11.7 & 31.1\\
    &ApproxAR2L(0.7) & 74.1 & 6.9 & 29.0 & 68.9 & 10.8 & 29.3 & 64.0 & 11.9 & 30.0 & 59.4 & 11.3 & 30.4 & 54.1 & 7.1 & 30.5\\
    &ApproxAR2L(1.0) & 73.4 & 8.2 & 28.9 & 68.2 & 13.2 & 28.7 & 65.6 & 13.9 & 29.0 & 61.9 & 14.6 & 28.9 & 57.6 & 12.9 & 28.8 \\
    \bottomrule[1.2pt]
    \end{tabular}
    \label{tab:discrete}
    \vspace{-8pt}
\end{table*}

\begin{figure*} [b]
\centering
\subfigure[$N_B = 15$]{\label{fig:nnb15}\includegraphics[width=0.22\textwidth]{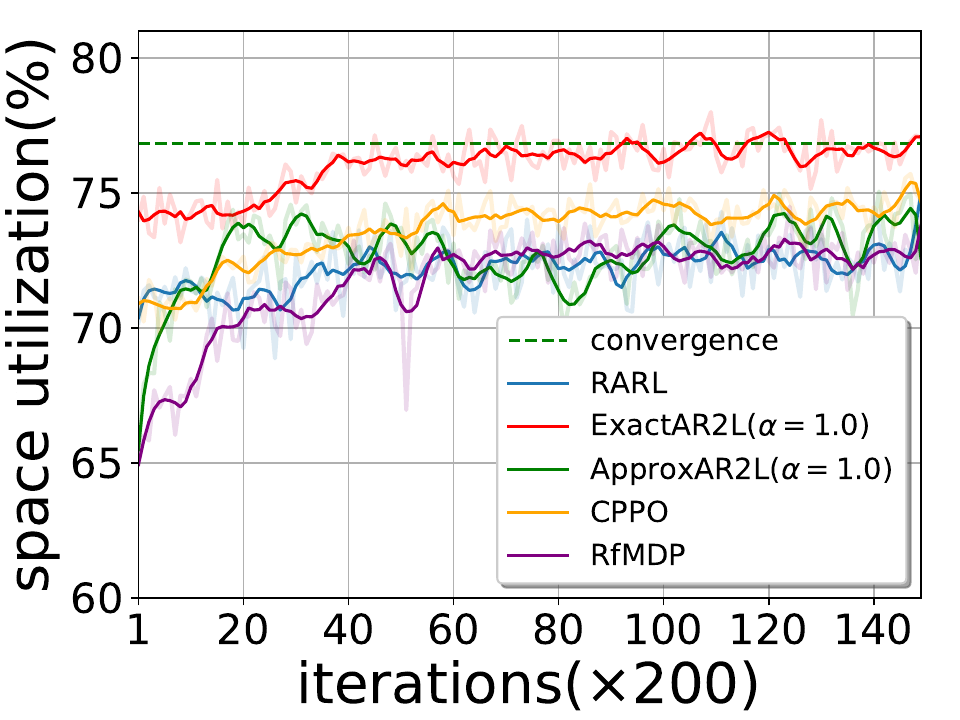}}
\subfigure[$N_B = 20$]{\label{fig:nnb20}\includegraphics[width=0.22\textwidth]{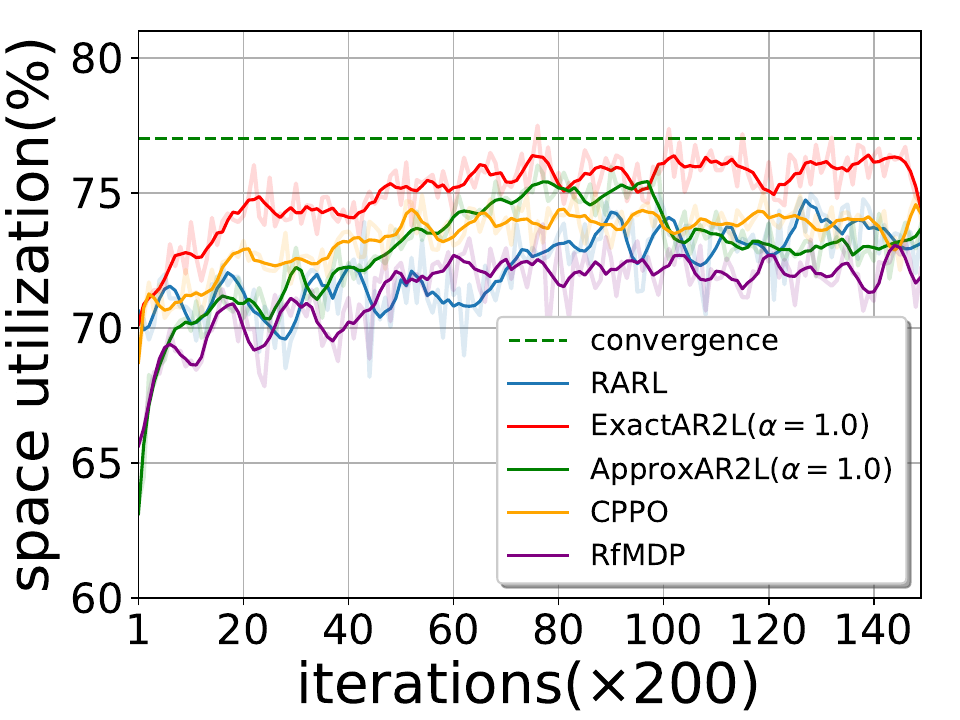}}
\subfigure[$N_B = 5, \alpha=1.0$]
{\label{fig:nnb5_alpha10}\includegraphics[width=0.22\textwidth]{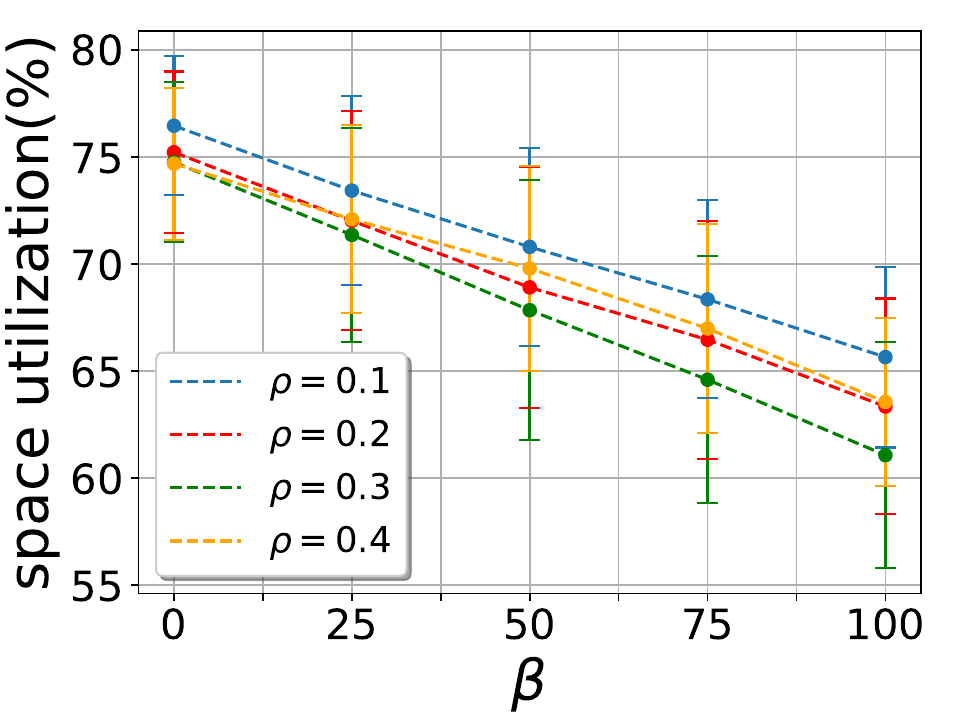}}
\subfigure[$N_B = 10, \alpha=0.5$]
{\label{fig:nnb10_alpha05}\includegraphics[width=0.22\textwidth]{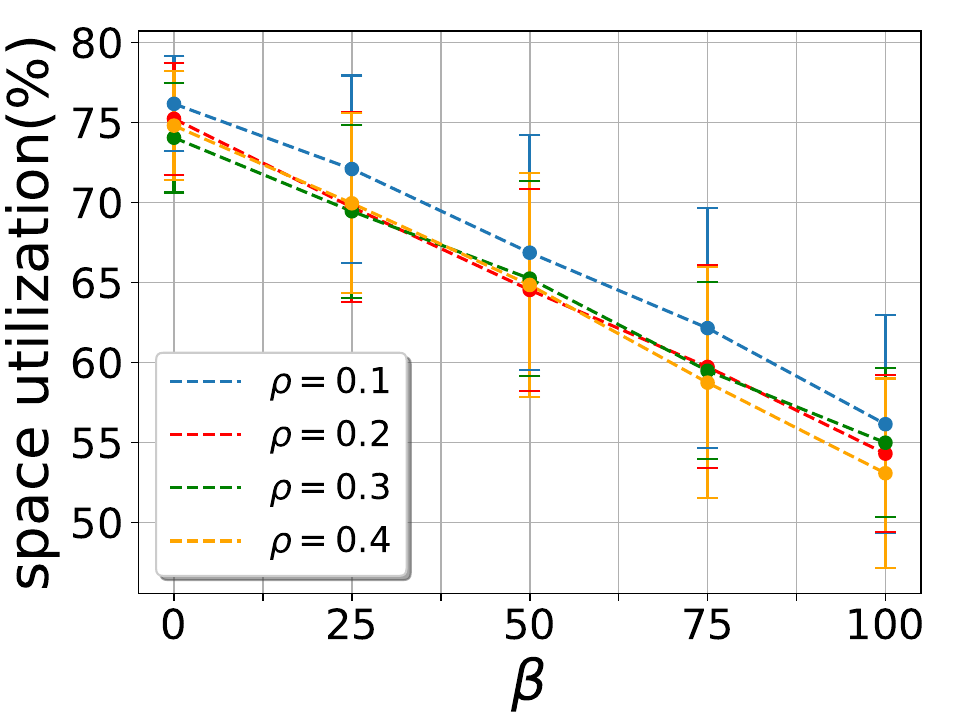}}
\caption{\ref{fig:nnb15}~\ref{fig:nnb20} depict the learning curves of robust RL algorithms for nominal problem instances. \ref{fig:nnb5_alpha10}~\ref{fig:nnb10_alpha05} show the influence of $\rho$ on the performance of the ApproxAR2L algorithm.}
\label{fig:curves}
\end{figure*}

\bm{$Uti.$} Table~\ref{tab:discrete} presents the results of evaluating multiple robust methods on various datasets that differ in the number of problem instances perturbed by different attackers. We denote the different values of $\alpha$ used by the ExactAR2L algorithm as ExactAR2L($\alpha$), and similarly for ApproxAR2L($\alpha$). The results unequivocally indicate that the ExactAR2L algorithm, when employed with a higher value of $\alpha$, exhibits superior performance on nominal cases and competitive robustness in terms of space utilization when compared to other robust methods. When the value of $\alpha$ is increased, it allows the packing policy to encounter and learn from more worst-case samples during training. By experiencing a wider range of adversarial situations, the packing policy can better adapt to mitigate the impact of attackers, enhancing its overall robustness. However, ExactAR2L cannot consistently improve its average performance across all the settings with different values of $N_B$. For the scenario where $N_B=5$, the distribution deviation between the perturbed data and the nominal data ($\beta=0$) is comparatively minimal. This indicates that increasing the value of $\alpha$ and incorporating more challenging instances into the training settings of the nominal dynamics is an acceptable approach. By doing so, the generalization of the model can be enhanced, allowing it to handle a wider range of scenarios. However, the distribution deviation between the perturbed data and the nominal data ($\beta=0$) increases as the number of observable items increases. In such cases, it becomes less desirable to significantly increase the value of $\alpha$. As a result, in the cases of $N_B=10$ and $N_B=15$ under $\beta=0$, ExactAR2L tends to favor $\alpha=0.5$. Furthermore, in the scenario of $N_B=20$, ExactAR2L cannot outperform PCT due to the larger distribution deviation caused by the increased number of observable items of the attacker. In a word, by including perturbed data with a small deviation from the nominal data during training, the enhanced generalization leads to the performance improvement in nominal cases ($\beta=0$). Conversely, the large distribution deviation between the perturbed data and the nominal data can degenerate the performance in nominal cases, as indicated by the RARL algorithm. We also observe that as the value of $\beta$ increases, the ExactAR2L algorithm tends to favor a larger value of $\alpha$. Furthermore, the ApproxAR2L algorithm exhibits a similar performance tendency in both nominal and worst-case scenarios as the ExactAR2L algorithm. Due to the introduced value estimation error, the ApproxAR2L algorithm cannot perform as well as the ExactAR2L algorithm. Despite this, the ApproxAR2L algorithm still performs better than its baseline model, the RfMDP algorithm. Additionally, as demonstrated in Figures~\ref{fig:nnb15}~\ref{fig:nnb20}, the ExactAR2L algorithm can learn faster than other robust RL algorithms.

\bm{$Std.$} As shown in Table~\ref{tab:discrete}, ExactAR2L demonstrates its superiority over PCT with smaller $Std.$ in 17 tasks, while producing a slightly larger $Std.$ in 3 tasks. Thus, ExactAR2L can indeed improve the robustness. Furthermore, we observe when $N_B=5, 10$, ExactAR2L tends to choose $\alpha=0.7$ for smaller $Std.$. On the other hand, for $N_B=15, 20$, ExactAR2L favors $\alpha=1.0$. Since ExactAR2L is trained on both nominal and worst-case dynamics, while RARL is trained only on the worst-case dynamics, the ExactAR2L policy is less conservative than the RARL policy. While the conservativeness may result in smaller $Std.$ in most tasks, it produces worse results in terms of $Uti.$ under the nominal dynamics. It is worth noting that the value of $Std.$ from ExactAR2L is the closest to that of RARL. This observation shows ExactAR2L can trade off between conservative and risky behavior, as the $Std.$ from ExactAR2L is between that of RARL and PCT. Similarly, ApproxAR2L is less conservative than RfMDP, which causes ApproxAR2L cannot achieve a smaller $Std.$ in all tasks. 

\bm{$Num.$} The ExactAR2L(1.0) algorithm can pack more items in 17 tasks compared to PCT, and shows a slight drop in 3 tasks, where the average drop is 0.2. We found that to pack more items, ExactAR2L consistently favors $\alpha=1.0$ across various tasks. Compared to RARL, ExactAR2L(1.0) can pack at least the same number of items in 16 tasks. Thus ExactAR2L(1.0) can produce competitive results compared to RARL and PCT in terms of $Num.$. Compared to the baseline method RfMDP, ApproxAR2L(0.5) can pack more items in 16 tasks, and shows a slight drop in only 4 tasks, where the average drop is 0.25.

\textbf{Hyperparameter Configurations.} Based on observations from Table~\ref{tab:discrete}, $\alpha=1.0$ is the best choice for ExactAR2L across different test settings. When $\beta=50, 75$, ExactAR2L(1.0) performs the best compared to baselines and ExactAR2L with other values of $\alpha$ (with a slight drop compared to ExactAR2L(0.7) when $\beta=50, N_B=15$). If $\beta=100$, ExactAR2L(1.0) can still produce competitive results compared to RARL and significantly outperforms PCT. When $\beta=25$, although $\alpha=1.0$ is not the optimal choice, ExactAR2L(1.0) can still outperform other baselines. When $\beta=0$, ExactAR2L(1.0) significantly outperforms RARL, and the slight drop compared to PCT is acceptable, as our goal is to improve the robustness while maintaining average performance at an acceptable level. $\rho$ is only used in ApproxAR2L algorithm. As shown in Figures~\ref{fig:nnb5_alpha10}~\ref{fig:nnb10_alpha05}, we choose different values of $\rho$ in different settings. We found that $\rho=0.1$ is a trustworthy choice for ApproxAR2L. Based on the observations from Table~\ref{tab:discrete} and Figures~\ref{fig:nnb5_alpha10}~\ref{fig:nnb10_alpha05}, we conclude that $\rho=0.1$ and $\alpha=0.5$ are the best choice for ApproxAR2L, as it outperforms its corresponding baseline in almost all the tasks.

\section{Conclusions and Limitations}
In this work, we propose a novel reinforcement learning approach, Adjustable Robust Reinforcement Learning (AR2L), for solving the online 3D Bin Packing Problem (BPP). AR2L achieves a balance between policy performance in nominal and worst-case scenarios by optimizing a weighted sum of returns. We use a surrogate task of identifying a policy with the largest lower bound of the return to optimize the objective function. We can seamlessly bridge AR2L into the RARL and RfMDP algorithms to obtain exact and approximate AR2L algorithms. Extensive experiments show that the exact AR2L algorithm improves the robustness of the packing policy while maintaining acceptable performance, but may introduce additional computational complexity due to the mixture-dynamics model. The approximate AR2L algorithm estimates values without samples from the mixture dynamics, yet performance is not up to our exact AR2L agent due to the existence of estimation error. Though our AR2L framework is designed for online 3D-BPP, it can be also adapted to other decision-making tasks. We will investigate more efficient and generalizable AR2L framework. 

\begin{ack}
We appreciate the anonymous reviewers, (S)ACs, and PCs of NeurIPS2023 for their insightful comments to further improve our paper and their service to the community. We would like to thank the Turing AI Computing Cloud (TACC)~\citep{tacc} and HKUST iSING Lab for providing us computation resources on their platform.
\end{ack}

\bibliographystyle{apalike}
\bibliography{bibfile}

\clearpage
\appendix

\setcounter{figure}{0}
\setcounter{table}{0}
\setcounter{equation}{0}
\setcounter{theorem}{0}
\setcounter{proposition}{0}
\section{Appendix}

\subsection{Experiment}

In addition to results in the discrete setting presented earlier in the main texts, this section focuses on introducing additional results in the continuous setting. For the continuous setting, the container size $S^{d}, d\in \{x, y, z\}$ is fixed at 1, and the sizes of items $s^d, d \in \{ x, y \}$ follow a continuous uniform distribution $U(0.1, 0.5)$. To prevent impractical scenarios where packed items cannot form a supportive plane for incoming items, the $s^z$ dimension of each item is uniformly sampled from a finite set of values $\{0.1, 0.2, 0.3, 0.4, 0.5\}$~\citep{zhao2022learning_iclr}. Likewise, Each dataset consists of 3,000 problem instances, where each problem instance contains 150 items. We use three metrics to evaluate the performance of various packing strategies: $Uti.$ represents the average space utilization rate of the bin's volume; $Std.$ is the standard deviation of space utilization across all instances; $Num.$ evaluates the average number of packed items. \textbf{In the field of online 3D-BPP~\citep{zhao2021online,zhao2022learning_iclr,song2023towards}, the standard deviation of space utilization across a large number of problem instances (>1,000) is commonly used to assess the reliability and stability of a solver, rather than the error bar generated from multiple experiment runs~\citep{kim2022symnco,choo2022simulationguided,qiu2022dimes}}. In addition, all the models are developed using PyTorch~\citep{paszke2017automatic} and trained on a Nvidia RTX 3090 GPU and an Intel(R) Xeon(R) Gold 5218R CPU @ 2.10GHz.

\subsubsection{Algorithm Robustness under Permutation-based Attacker}

In the continuous setting, we conduct empirical evaluations to assess the robustness of four heuristic methods: deep-bottom-left (DBL) method~\citep{karabulut2005hybrid}, best-match-first (BMF) method~\citep{li2015hybrid}, least-surface-area heuristic (LSAH)~\citep{hu2017solving}, and online bin packing heuristics (OnlineBPH)~\citep{ha2017online}. In addition, we evaluated the packing configuration tree (PCT) algorithm~\citep{zhao2022learning_iclr} for the DRL-based methods, as other DRL-based methods are limited to spatially discretized grid world. We also evaluate the robustness of each algorithm under varying attack capabilities by selecting different values for $N_B$, including 5, 10, 15, and 20. Since except PCT, other methods can observe only the first item in the item sequence at each time step, the observable number of items of each packing strategy is fixed at 1 to ensure a fair comparison between these packing strategies.

Table ~\ref{tab:attacker_continuous} displays the performance of different packing strategies against permutations generated by their corresponding permutation-based attackers. We can see that the LSAH algorithm exhibits superior space utilization performance across various experimental settings with different values of $N_B$, even though it does not perform as well as the OnlineBPH and DBL algorithms in the nominal dynamics. In contrast, the BMF method is the most susceptible to attacks across all settings with varying numbers of observable items used by permutation-based attackers. With the increase of the attack capability, the PCT algorithm consistently outperforms the heuristic methods in terms of space utilization and the number of packed items, which can be attributed to its generalizability across a large number of problem instances. Similar to the results in the discrete setting, increasing the value of $N_B$ results in more performance degradation for all methods, which suggests an opportunity to explore more challenging problem instances, and to design proper robust algorithms accordingly.

\begin{table*}[h]
\vspace{10pt}
    \caption{The performance of existing methods under the perturbation in the continuous setting.}
    \setlength\tabcolsep{0.9pt}
    \renewcommand{\arraystretch}{0.9}
    \footnotesize
    \centering
    \begin{tabular}{cccccccccccccccc}
    \toprule[1.2pt]
       \multirow{2}{*}{Methods}  & \multicolumn{3}{c}{$N_B=5$} & \multicolumn{3}{c}{$N_B=10$} & \multicolumn{3}{c}{$N_B=15$} & \multicolumn{3}{c}{$N_B=20$} & \multicolumn{3}{c}{w/o attack} \\
    \cmidrule(r){2-4} \cmidrule(r){5-7} \cmidrule(r){8-10} \cmidrule(r){11-13} \cmidrule(r){14-16}
        & \scriptsize{$Uti.(\%)$} & \scriptsize{$Std.$} & \scriptsize{$Num.$} & \scriptsize{$Uti.(\%)$} & \scriptsize{$Std.$} & \scriptsize{$Num.$} & \scriptsize{$Uti.(\%)$} & \scriptsize{$Std.$} & \scriptsize{$Num.$} & \scriptsize{$Uti.(\%)$} & \scriptsize{$Std.$} & \scriptsize{$Num.$} & \scriptsize{$Uti.(\%)$} & \scriptsize{$Std.$} & \scriptsize{$Num.$} \\
    \midrule[0.7pt]
    DBL & 37.8 & 8.7 & 17.3 & 30.7 & 8.3 & 16.1 & 27.1 & 6.5 & 16.1 & 23.0 & 7.0 & 15.9 & 48.3 & 10.5 & 18.6 \\
    BMF & 27.5 & 8.6 & 13.3 & 21.6 & 7.3 & 12.5 & 17.4 & 6.5 & 12.0 & 14.1 & 4.7 & 11.0 & 42.1 & 8.2 & 16.4 \\
    LSAH & 38.7 & 10.9 & 16.7 & 36.9 & 9.1 & 17.2 & 33.2 & 10.0 & 16.3 & 27.1 & 8.4 & 17.6 & 47.5 & 8.6 & 18.4 \\    
    OnlineBPH & 33.8 & 11.6 & 15.3 & 30.2 & 7.4 & 16.5 & 26.4 & 7.2 & 16.2 & 23.3 & 6.8 & 15.6 & 49.1 & 9.7 & 19.0 \\
    PCT & \textbf{47.7} & 8.4 & 20.4 & \textbf{39.0} & 8.7 & 18.5 & \textbf{36.4} & 9.5 & 18.8 & \textbf{30.4} & 8.8 & 17.9 & \textbf{65.6} & 8.7 & 25.0 \\
    \bottomrule[1.2pt]
    \end{tabular}
    \label{tab:attacker_continuous}
    \vspace{15pt}
\end{table*}

\begin{figure*}[h]
    \centering
    \includegraphics[width=1\textwidth]{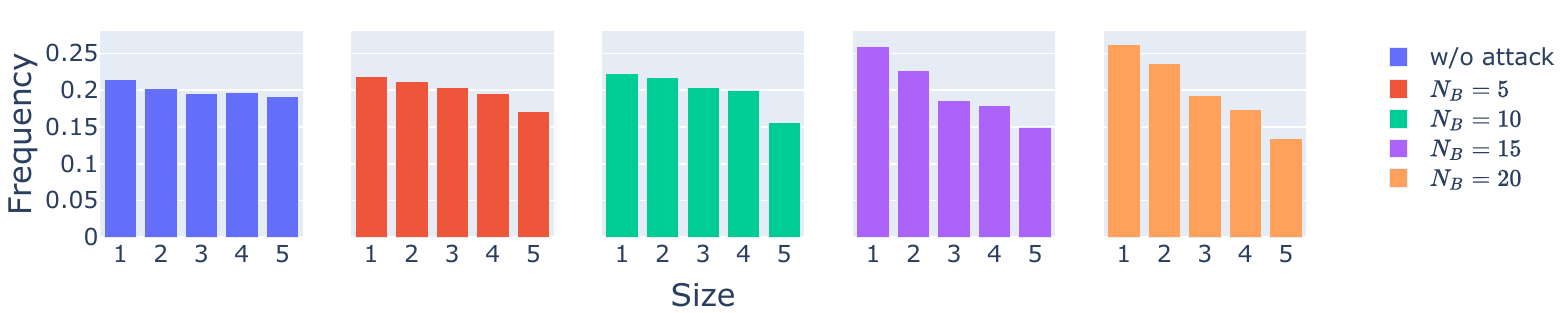}
    \caption{Item distributions in discrete settings with varying values of $N_B$ for different attackers.}
    \label{fig:item_dist}
\end{figure*}

Figure~\ref{fig:item_dist} depicts item distributions under perturbations from different attackers using PCT packing policies in the discrete setting. Since the sizes $s^{d}$ of items follow a uniform discrete distribution, the frequency of the sampled sizes are approximately equal, as illustrated in the leftmost bar chart in Figure~\ref{fig:item_dist}. It is apparent that as the number of observable items of the attacker increases, the attacker tends to select smaller items. Furthermore, as the attack capability increases, the underlying item distribution undergoes more significant changes. Therefore, in our setting, restricting the number of observable items is a practical approach to create permuted new problem instances that are closer to challenging real-world scenarios, as opposed to allowing an unlimited number of observable items.

In Figure~\ref{fig:instance_dist}, the item distributions from time step 1 to time step 20 are illustrated over different problem instances under perturbations from different attackers, using PCT packing policies in the discrete setting with $N_B=15, 20$. These item distributions not only partially reflect the problem instance distribution, but also reveal the preference of the attacker at each time step. At the beginning, the attacker favors items with smaller sizes, but as the time steps increase, it tends to sample items uniformly. This variation tendency illustrates the behavior of the attacker. However, it does not necessarily imply that rearranging the item sequence in ascending order based on item volumes would be an optimal or even effective attack. As selecting smaller items intuitively reduces the likelihood of forming supportive planes for future incoming items, attackers tend to choose smaller items to minimize the space utilization of the packing policy. Therefore, essentially, the attacker aims to select items that cannot be used to construct a supportive plane, rather than blindly choosing smaller items. This also explains why the probability of larger items in the beginning phase does not decrease to zero in Figure~\ref{fig:instance_dist}. As shown in Figure~\ref{fig:toy_attacker}, we use a toy example to demonstrate that rearranging the item sequence in ascending order based on item sizes would not be an optimal or even effective attack. In this example, the container sizes $S^d, d \in \{ x, y,z \}$ are set to 2, and the item sequence consists of four items, denoted as $\bm{\mathrm{B}}=\{ b^1, b^2, b^3, b^4 \}$. The sizes of the items are as follows: $b^1=(1\times2\times2)$, $b^2=b^3=(1\times1\times1)$, and $b^4=(1\times1\times2)$. We generate a new item sequence $\bm{\mathrm{B}}^{\prime} = \{ b^2, b^3, b^4, b^1 \}$ by rearranging $\bm{\mathrm{B}}$ in ascending order based on item sizes. However, this rearrangement is not an effective attack since the optimal solution can still be obtained using a packing strategy. By comparison, moving both $b^1$ and $b^4$ from $\bm{\mathrm{B}}$ to the middle between $b^2$ and $b^3$ results in $\bm{\mathrm{B}}^{\prime\prime}$, which is an effective attack for achieving minimal space utilization. In such a case, the packing agent cannot pack $b^1$ into the container at the third time step.

\begin{figure*}
\centering
\subfigure[$N_B = 15$]{\label{fig:nnb15_dist}\includegraphics[width=0.9\textwidth]{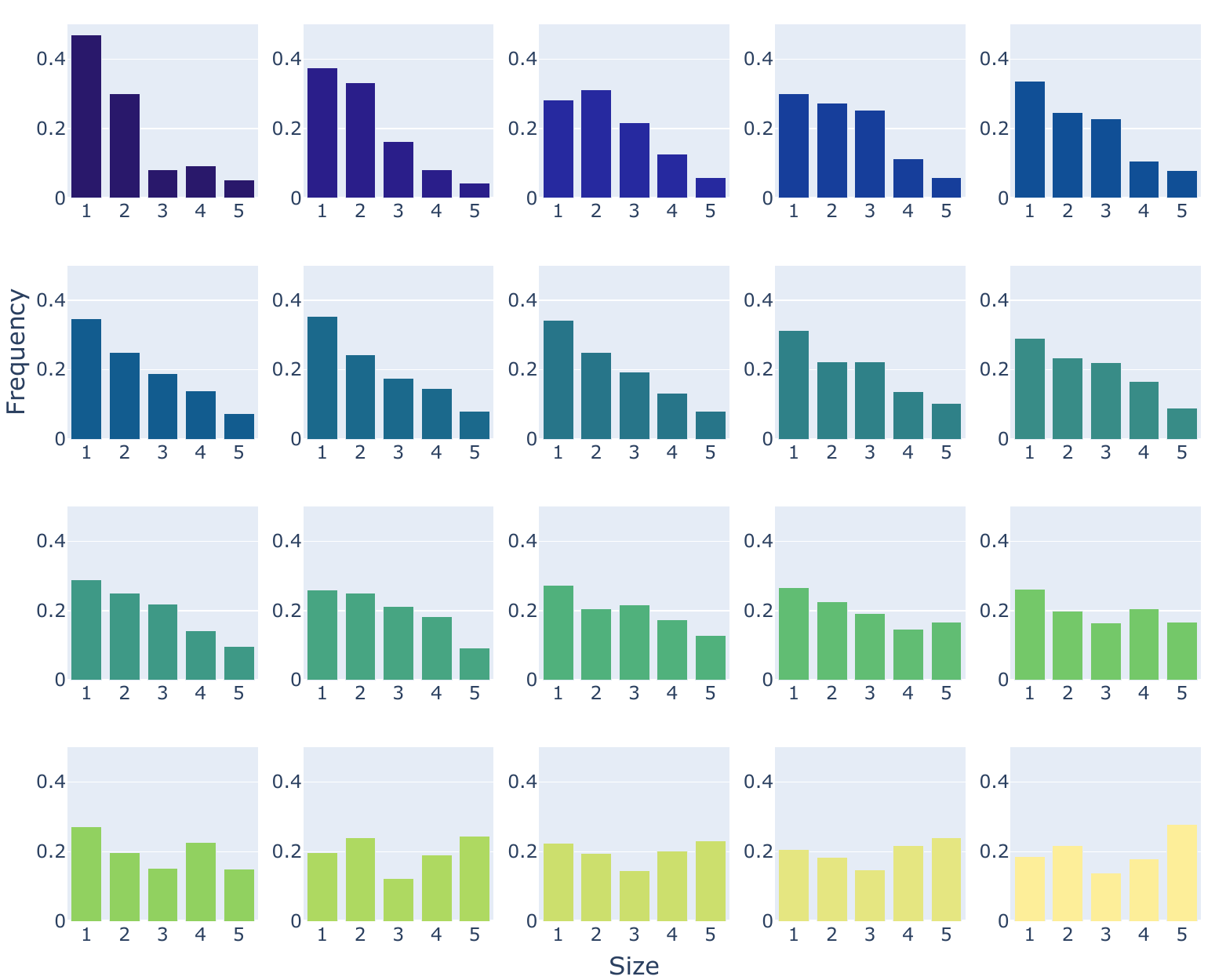}}
\subfigure[$N_B = 20$]{\label{fig:nnb20_dist}\includegraphics[width=0.9\textwidth]{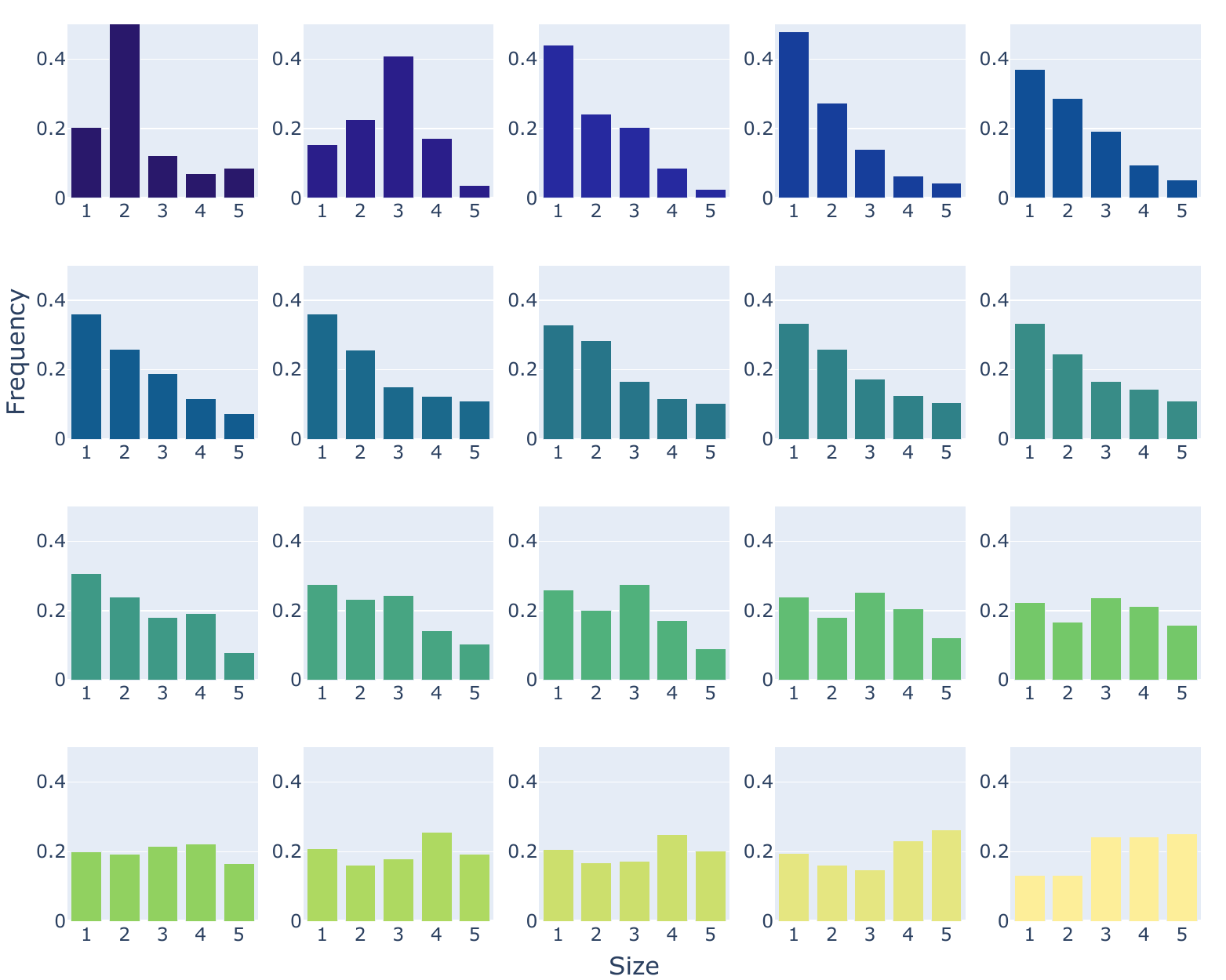}}
\caption{Item distribution at each time step in the discrete setting. From the top left chart to the bottom right chart, it represents the item distribution from time step 1 to time step 20 in sequence.}
\label{fig:instance_dist}
\end{figure*}

\begin{figure*}[t]
    \centering
    \includegraphics[width=1\textwidth]{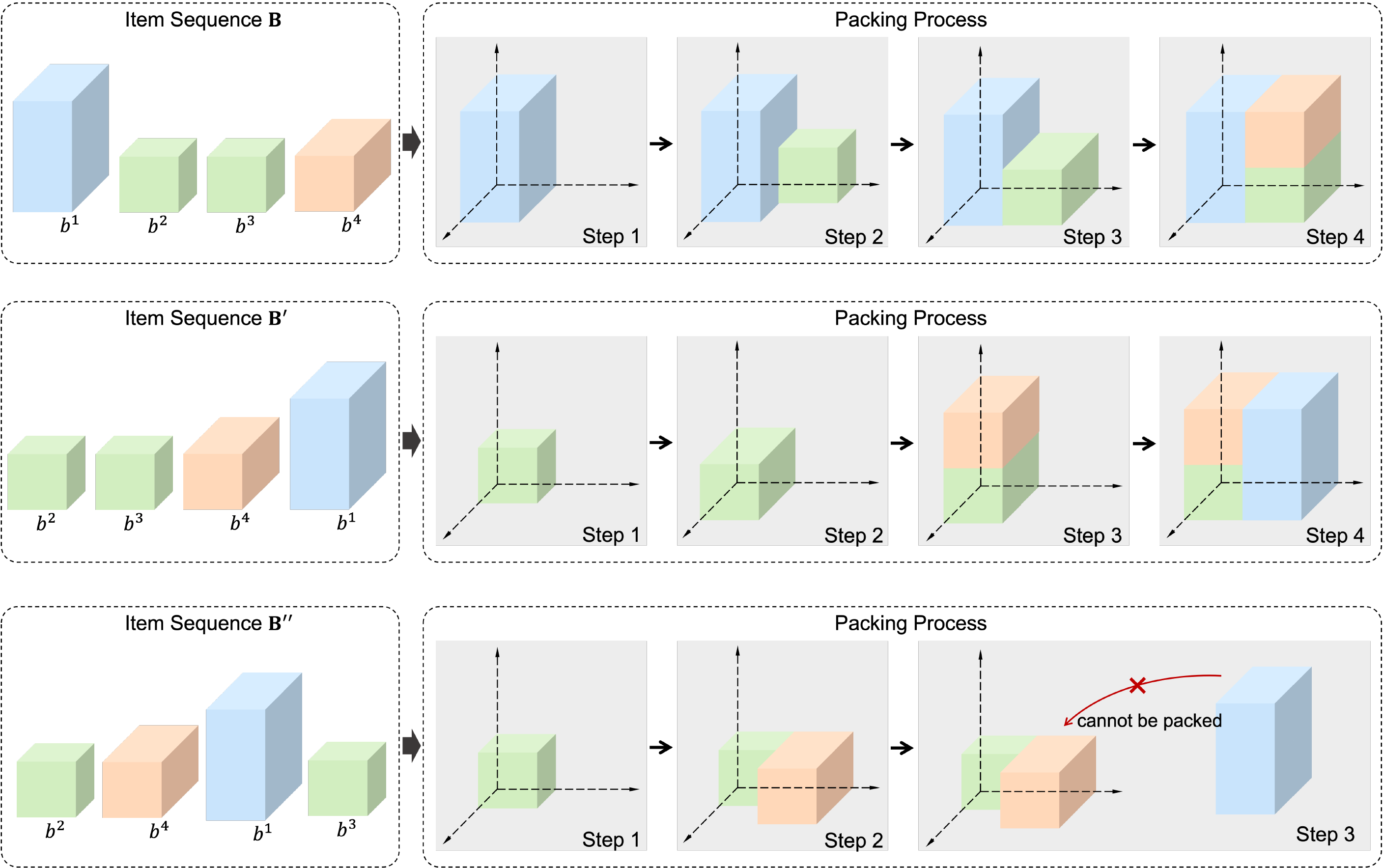}
    \caption{The toy example of packing processes with different item sequences.}
    \label{fig:toy_attacker}
\end{figure*}

\subsubsection{Performance of AR2L Algorithm}

In the continuous setting, we benchmark four other RL algorithms that are adaptable to online 3D-BPP, including the packing configuration tree (PCT)~\citep{zhao2022learning_iclr} method, the CVaR-Proximal-Policy-Optimization (CPPO)~\citep{ying2022towards} algorithm, the robust adversarial reinforcement learning (RARL)~\citep{pinto2017robust} algorithm, and the robust $f$-divergence MDP (RfMDP)~\citep{ho2022robust, panaganti2022robust} algorithm. Since both the exact AR2L (ExactAR2L) and approximate AR2L (ApproxAR2L) algorithms can adjust the value of $\alpha$ to develop policies that vary in their level of robustness, we use different values of $\alpha=0.3, 0.5, 0.7, 1.0$ to explore the robustness trends with varying values of $\alpha$ in this setting. Likewise, we construct mixture datasets by randomly selecting $\beta\%$ nominal box sequences and rearranging them using the learned corresponding permutation-based attacker for each packing policy.

\bm{$Uti.$} Table~\ref{tab:continuous} displays the performance of different robust methods that are compatible with online 3D-BPP on datasets with varying number of challenging problem instances in the continuous setting. ExactAR2L($\alpha$) represents the ExactAR2L algorithm taking different values for $\alpha$, and similarly for ApproxAR2L($\alpha$). In the continuous setting, increasing the value of $\alpha$ can lead to ExactAR2L producing more robust policies against their corresponding permutation-based attackers, as the larger value of $\alpha$ provides more worst-case problem instances during the training of the packing policy. However, they cannot consistently improve their performance on the nominal dynamics. As the number of observable items of the attacker increases, there is a corresponding increase in the distribution deviation between the perturbed data and the nominal data ($\beta=0$). For the scenario when $N_B=5, 10$, the distribution deviation is comparatively small. Thus, it is acceptable to incorporate more challenging instances by increasing the value of $\alpha$ to 0.7. This adjustment will further enhance the generalization of the model and lead to improved performance in terms of space utilization compared to PCT. By comparison, in the cases of $N_B = 15$ and $N_B=20$, ExactAR2L tends to favor $\alpha=0.3$ to produce satisfactory performance for $\beta=0$. Thus, by incorporating the perturbed data with small deviation from the nominal data during training, the generalization of the packing policy can be enhanced, leading to the performance improvement. But the larger deviation can degenerate the performance of the packing policy on the nominal datasets, as demonstrated in RARL. Likewise to the tendency in discrete setting, the ExactAR2L algorithm prefers the larger value of $\alpha$ as the value of $\beta$ increases. Furthermore, due to the value estimation error, ApproxAR2L cannot perform as well as its exact counterpart in most of tasks under the continuous setting. However, to our surprise, we observed that in the case of $N_B=5$, ApproxAR2L surpasses its corresponding exact counterpart in terms of performance. This could be attributed to the relatively small estimation error resulting from the small deviation in dynamics in this setting, where removing the mixture-dynamics model results in a more stable learning process compared to the exact algorithm. In the remaining tasks, the estimation error in ApproxAR2L can result in more unstable learning, which makes the trend in performance across different values of $\alpha$ less clear compared to the ExactAR2L algorithm.

\bm{$Std.$} As shown in Table~\ref{tab:continuous}, in almost all tasks, ExactAR2L with a larger value of $\alpha$ ($\alpha=1.0$) exhibits superiority over PCT in terms of $Std.$. Thus, ExactAR2L can indeed improve the robustness in the continuous setting. However, it should be noted that ExactAR2L does not consistently yield smaller values of $Std.$ compared to RARL. This is because the ExactAR2L policy tends to be less conservative than the RARL policy in the majority of tasks. Likewise, ApproxAR2L algorithm prefers larger value of $\alpha$ to consistently achieve smaller $Std.$ in different tasks.

\bm{$Num.$} In comparison to PCT, the ExactAR2L(1.0) algorithm demonstrates the ability to pack more items in 19 tasks, with a minor decrease observed in 1 task. However, when compared to RARL, the ExactAR2L(1.0) policy only outperforms in 9 tasks. This discrepancy can primarily be attributed to the performance drop observed in the worst-case dynamics of the ExactAR2L algorithm. The ApproxAR2L policy tends to prefer $\alpha=0.7$ over the RfMDP policy when it comes to packing more items across all the tasks. 

\textbf{Hyperparameter Configurations.} Based on the observations presented in Table~\ref{tab:continuous}, it can be concluded that $\alpha=1.0$ is the optimal choice for achieving a desired balance between policy performance in both nominal and worst-case scenarios. The ExactAR2L(1.0) algorithm demonstrates competitive robustness and average performance when compared to both PCT and RARL. Compared to the baseline method RfMDP, ApproxAR2L chooses $\alpha=0.7$ for the best performance.

In addition, although the convergence rate is not our primary concern in evaluating the RL algorithm, we still intend to observe the trend in the learning curves across varying attack capabilities. Our ExactAR2L algorithm shows greater improvement in terms of convergence rate in both the discrete and continuous settings as the number of observable items of the attacker increases, as demonstrated clearly in Figure~\ref{fig:discrete_train}~\ref{fig:continuous_train}.

\begin{table*}[t]
    \caption{Performance of robust methods under the perturbation in the continuous setting.}
    \setlength\tabcolsep{1.2pt}
    \renewcommand{\arraystretch}{0.9}
    \scriptsize
    \centering
    \begin{tabular}{cccccccccccccccccc}
    \toprule[1.2pt]
    &\multirow{2}{*}{Methods} & \multicolumn{3}{c}{$\beta=0$}
    & \multicolumn{3}{c}{$\beta=25$} & \multicolumn{3}{c}{$\beta=50$} & \multicolumn{3}{c}{$\beta=75$} & \multicolumn{3}{c}{$\beta=100$} \\
    \cmidrule(r){3-5} \cmidrule(r){6-8} \cmidrule(r){9-11} \cmidrule(r){12-14} \cmidrule(r){15-17}
    && \tiny{$Uti.(\%)$} & \tiny{$Std.$} & \tiny{$Num.$} & \tiny{$Uti.(\%)$} & \tiny{$Std.$} & \tiny{$Num.$} & \tiny{$Uti.(\%)$} & \tiny{$Std.$} & \tiny{$Num.$} & \tiny{$Uti.(\%)$} & \tiny{$Std.$} & \tiny{$Num.$} & \tiny{$Uti.(\%)$} & \tiny{$Std.$} & \tiny{$Num.$} \\
    \midrule[0.7pt]
    \multirow{12}{*}{\rotatebox{90}{$N_{B}=5$}}
    &PCT & 65.4 & 9.8 & 24.8 & 60.6 & 12.0 & 23.6 & 56.4 & 11.7 & 22.6 & 52.4 & 11.5 & 21.7 & 48.6 & 7.4 & 21.0 \\
    & CPPO & 65.0 & 10.7 & 24.7 & 60.3 & 13.2 & 23.5 & 56.6 & 12.3 & 22.6 & 52.7 & 12.6 & 21.8 & 47.8 & 8.8 & 20.5 \\
    & RARL & 65.7 & 10.9 & 25.1 & 62.1 & 12.0 & 24.4 & 57.4 & 11.7 & 23.5 & 53.7 & 11.3 & 22.7 & \textbf{50.8} & 8.8 & 22.2 \\
    & ExactAR2L(0.3) & 65.9 & 8.8 & 25.3 & 61.6 & 10.7 & 24.2 & 57.7 & 10.5 & 23.3 & 53.5 & 11.5 & 22.4 & 48.9 & 7.7 & 21.3 \\
    & ExactAR2L(0.5) & 66.0 & 9.0 & 25.0 & 61.6 & 11.3 & 24.0 & 56.9 & 10.7 & 23.0 & 53.8 & 10.5 & 22.5 & 49.9 & 7.2 & 21.6 \\
    & ExactAR2L(0.7) & 67.0 & 9.8 & 25.5 & 62.4 & 12.6 & 24.2 & 57.8 & 11.5 & 23.1 & 54.7 & 10.7 & 22.2 & 50.1 & 7.3 & 21.1 \\
    & ExactAR2L(1.0) & 66.7 & 9.6 & 25.4 & 62.6 & 11.3 & 24.6 & 57.9 & 10.3 & 23.6 & \textbf{55.1} & 10.0 & 23.1 & 50.3 & 6.0 & 22.1 \\
    & RfMDP & 62.9 & 10.4 & 24.0 & 57.2 & 13.3 & 22.7 & 52.4 & 12.5 & 21.6 & 48.7 & 12.4 & 20.9 & 43.3 & 8.6 & 19.7 \\
    & ApproxAR2L(0.3) & 66.3 & 9.1 & 25.1 & 62.5 & 11.8 & 24.3 &	57.3 & 11.9 & 23.0 & 52.5 & 11.7 & 22.0 & 48.2 & 7.6 & 21.1 \\
    & ApproxAR2L(0.5) & 67.1 & 8.5 & 25.5 & 62.4 & 11.6 & 24.1 & 58.0 & 11.4 & 23.0 & 53.8 & 11.4 & 22.1 & 49.1 & 7.6 & 20.9 \\
    & ApproxAR2L(0.7) & \textbf{67.4} & 8.8 & 25.6 & \textbf{63.3} & 10.9 & 24.6 & \textbf{58.4} & 11.0 & 23.5 & 54.1 & 11.6 & 22.5 & 49.5 & 8.1 & 21.2 \\
    & ApproxAR2L(1.0) & 64.3 & 9.4 & 24.6 & 60.7 & 11.0 & 23.6 & 57.2 & 11.0 & 22.9 & 53.4 & 11.1 & 22.1 & 49.7 & 8.1 & 21.4 \\
    \midrule[0.7pt]
    \multirow{12}{*}{\rotatebox{90}{$N_{B}=10$}}
    &PCT & 65.3 & 9.6 & 24.8 & 58.9 & 15.0 & 23.4 & 53.1 & 15.4 & 22.2 & 48.0 & 15.4 & 21.2 & 41.4 & 9.4 & 19.7 \\
    &CPPO & 65.5 & 10.5 & 24.9 & 59.0 & 15.1 & 23.9 & 52.3 & 15.0 & 22.7 & 46.1 & 14.0 & 21.7 & 39.9 & 6.8 & 20.7 \\
    &RARL & 63.4 & 9.1 & 24.1 & 58.6 & 12.1 & 23.6 & 54.7 & 12.2 & 23.3 & 50.5 & 11.3 & 22.9 & \textbf{45.9} & 7.0 & 22.5 \\
    & ExactAR2L(0.3) &65.5 & 9.5 & 24.8 & 58.5 & 14.1 & 23.4 & 52.2 & 14.3 & 22.1 & 46.5 & 13.3 & 21.1 & 40.2 & 6.7 & 19.9 \\
    & ExactAR2L(0.5) &66.5 & 8.8 & 25.3 & 59.5 & 14.8 & 23.8 & 54.1 & 14.8 & 22.6 & 47.9 & 14.4 & 21.3 & 41.1 & 8.2 & 19.9 \\
    & ExactAR2L(0.7) &\textbf{67.8} & 9.0 & 25.8 & \textbf{61.8} & 13.9 & 24.7 & \textbf{55.2} & 15.6 & 23.2 & 49.4 & 14.8 & 22.0 & 43.0 & 9.6 & 20.7 \\
    & ExactAR2L(1.0) &65.9 & 9.5 & 25.0 & 60.6 & 13.1 & 24.3 & 54.4 & 13.3 & 23.2 & \textbf{50.6} & 12.6 & 22.7 & 45.3 & 7.9 & 22.1 \\
    &RfMDP & 62.5 & 8.7 & 23.8 & 55.6 & 13.5 & 22.3 & 50.0 & 13.4 & 21.3 & 45.2 & 12.9 & 20.5 & 39.2 & 7.9 & 19.2 \\
    & ApproxAR2L(0.3) &65.5 & 10.0 & 24.9 & 58.3 & 15.0 & 23.5 & 51.4 & 14.1 & 22.0 & 46.1 & 12.7 & 21.1 & 40.7 & 7.5 & 20.2 \\
    & ApproxAR2L(0.5) &65.8 & 9.2 & 24.9 & 59.9 & 14.3 & 23.7 & 54.1 & 15.2 & 22.4 & 48.2 & 14.6 & 21.1 & 40.8 & 8.4 & 19.4 \\
    & ApproxAR2L(0.7) &66.8 & 8.1 & 25.5 & 59.7 & 14.2 & 23.6 & 53.8 & 14.9 & 22.2 & 48.3 & 15.3 & 20.9 & 42.0 & 9.9 & 19.4 \\
    & ApproxAR2L(1.0) &64.7 & 7.2 & 24.6 & 58.5 & 12.5 & 23.7 & 53.2 & 13.0 & 23.1 & 47.3 & 12.6 & 22.3 & 42.6 & 7.9 & 21.9 \\
    \midrule[0.7pt]
    \multirow{12}{*}{\rotatebox{90}{$N_{B}=15$}}
    &PCT &  65.8 & 10.1 & 25.0 & 57.6 & 17.4 & 23.3 & 50.7 & 17.5 & 21.8 & 42.7 & 16.4 & 20.1 & 35.9 & 10.8 & 18.5 \\
    &CPPO & 64.6 & 10.2 & 24.6 & 57.4 & 17.1 & 23.1 & 50.0 & 17.9 & 21.2 & 42.2 & 17.7 & 19.5 & 33.5 & 10.4 & 17.6\\
    &RARL & 62.7 & 8.7 & 23.9 & 57.0 & 12.7 & 23.9 & 51.6 & 12.7 & 24.1 & 46.5 & 11.6 & 24.1 & \textbf{41.0} & 6.2 & 24.2 \\
    & ExactAR2L(0.3) &\textbf{66.3} & 8.3 & 25.2 & 59.1 & 15.0 & 23.7 & 51.6 & 16.4 & 22.2 & 45.1 & 15.5 & 20.8 & 37.2 & 8.4 & 19.2 \\
    & ExactAR2L(0.5) &65.2 & 8.9 & 24.8 & 58.3 & 15.6 & 23.6 & 52.0 & 16.2 & 22.5 & 44.8 & 16.3 & 21.0 & 37.7 & 11.1 & 19.7 \\
    & ExactAR2L(0.7) &65.5 & 9.4 & 24.9 & 58.6 & 15.1 & 23.8 & \textbf{52.4} & 15.3 & 22.4 & 45.7 & 15.6 & 20.9 & 39.0 & 10.7 & 19.5 \\
    & ExactAR2L(1.0) &65.7 & 10.7 & 25.1 & \textbf{59.5} & 15.4& 24.3 & \textbf{52.4} & 16.2 & 23.4 & \textbf{47.2} & 15.2 & 22.6 & 40.7 & 10.2 & 21.5 \\
    &RfMDP &  62.8 & 10.2 & 23.9 & 55.3 & 17.0 & 22.5 & 47.9 & 17.8 & 20.9 & 41.4 & 17.0 & 19.5 & 34.5 & 11.9 & 17.9 \\
    & ApproxAR2L(0.3) &65.8 & 9.3 & 25.2 & 58.4 & 16.5 & 23.6 & 50.4 & 17.1 & 21.6 & 44.0 & 16.2 & 20.3 & 36.5 & 10.2 & 18.4 \\
    & ApproxAR2L(0.5) &65.8 & 8.8 & 25.0 & 57.4 & 16.6 & 23.6 & 50.0 & 17.2 & 22.4 & 43.7 & 15.7 & 21.5 & 36.8 & 9.2 & 20.2 \\
    & ApproxAR2L(0.7) &65.8 & 8.5 & 24.9 & 58.6 & 15.2 & 23.9 & 51.6 & 15.7 & 22.9 & 45.1 & 14.9 & 22.2 & 37.7 & 8.0 & 21.2 \\
    & ApproxAR2L(1.0) &65.5 & 9.3 & 24.8 & 58.2 & 15.2 & 23.4 & 51.6 & 14.9 & 22.5 & 45.4 & 14.8 & 21.3 & 38.2 & 9.8 & 20.2 \\
    \midrule[0.7pt]
    \multirow{12}{*}{\rotatebox{90}{$N_{B}=20$}}
    &PCT & 65.8 & 10.1 & 25.0 & 57.6 & 17.4 & 23.3 & 50.7 & 17.5 & 21.9 & 42.8 & 16.5 & 20.1 & 35.9 & 10.8 & 18.5 \\
    &CPPO & 64.6 & 10.3 & 24.6 & 57.4 & 17.1 & 23.1 & 50.0 & 17.9 & 21.2 & 42.2 & 17.7 & 19.5 & 33.5 & 10.4 & 17.6 \\
    &RARL & 62.7 & 8.7 & 23.9 & 57.0 & 12.7 & 23.9 & 51.6 & 12.7 & 24.1 & \textbf{46.5} & 11.6 & 24.1 & \textbf{41.0} & 6.2 & 24.2 \\
    & ExactAR2L(0.3) &\textbf{66.9} & 7.7 & 25.3 & 57.2 & 17.3 & 23.2 & 48.7 & 18.7 & 21.6 & 40.1 & 16.5 & 20.0 & 31.9 & 8.5 & 18.5 \\
    & ExactAR2L(0.5) &65.9 & 9.3 & 24.9 & 57.4 & 16.6 & 23.2 & 50.9 & 17.1 & 22.2 & 43.5 & 17.0 & 20.9 & 35.0 & 10.4 & 19.1 \\
    & ExactAR2L(0.7) &65.0 & 9.1 & 24.8 & 57.7 & 16.1 & 23.4 & 49.9 & 17.5 & 21.8 & 42.9 & 16.9 & 20.5 & 35.3 & 11.3 & 18.7 \\
    & ExactAR2L(1.0) &64.9 & 9.2 & 24.5 & 58.1 & 15.6 & 23.6 & \textbf{51.3} & 17.2 & 22.4 & 44.6 & 16.9 & 21.3 & 37.4 & 12.0 & 19.9 \\
    &RfMDP & 62.8 & 10.2 & 23.9 & 55.3 & 17.0 & 22.5 & 47.9 & 17.8 & 20.9 & 41.4 & 17.1 & 19.5 & 34.5 & 11.9 & 17.9 \\
    &ApproxAR2L(0.3) &64.9 & 9.9 & 24.7 & 55.6 & 18.6 & 22.6 & 47.2 & 19.2 & 20.9 & 39.8 & 18.3 & 19.4 & 30.9 & 9.1 & 17.3 \\
    &ApproxAR2L(0.5) &65.8 & 10.8 & 25.0 & 57.5 & 17.9 & 23.7 & 49.4 & 19.0 & 22.4 & 41.7 & 17.8 & 21.1 & 32.9 & 8.9 & 19.3 \\
    &ApproxAR2L(0.7) &66.7 & 7.7 & 25.3 & \textbf{58.4} & 16.7 & 24.2 & 50.3 & 18.0 & 23.0 & 42.3 & 17.2 & 21.9 & 33.6 & 7.4 & 20.3 \\
    &ApproxAR2L(1.0) &64.8 & 9.5 & 24.8 & 58.3 & 15.4 & 24.2 & 50.1 & 17.6 & 23.1 & 42.5 & 16.8 & 21.8 & 34.5 & 8.7 & 20.7 \\
    \bottomrule[1.2pt]
    \end{tabular}
    \label{tab:continuous}
    \vspace{-8pt}
\end{table*}

\begin{figure*}[t]
\centering
\subfigure[$N_B = 5$]{\label{fig:disc_nnb5}\includegraphics[width=0.24\textwidth]{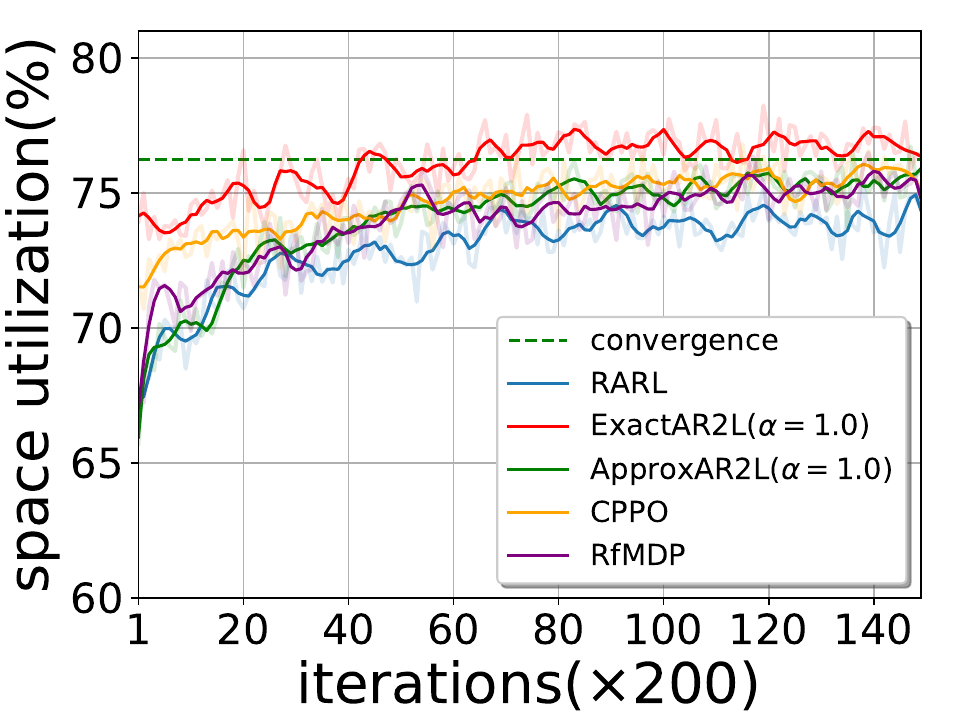}}
\subfigure[$N_B = 10$]{\label{fig:disc_nnb10}\includegraphics[width=0.24\textwidth]{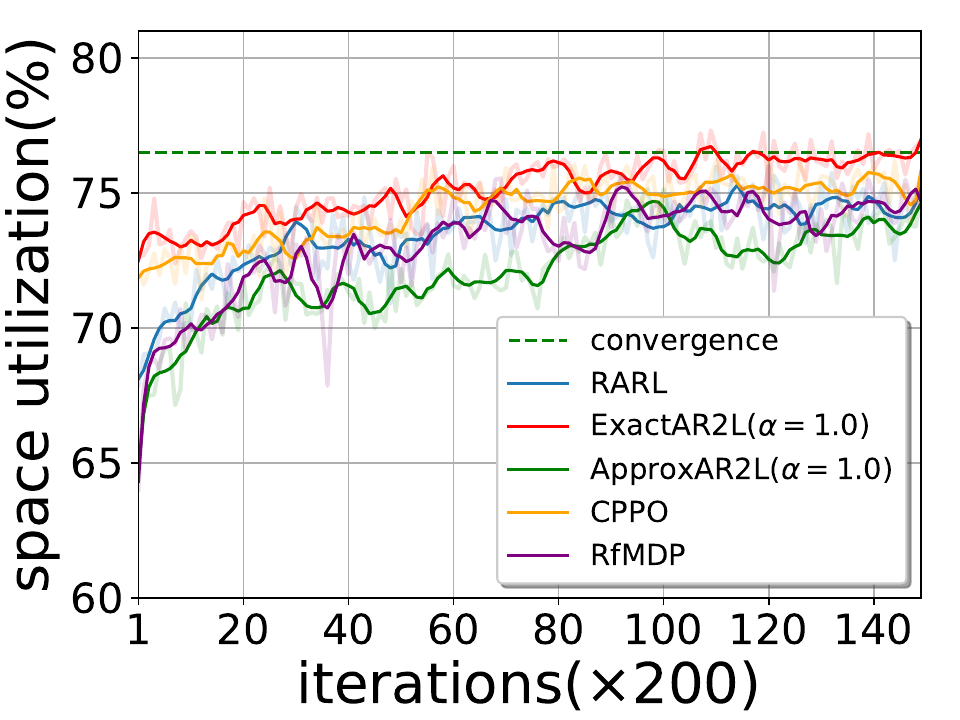}}
\subfigure[$N_B = 15$]{\label{fig:disc_nnb15}\includegraphics[width=0.24\textwidth]{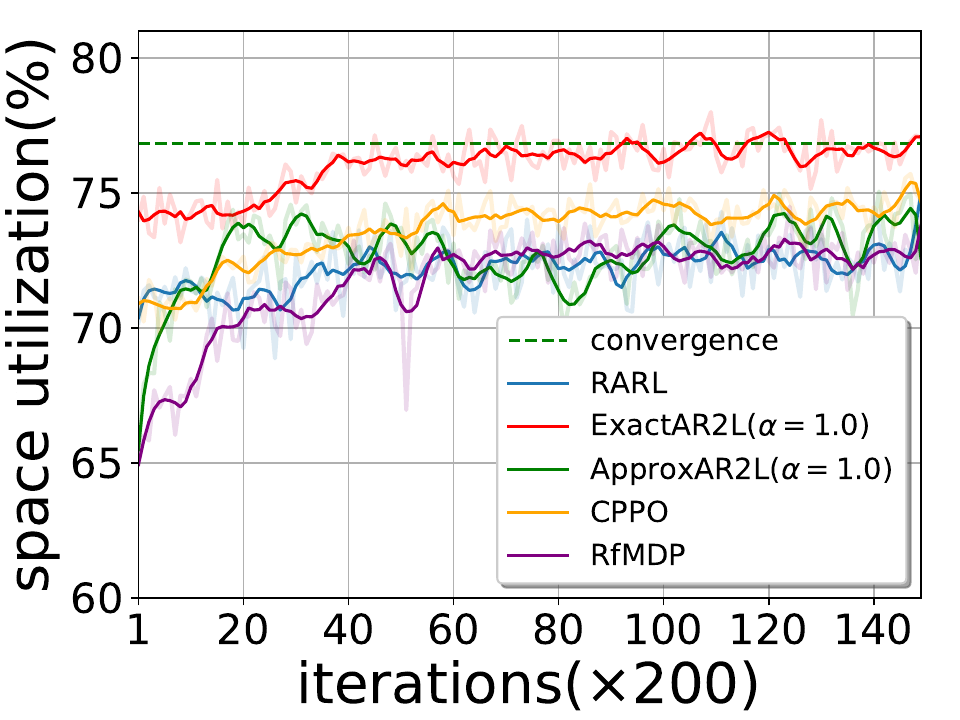}}
\subfigure[$N_B = 20$]{\label{fig:disc_nnb20}\includegraphics[width=0.24\textwidth]{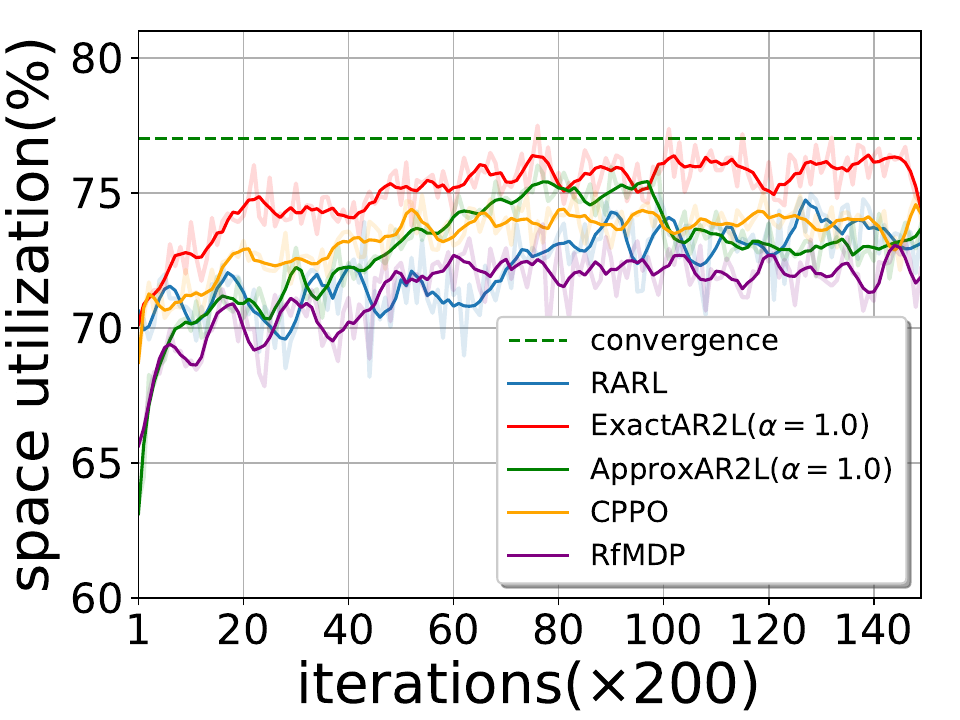}}
\caption{The learning curves of robust RL algorithms for nominal problem instances with different values of $N_B$ in the discrete setting.}
\label{fig:discrete_train}
\end{figure*}

\begin{figure*}[t]
\centering
\subfigure[$N_B = 5$]{\label{fig:cont_nnb5}\includegraphics[width=0.24\textwidth]{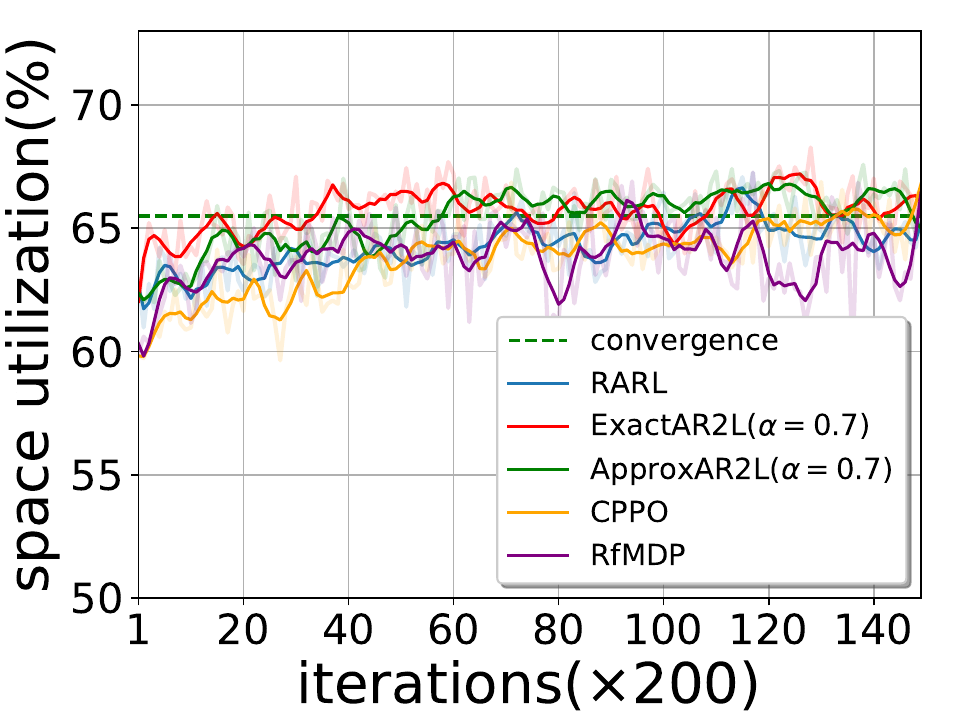}}
\subfigure[$N_B = 10$]{\label{fig:cont_nnb10}\includegraphics[width=0.24\textwidth]{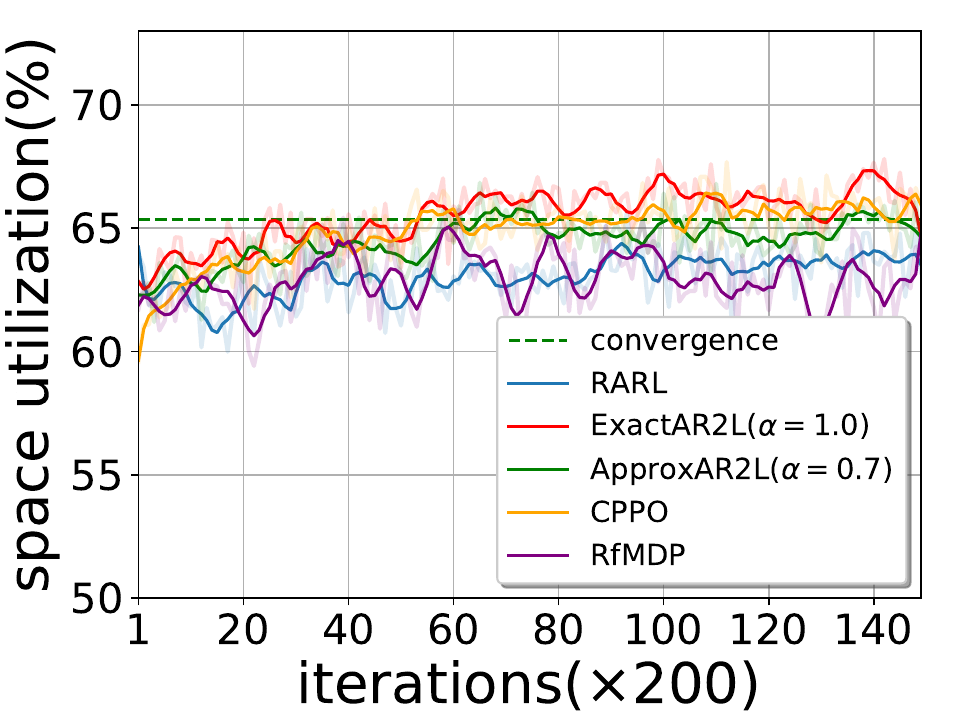}}
\subfigure[$N_B = 15$]{\label{fig:cont_nnb15}\includegraphics[width=0.24\textwidth]{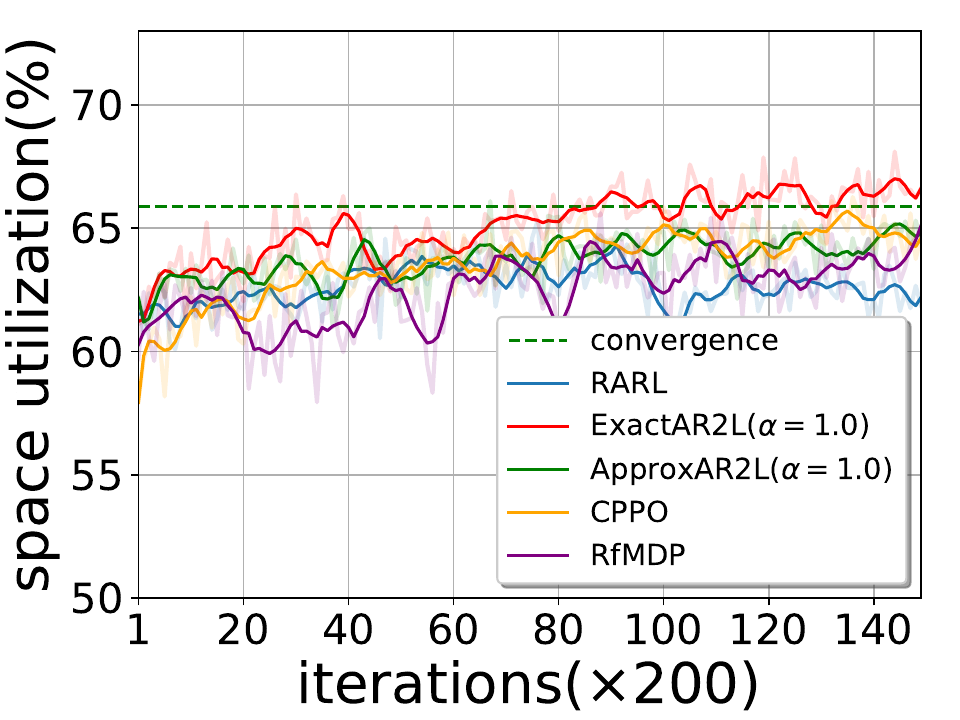}}
\subfigure[$N_B = 20$]{\label{fig:cont_nnb20}\includegraphics[width=0.24\textwidth]{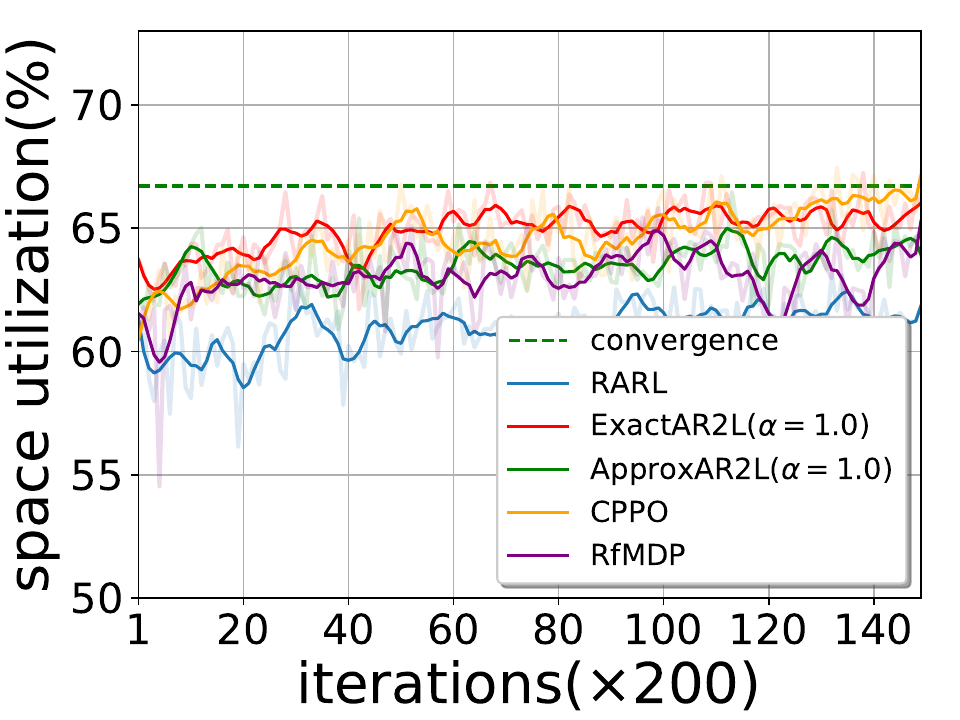}}
\caption{The learning curves of robust RL algorithms for nominal problem instances with different values of $N_B$ in the continuous setting.}
\label{fig:continuous_train}
\end{figure*}

\subsubsection{Visualization Results}

Figure~\ref{fig:case_study} shows a qualitative comparison of the visualized bin packing results obtained by the ExactAR2L, RARL, and PCT policies under the nominal and worst-case dynamics with varying numbers of observable items $N_B=15, 20$ in the discrete setting. As shown in Figure~\ref{fig:case_nominal_nnb15}~\ref{fig:case_nominal_nnb20}, the ExactAR2L algorithm allows more smaller margin spaces in the container to minimize sizes of those unoccupied spaces. By comparison, the RARL algorithm attempts to create larger spaces, which may pose a risk of future items not fitting into those spaces. The visualization results of the ExactAR2L and PCT algorithms under their respective worst-case dynamics are depicted in Figure~\ref{fig:case_worst_nnb15} and \ref{fig:case_worst_nnb20}. One observation is that attackers designed for packing policies tend to prioritize smaller items at the beginning, as a majority of smaller items are likely to occur at the bottom of the container. In such a scenario, the ExactAR2L policy prioritizes the creation of supportive planes for future items, even at the cost of producing smaller margin spaces.

\subsubsection{Baselines Comparisons}
Since we implement PCT with different training configurations, we conducted a comparison between our implementation and the official implementation in this section. We conducted the comparison of the algorithms in terms of convergence rate across various values of $N_B$, specifically $N_B=1, 5, 10, 15, 20$. To ensure a fair and consistent comparison, we employed the same number of problem instances during training. Each iteration utilized a fixed rollout length of 30, and we maintained a constant number of 64 parallel processes throughout the experiments. As shown in Figure~\ref{fig:offi_comp_disc} and Figure~\ref{fig:offi_comp_cont}, our results clearly demonstrate that our implementation exhibits a faster convergence rate across different settings, considering various numbers of observable items.

\subsubsection{Practicability of AR2L Algorithm}
To validate the practicality of exact AR2L algorithm in real-world scenarios, we directly evaluated our model without retraining on the Mixed-item Dataset (MI Dataset)~\citep{yuan2023towards, elhedhli2019three} which follows the generation scheme proposed by~\cite{elhedhli2019three} for the realistic 3D-BPP instance generator. MI dataset has 10 thousand items, with 4668 species, and occurrences vary from 1 to 15. The pallet dimensions is set to the size often used in practice: $S^{x}=120$, $S^{y}=100$, and $S^{z}=100$. The results are presented in the Table~\ref{tab:mi_dataset}. It is evident that in both settings of $N_B=15$ and $N_B=20$, our exact AR2L algorithm demonstrates superior performance compared to PCT and RARL across various metrics in the real-world dataset.

\begin{table*}[h]
\caption{The performance of PCT, RARL and exact AR2L evaluated on the Mixed-Item Dataset}
    \footnotesize
    \centering
    \begin{tabular}{ccccccc}
    \toprule[1.2pt]
       \multirow{2}{*}{Methods}  & \multicolumn{3}{c}{$N_B=15$} & \multicolumn{3}{c}{$N_B=20$}  \\
    \cmidrule(r){2-4} \cmidrule(r){5-7} 
        & \scriptsize{$Uti.(\%)$} & \scriptsize{$Std.$} & \scriptsize{$Num.$} & \scriptsize{$Uti.(\%)$} & \scriptsize{$Std.$} & \scriptsize{$Num.$}  \\
    \midrule[0.7pt]
    PCT & 48.3 & 8.5 &	16.8 & 48.7 & 10.1 & 16.9 \\
    RARL & 48.8 & 8.8 & 16.9 & 48.8 & 8.3 & 16.9 \\
    AR2L & 50.2 & 8.5 & 17.4 & 52.9 & 6.5 & 18.3 \\
    \bottomrule[1.2pt]
    \end{tabular}
    \label{tab:mi_dataset}
\end{table*}

\begin{figure*}[h]
\centering
\subfigure[$N_B = 1$]{\label{fig:bsl_disc_nnb1}\includegraphics[width=0.32\textwidth]{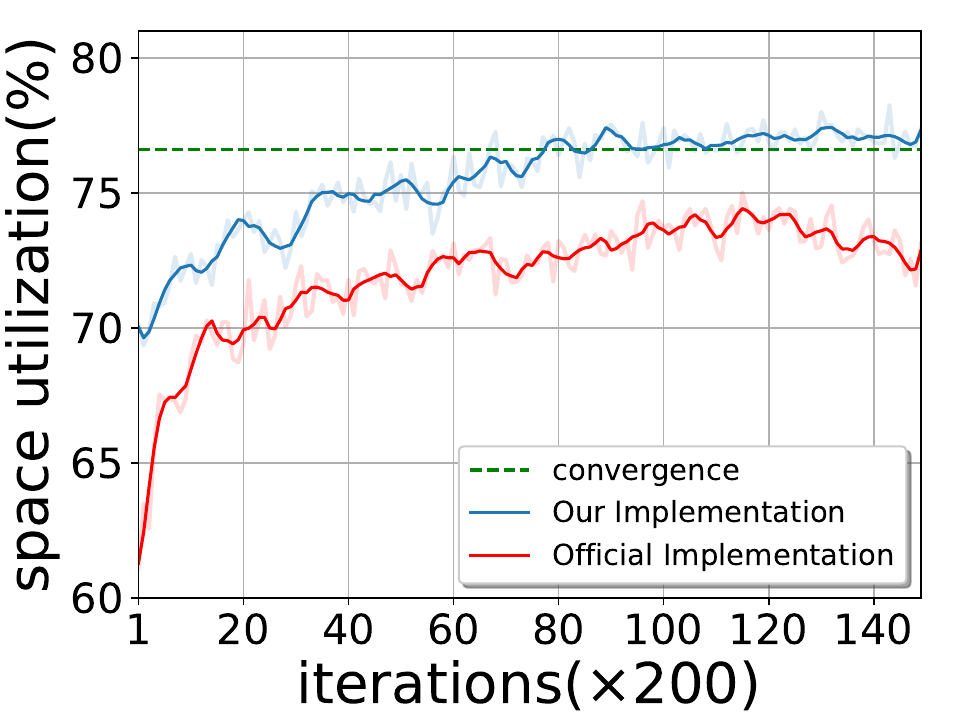}}
\subfigure[$N_B = 5$]{\label{fig:bsl_disc_nnb5}\includegraphics[width=0.32\textwidth]{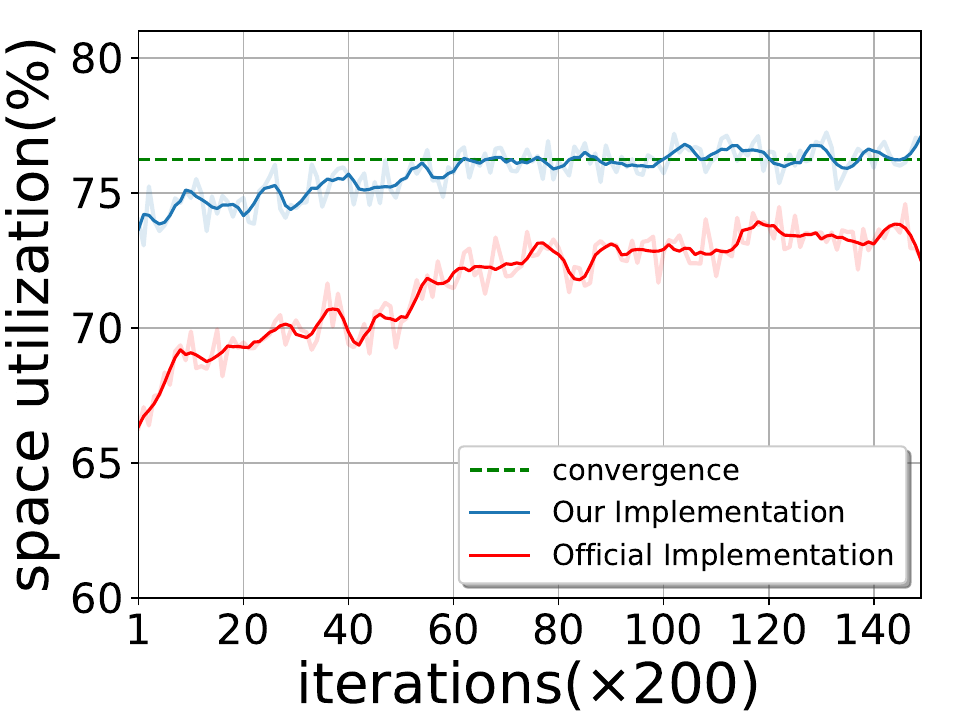}}
\subfigure[$N_B = 10$]{\label{fig:bsl_disc_nnb10}\includegraphics[width=0.32\textwidth]{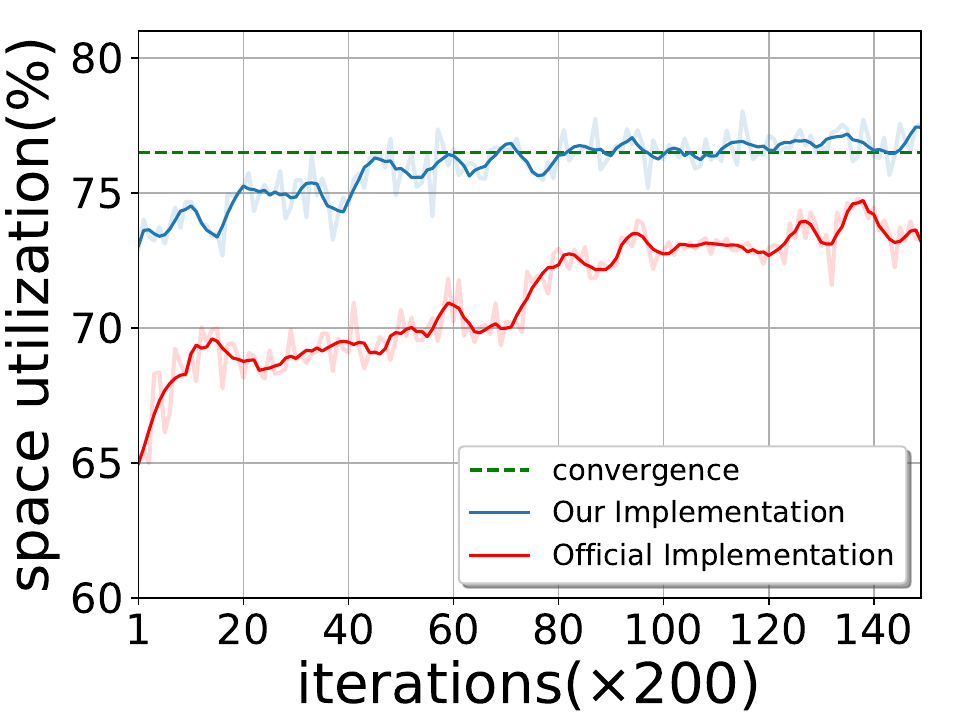}}
\subfigure[$N_B = 15$]{\label{fig:bsl_disc_nnb15}\includegraphics[width=0.32\textwidth]{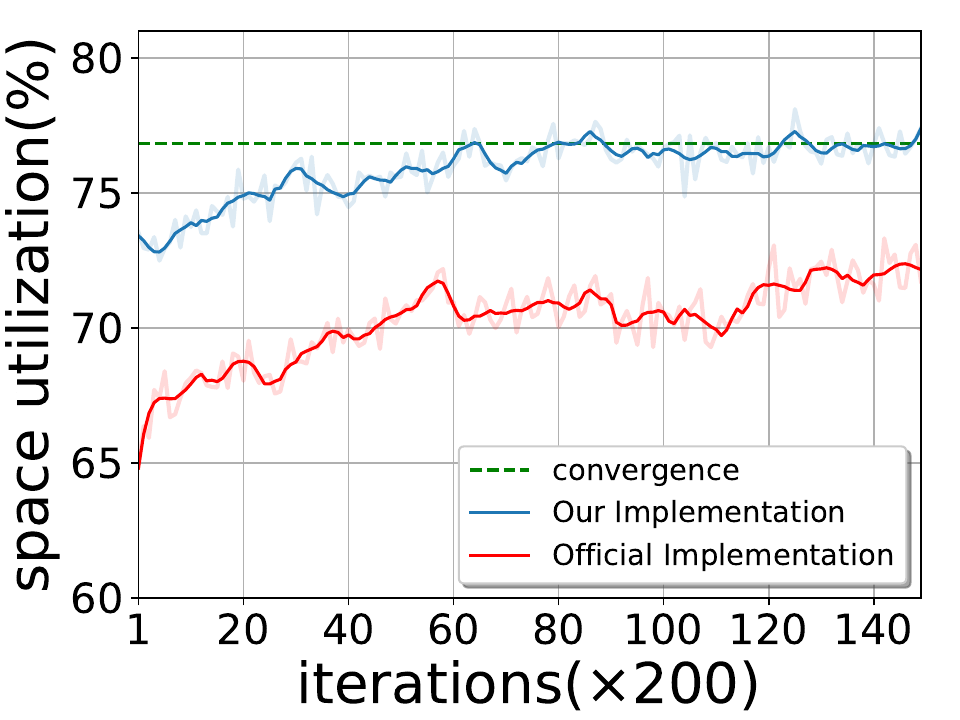}}
\subfigure[$N_B = 20$]{\label{fig:bsl_disc_nnb20}\includegraphics[width=0.32\textwidth]{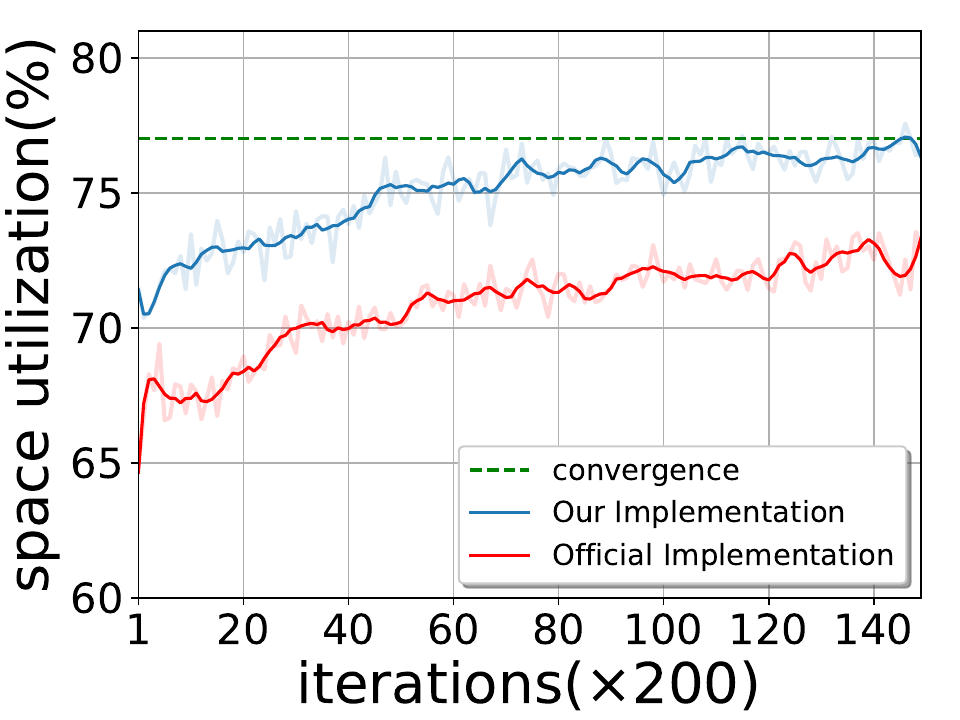}}
\caption{The training curves of our implementation and the official implementation for PCT in the discrete setting.}
\label{fig:offi_comp_disc}
\end{figure*}

\begin{figure*}[h]
\centering
\subfigure[$N_B = 1$]{\label{fig:bsl_cont_nnb1}\includegraphics[width=0.32\textwidth]{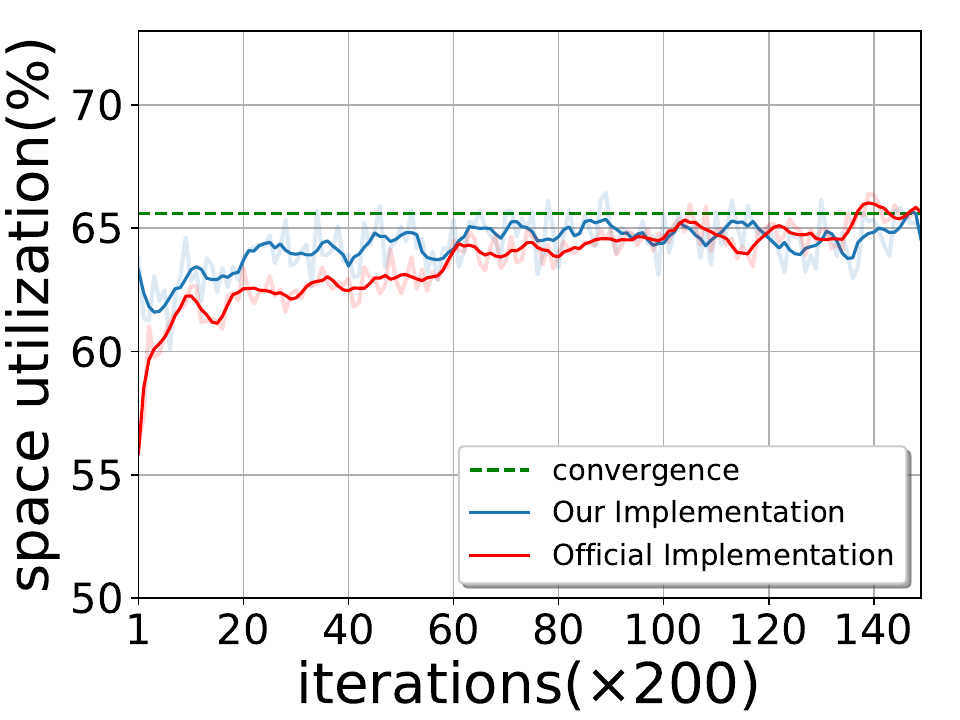}}
\subfigure[$N_B = 5$]{\label{fig:bsl_cont_nnb5}\includegraphics[width=0.32\textwidth]{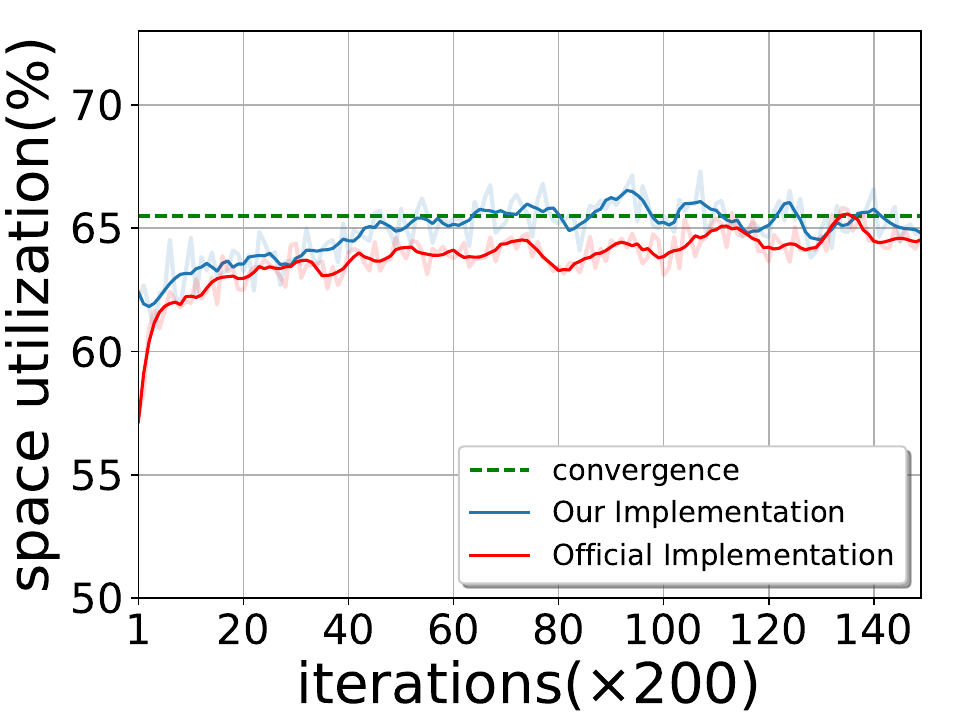}}
\subfigure[$N_B = 10$]{\label{fig:bsl_cont_nnb10}\includegraphics[width=0.32\textwidth]{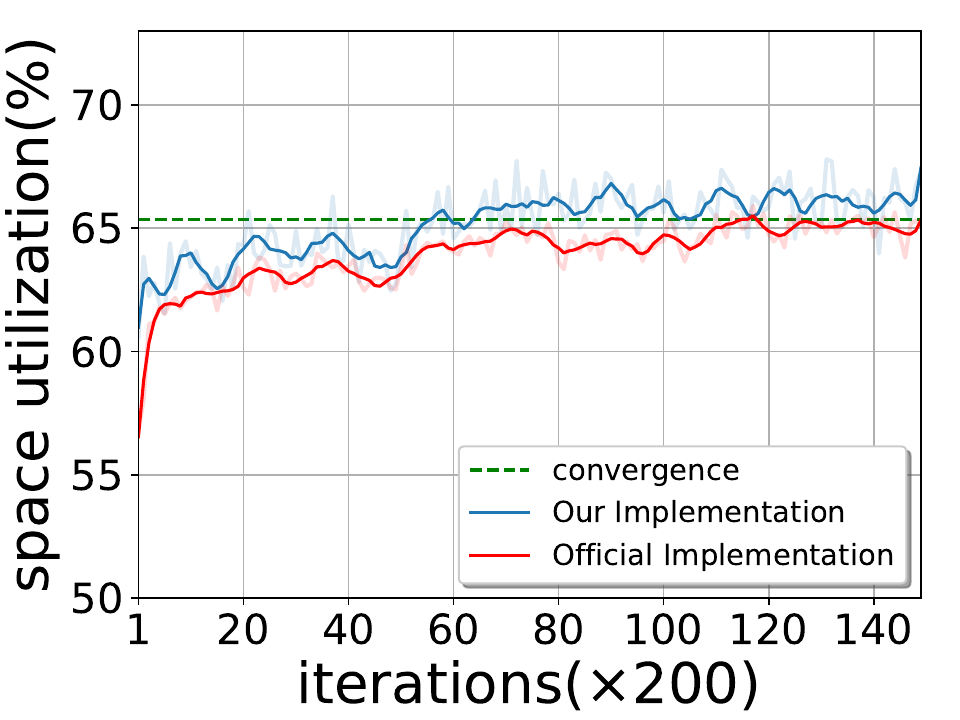}}
\subfigure[$N_B = 15$]{\label{fig:bsl_cont_nnb15}\includegraphics[width=0.32\textwidth]{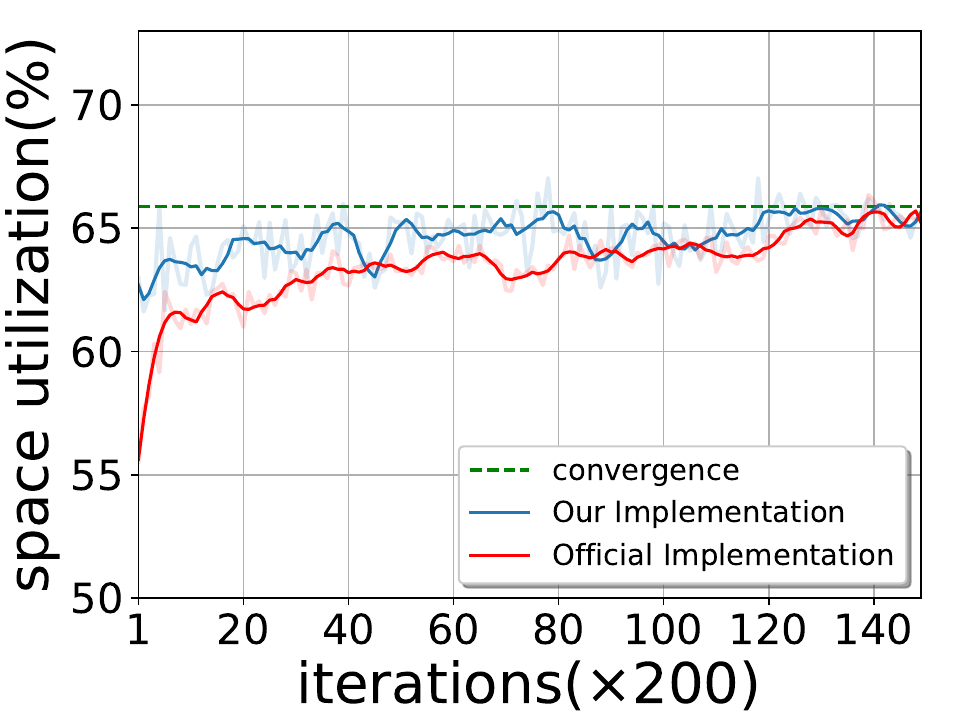}}
\subfigure[$N_B = 20$]{\label{fig:bsl_cont_nnb20}\includegraphics[width=0.32\textwidth]{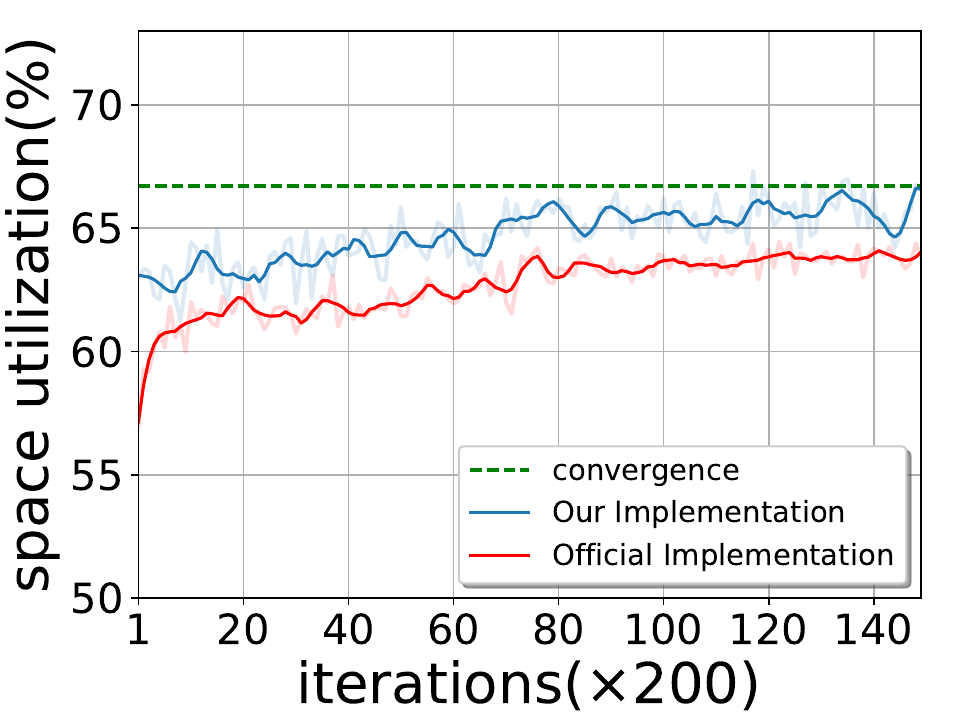}}
\caption{The training curves of our implementation and the official implementation for PCT in the continuous setting.}
\label{fig:offi_comp_cont}
\end{figure*}

\begin{figure*}
    \centering
    \subfigure[ExactAR2L and RARL algorithms in the nominal dynamics with $N_B=15$.]{\label{fig:case_nominal_nnb15}\includegraphics[width=1\textwidth]{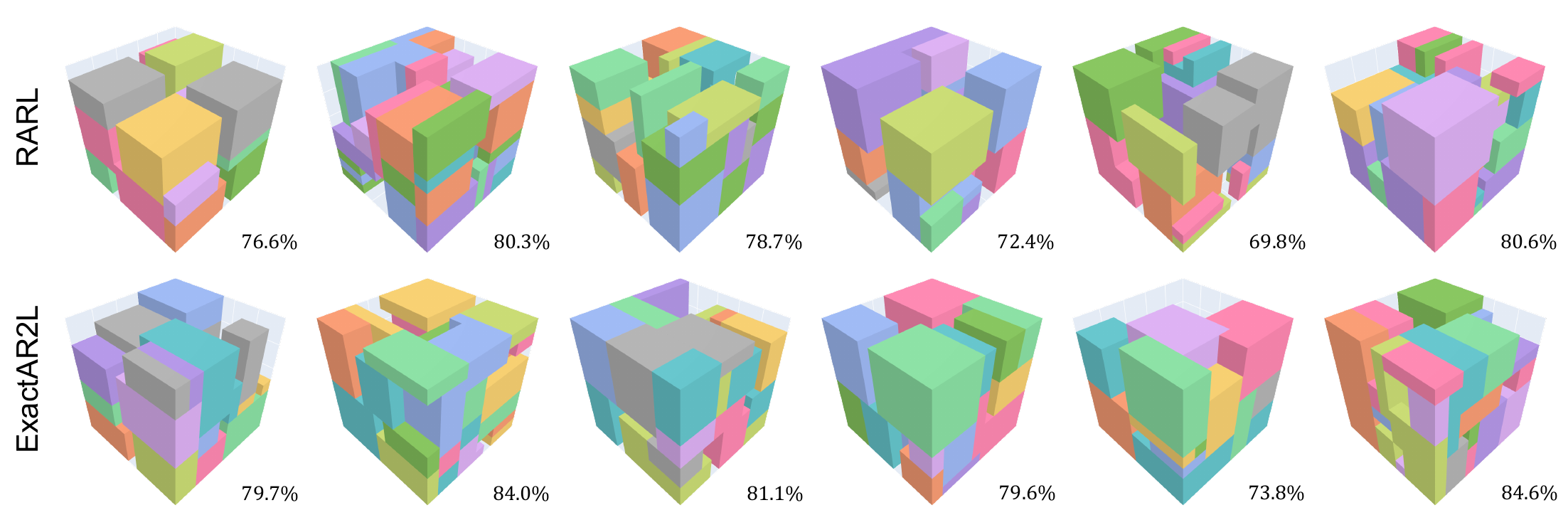}}
    \subfigure[ExactAR2L and RARL algorithms in the nominal dynamics with $N_B=20$.]{\label{fig:case_nominal_nnb20}\includegraphics[width=1\textwidth]{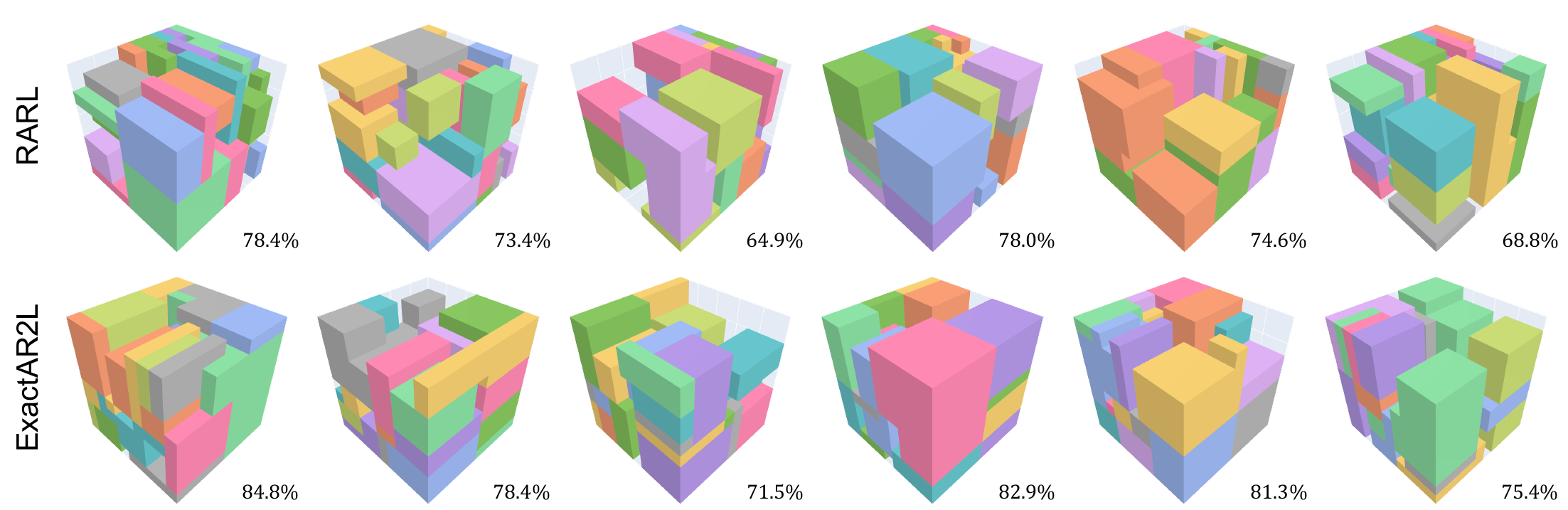}}
    \subfigure[ExactAR2L and PCT algorithms in the worst-case dynamics with $N_B=15$.]{\label{fig:case_worst_nnb15}\includegraphics[width=1\textwidth]{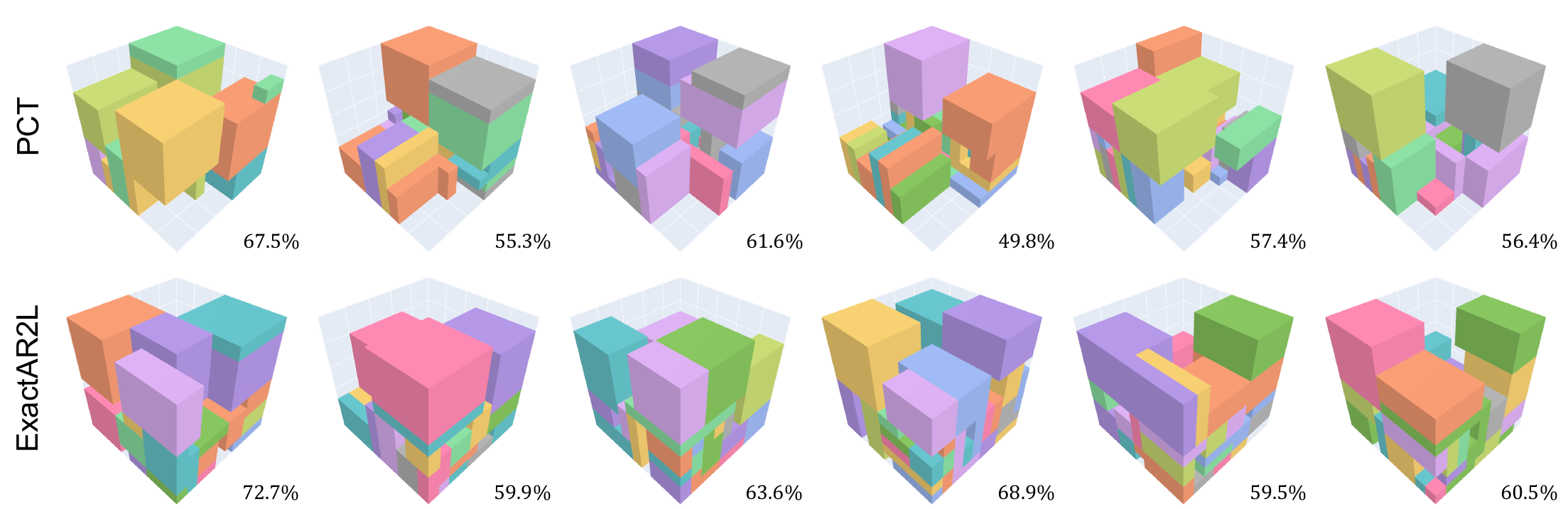}}
    \subfigure[ExactAR2L and PCT algorithms in the worst-case dynamics with $N_B=20$.]{\label{fig:case_worst_nnb20}\includegraphics[width=1\textwidth]{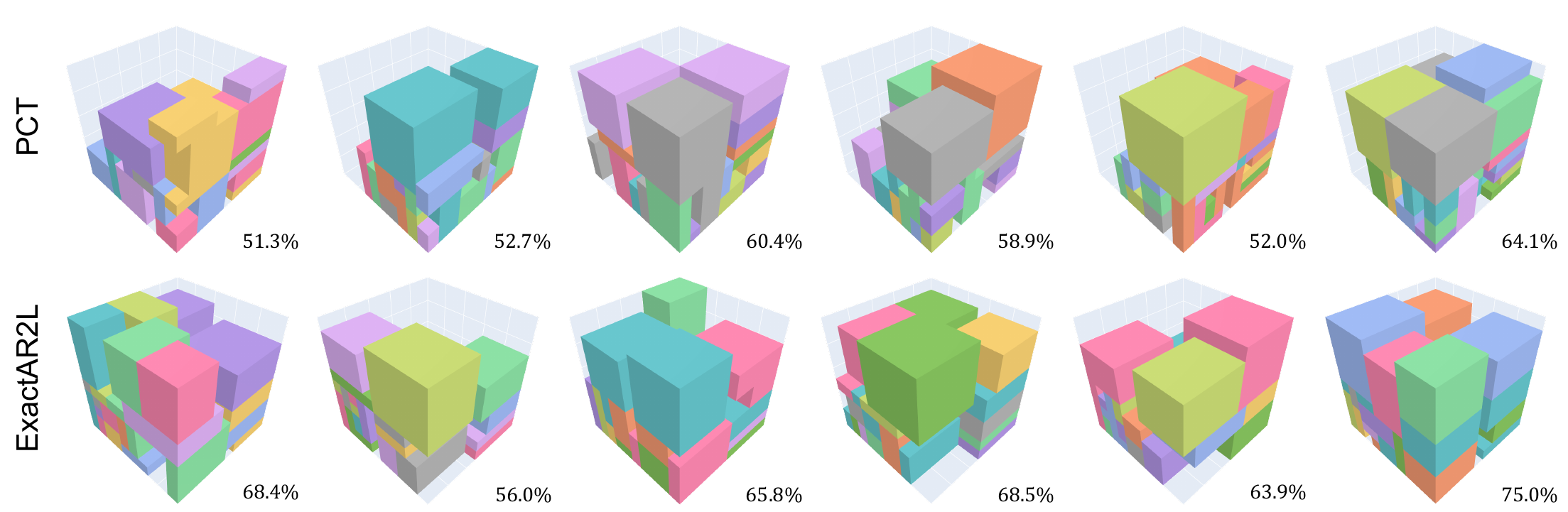}}
    \caption{Visualization results of various methods. The space utilization values for each sub-figure are displayed alongside the respective visualization.}
    \label{fig:case_study}
\end{figure*}

\clearpage
\subsection{Proofs}

\begin{lemma}\label{lemma1} [\cite{pmlr-v70-achiam17a}, Appendix.10.1.1]
Given a policy $\pi(\cdot|s)$, a transition model $P(\cdot|s,a)$, and an initial state distribution $\mu(s)$, the discounted state visitation distribution $d_{\pi, P}(s)$ under policy $\pi$ can be described as:
\begin{equation}
    d_{\pi, P}(s) = (1-\gamma)\mu(s) + \gamma\sum_{s^{\prime},a} d_{\pi, P}(s^{\prime})\pi(a|s^{\prime})P(s|s^{\prime},a)
\end{equation}
\end{lemma}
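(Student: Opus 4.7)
The plan is to start from the standard definition of the discounted state visitation distribution, namely $d_{\pi,P}(s) = (1-\gamma)\sum_{t=0}^{\infty}\gamma^{t}\Pr(s_t = s \mid \mu, \pi, P)$, where the probability is taken over trajectories with $s_0\sim\mu$, $a_t\sim\pi(\cdot|s_t)$, and $s_{t+1}\sim P(\cdot|s_t,a_t)$. From this definition the target identity is essentially a one-step recursion, so the proof reduces to splitting off the $t=0$ term and rewriting the tail via the Markov property.

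First I would isolate the $t=0$ contribution: since $\Pr(s_0 = s) = \mu(s)$, the $t=0$ term contributes exactly $(1-\gamma)\mu(s)$, which already matches the first summand on the right-hand side of the claimed identity. Next, for each $t \geq 1$ I would apply the one-step Markov decomposition
\begin{equation*}
\Pr(s_t = s) = \sum_{s^{\prime},a}\Pr(s_{t-1} = s^{\prime})\,\pi(a|s^{\prime})\,P(s|s^{\prime},a),
\end{equation*}
which follows by conditioning on $(s_{t-1},a_{t-1})$ and using the fact that $\pi$ and $P$ depend only on the current state-action pair. Substituting this into the tail sum and pulling the terms independent of $t$ outside gives $\gamma\sum_{s^{\prime},a}\pi(a|s^{\prime})P(s|s^{\prime},a)\cdot(1-\gamma)\sum_{t=1}^{\infty}\gamma^{t-1}\Pr(s_{t-1}=s^{\prime})$, after factoring out one power of $\gamma$.

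The final step is a re-indexing: setting $t^{\prime} = t-1$, the inner sum becomes $(1-\gamma)\sum_{t^{\prime}=0}^{\infty}\gamma^{t^{\prime}}\Pr(s_{t^{\prime}}=s^{\prime})$, which is precisely $d_{\pi,P}(s^{\prime})$. Combining this with the $t=0$ contribution yields the claimed recursion. I expect no serious obstacle here; the only subtlety is bookkeeping the $(1-\gamma)$ normalization and the shifted index carefully so that the prefactor $\gamma$ (rather than $\gamma(1-\gamma)$ or similar) appears in front of the sum, and verifying that interchanging the finite sum over $(s^{\prime},a)$ with the infinite sum over $t$ is legitimate, which holds by absolute convergence since $\gamma\in[0,1)$ and the probabilities are bounded by one.
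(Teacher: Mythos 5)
Your proof is correct and is exactly the standard derivation: the paper itself does not prove this lemma but only cites it from Achiam et al., and the cited source's argument is precisely your split of the $t=0$ term plus the one-step Markov decomposition and re-indexing of the tail sum. The bookkeeping of the $(1-\gamma)$ normalization and the single factored-out power of $\gamma$ checks out, so there is nothing to add.
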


\begin{theorem}
\label{theorem1}
    The model discrepancy between two models can be described as $d(P^{1} || P^{2}) \triangleq \max_{s,a} D_{TV}(P^{1}(\cdot|s,a)||P^{2}(\cdot|s,a))$. The lower bound for the newly defined objective is derived as follows:
    \begin{equation}
    \label{equ:lower_bound}
         \eta(\pi, P^{o}) + \alpha \eta(\pi, P^{w}) \geq (1+\alpha)\eta(\pi, P^{m}) - \frac{2\gamma|r|_{\mathrm{max}}}{(1-\gamma)^{2}}(d(P^{m} || P^{o}) + \alpha d(P^{m} || P^{w}))
    \end{equation}
\end{theorem}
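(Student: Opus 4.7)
The plan is to reduce the inequality to a single-sided bound on $\eta(\pi,P^1)-\eta(\pi,P^2)$ for an arbitrary pair of dynamics, and then apply that bound twice: once with $(P^1,P^2)=(P^o,P^m)$ and once with $(P^1,P^2)=(P^w,P^m)$. Concretely, observe that the desired inequality is equivalent to
\begin{equation*}
\bigl[\eta(\pi,P^o)-\eta(\pi,P^m)\bigr] + \alpha\bigl[\eta(\pi,P^w)-\eta(\pi,P^m)\bigr] \ \geq \ -\frac{2\gamma|r|_{\max}}{(1-\gamma)^2}\bigl(d(P^m\|P^o)+\alpha\, d(P^m\|P^w)\bigr),
\end{equation*}
so the whole theorem follows once I establish the key lemma that for any two transition models $P^1,P^2$ and any policy $\pi$,
\begin{equation*}
\eta(\pi,P^1)-\eta(\pi,P^2) \ \geq \ -\frac{2\gamma|r|_{\max}}{(1-\gamma)^2}\,d(P^1\|P^2).
\end{equation*}

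First I would rewrite each return in occupancy-measure form, $\eta(\pi,P)=\frac{1}{1-\gamma}\mathbb{E}_{s\sim d_{\pi,P},\,a\sim\pi}[r(s,a)]$. Then the return difference is controlled by the $\ell_1$-distance between the two discounted state visitations, yielding $|\eta(\pi,P^1)-\eta(\pi,P^2)|\leq \frac{|r|_{\max}}{1-\gamma}\,\|d_{\pi,P^1}-d_{\pi,P^2}\|_1$. The main obstacle is bounding this visitation discrepancy by the model discrepancy. Here I would invoke Lemma~\ref{lemma1}: subtracting the two fixed-point equations for $d_{\pi,P^1}$ and $d_{\pi,P^2}$ gives, after a careful rearrangement, a recursion of the form
\begin{equation*}
\|d_{\pi,P^1}-d_{\pi,P^2}\|_1 \ \leq \ \gamma\,\|d_{\pi,P^1}-d_{\pi,P^2}\|_1 + 2\gamma\, d(P^1\|P^2),
\end{equation*}
where the factor of $2$ comes from $\sum_{s}|P^1(s|s',a)-P^2(s|s',a)|=2D_{TV}(P^1(\cdot|s',a)\|P^2(\cdot|s',a))\leq 2d(P^1\|P^2)$, and the $\gamma$ on the left arises because the discrepancy in the transition drives the discrepancy in the next-step visitation. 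Solving this recursion gives $\|d_{\pi,P^1}-d_{\pi,P^2}\|_1\leq \frac{2\gamma}{1-\gamma}d(P^1\|P^2)$, which when plugged into the previous display produces the key lemma.

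Finally, I would apply the key lemma to $(P^o,P^m)$ and to $(P^w,P^m)$, exploiting the symmetry $d(P^1\|P^2)=d(P^2\|P^1)$ of the TV-based discrepancy so that the right-hand side involves $d(P^m\|P^o)$ and $d(P^m\|P^w)$ as written in the theorem. Adding the first inequality to $\alpha$ times the second then gives exactly the form needed, and rearranging $\eta(\pi,P^o)+\alpha\eta(\pi,P^w)$ against $(1+\alpha)\eta(\pi,P^m)$ completes the proof. I expect the only non-routine step to be the recursive bound on $\|d_{\pi,P^1}-d_{\pi,P^2}\|_1$; everything else is standard algebra and a direct appeal to Lemma~\ref{lemma1}.
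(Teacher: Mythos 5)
Your proof is correct and reaches exactly the paper's bound, but you prove the central two-model perturbation lemma by a different (dual) route. Both you and the paper reduce the theorem to the single estimate $\lvert \eta(\pi,P^1)-\eta(\pi,P^2)\rvert \le \frac{2\gamma|r|_{\max}}{(1-\gamma)^2}\,d(P^1\|P^2)$ and then apply it to the pairs $(P^o,P^m)$ and $(P^w,P^m)$ with weights $1$ and $\alpha$; that skeleton is identical. Where you diverge is inside the lemma: the paper works on the \emph{value-function} side, setting $\Delta V(s)=V^{P^1}(s)-V^{P^2}(s)$, expanding one step of the Bellman equation, bounding the model-mismatch term by $\frac{2|r|_{\max}}{1-\gamma}D_{TV}$, and then using Lemma~\ref{lemma1} to telescope $\mathbb{E}_{s\sim d_{\pi,P^1}}[\Delta V(s)]$ down to the initial-state expectation. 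You instead work on the \emph{occupancy-measure} side: write $\eta(\pi,P)=\frac{1}{1-\gamma}\mathbb{E}_{d_{\pi,P},\pi}[r]$, subtract the two fixed-point equations of Lemma~\ref{lemma1}, and (via the standard add-and-subtract of $d_{\pi,P^2}(s')\pi(a|s')P^1(s|s',a)$) obtain $\|d_{\pi,P^1}-d_{\pi,P^2}\|_1\le\gamma\|d_{\pi,P^1}-d_{\pi,P^2}\|_1+2\gamma\,d(P^1\|P^2)$, which solves to $\frac{2\gamma}{1-\gamma}d(P^1\|P^2)$ and yields the same constant $\frac{2\gamma|r|_{\max}}{(1-\gamma)^2}$. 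The two arguments are Lagrangian-dual views of the same simulation lemma and buy essentially the same thing here; your occupancy version isolates the reward dependence into a single $|r|_{\max}$ factor at the end and makes the source of each factor ($2$ from TV, one $\frac{1}{1-\gamma}$ from the reward normalization, one from solving the recursion) slightly more transparent, while the paper's value-function version generalizes more directly if one later wants state-dependent or advantage-based refinements. One cosmetic note: your final combination correctly carries the constant $\frac{2\gamma|r|_{\max}}{(1-\gamma)^2}$ into the two one-sided inequalities, which the paper's penultimate display omits (evidently a typo there, since its conclusion restores it).
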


\begin{proof}
Given two dynamics models $P^1(\cdot|s,a)$ and $P^2(\cdot|s,a)$, the absolute value of the difference of returns of a policy $\pi$ can be represented as
\begin{equation}
    \lvert \eta(\pi, P^1) - \eta(\pi, P^2)\rvert = |\mathrm{E}_{s \sim \mu(s)} [V^{P^1}(s) - V^{P^2}(s)] | \leq \mathrm{E}_{s \sim \mu(s)} |V^{P^1}(s) - V^{P^2}(s)|,
\end{equation}

Here, we set $\Delta V(s) = V^{P^1}(s) - V^{P^2}(s)$, and it can be transformed as:
    \begin{equation}
    \label{equ:delta_Vs}
    \begin{split}
        \Delta V(s) &= \mathrm{E}_{a\sim\pi} [r(s, a) + \gamma \mathrm{E}_{s^{\prime}\sim P^1} (V^{P^1}(s^{\prime}))] - \mathrm{E}_{a\sim\pi} [r(s, a) + \gamma \mathrm{E}_{s^{\prime} \sim P^2} (V^{P^2}(s^{\prime}))] \\
        &=\gamma\underbrace{\mathrm{E}_{a \sim \pi}\mathrm{E}_{s^{\prime} \sim P^1} (V^{P^1}(s^{\prime})) -\gamma\mathrm{E}_{a \sim \pi}\mathrm{E}_{s^{\prime} \sim P^2} (V^{P^2}(s^{\prime}))}_{\textcircled{1}}
    \end{split}
    \end{equation}

The term $\textcircled{1}$ can be firstly written as:
    \begin{equation}
    \label{equ:transform_term1}
    \small
    \begin{split}
        &\mathrm{E}_{a \sim \pi}\mathrm{E}_{s^{\prime}\sim P^1} (V^{P^1}(s^{\prime})) - \mathrm{E}_{a \sim \pi}\mathrm{E}_{s^{\prime}\sim P^2} (V^{P^2}(s^{\prime})) \\
        =&\mathrm{E}_{a \sim \pi}\mathrm{E}_{s^{\prime}\sim P^1} (V^{P^1}(s^{\prime})) - \mathrm{E}_{a \sim \pi}\mathrm{E}_{s^{\prime}\sim P^1} (V^{P^2}(s^{\prime}))
        +\mathrm{E}_{a \sim \pi}\mathrm{E}_{s^{\prime}\sim P^1} (V^{P^2}(s^{\prime})) - \mathrm{E}_{a \sim \pi}\mathrm{E}_{s^{\prime}\sim P^{2}} (V^{P^{2}}(s^{\prime})) \\
        =&\mathrm{E}_{a \sim \pi}\mathrm{E}_{s^{\prime}\sim P^1} \Delta V(s^{\prime}) + \mathrm{E}_{a \sim \pi}\sum_{s^{\prime}}(P^1(s^{\prime}|s,a) - P^2(s^{\prime}|s,a))V^{P^2}(s^{\prime})\\
    \end{split}
    \end{equation}

We then derive an upper bound of the second term in Equation~\ref{equ:transform_term1} as:
    \begin{equation}
    \label{equ:bound_term1}
    \begin{split}
    &\mathrm{E}_{a \sim \pi}\sum_{s^{\prime}}(P^1(s^{\prime}|s,a) - P^2(s^{\prime}|s,a))V^{P^2}(s^{\prime}) \\
    \leq &\max_{s^{\prime}} V^{P^2} (s^{\prime}) \mathrm{E}_{a \sim \pi}
    \sum_{s^{\prime}}(P^1(s^{\prime}|s,a) - P^2(s^{\prime}|s,a)) \\
    \leq &\frac{2|r|_{\mathrm{max}}}{1-\gamma} \frac{1}{2} \mathrm{E}_{a \sim \pi} \sum_{s^{\prime}}(P^1(s^{\prime}|s,a) - P^2(s^{\prime}|s,a)) \\
    = & \frac{2|r|_{\mathrm{max}}}{1-\gamma} \mathrm{E}_{a \sim \pi} D_{TV}(P^{1}(\cdot|s,a)||P^{2}(\cdot|s,a)).
    \end{split}
    \end{equation}

Next, Equation~\ref{equ:transform_term1} and~\ref{equ:bound_term1} can be plugged into Equation~\ref{equ:delta_Vs}, written as:
    \begin{equation}
        \Delta(s) \leq \gamma\mathrm{E}_{a \sim \pi}\mathrm{E}_{s^{\prime}\sim P^1} \Delta V(s^{\prime}) + \frac{2\gamma|r|_{\mathrm{max}}}{1-\gamma} \mathrm{E}_{a \sim \pi} D_{TV}(P^{1}(\cdot|s,a)||P^{2}(\cdot|s,a))
    \end{equation}

Thus, the expectation of $\Delta(s)$ can be further upper bounded by:
    \begin{equation}
    \label{equ:exp_delta}
    \begin{split}
        &\mathrm{E}_{s \sim d_{\pi, P^1}} \Delta(s) \\
        \leq &\mathrm{E}_{s \sim d_{\pi, P^1}} [\gamma\mathrm{E}_{a \sim \pi}\mathrm{E}_{s^{\prime}\sim P^1} \Delta V(s^{\prime}) + \frac{2\gamma|r|_{\mathrm{max}}}{1-\gamma} \mathrm{E}_{a \sim \pi} D_{TV}(P^{1}(\cdot|s,a)||P^{2}(\cdot|s,a))] \\
        = &\gamma\mathrm{E}_{s \sim d_{\pi, P^1}} [\mathrm{E}_{a \sim \pi}\mathrm{E}_{s^{\prime}\sim P^1} \Delta V(s^{\prime})] + \frac{2\gamma|r|_{\mathrm{max}}}{1-\gamma} \mathrm{E}_{s \sim d_{\pi, P^1}} [\mathrm{E}_{a \sim \pi} D_{TV}(P^{1}(\cdot|s,a)||P^{2}(\cdot|s,a))] \\
        \leq &\gamma\mathrm{E}_{s \sim d_{\pi, P^1}} [\mathrm{E}_{a \sim \pi}\mathrm{E}_{s^{\prime}\sim P^1} \Delta V(s^{\prime})] + \frac{2\gamma|r|_{\mathrm{max}}}{1-\gamma} \max_{s,a}D_{TV}(P^{1}(\cdot|s,a)||P^{2}(\cdot|s,a))
    \end{split}
    \end{equation}

According to Lemma~\ref{lemma1}, Inequality~\ref{equ:exp_delta} can be simplified as:
\begin{equation}
\small
    \begin{split}
        \mathrm{E}_{s \sim d_{\pi, P^1}} \Delta(s) 
        \leq  \frac{2\gamma|r|_{\mathrm{max}}}{1-\gamma} \max_{s,a}D_{TV}(P^{1}(\cdot|s,a)||P^{2}(\cdot|s,a)) + \sum_{s^{\prime}}[d_{\pi, P^1}(s^{\prime})-(1-\gamma)\mu(s^{\prime})]\Delta V(s^{\prime})
    \end{split}
\end{equation}
Consequently, $\mathrm{E}_{s \sim d_{\pi, P^1}} \Delta(s)$ can be eliminated, and we can obtain:
\begin{equation}
\small
    \begin{split}
        |\eta(\pi, P^1) - \eta(\pi, P^2)|  \leq \mathrm{E}_{s \sim \mu(s)} |V^{P^1}(s) - V^{P^2}(s)| \leq \frac{2\gamma|r|_{\mathrm{max}}}{(1-\gamma)^2} \max_{s,a}D_{TV}(P^{1}(\cdot|s,a)||P^{2}(\cdot|s,a))
    \end{split}
\end{equation}

For the newly defined objective function, we can introduce any unknown dynamics $P^m$ to obtain:

\begin{equation}
\begin{split}
    &\eta(\pi, P^{o}) - \eta(\pi, P^{m}) \geq - \max_{s,a}D_{TV}(P^{o}(\cdot|s,a)||P^{m}(\cdot|s,a)) \\
    &\eta(\pi, P^{w}) - \eta(\pi, P^{m}) \geq - \max_{s,a}D_{TV}(P^{w}(\cdot|s,a)||P^{m}(\cdot|s,a))
\end{split}
\end{equation}

Thus, the lower bound of the objective function can be derived as:
\begin{equation}
    \eta(\pi, P^{o}) + \alpha \eta(\pi, P^{w}) \geq (1+\alpha)\eta(\pi, P^{m}) - \frac{2\gamma|r|_{\mathrm{max}}}{(1-\gamma)^{2}}(d(P^{m} || P^{o}) + \alpha d(P^{m} || P^{w})),
\end{equation}
where $d(P^{1} || P^{2}) \triangleq \max_{s,a} D_{TV}(P^{1}(\cdot|s,a)||P^{2}(\cdot|s,a))$.

\end{proof}

\begin{theorem}
\label{theorem2}
    For any given policy $\pi$ and its corresponding worst-case dynamics $P^{w}$, the adjust Bellman operator $\mathcal{T}_{a}$ is a contraction whose fixed point is $V_{a}^{\pi}$. The operator equation $\mathcal{T}_{a}V_{a}^{\pi} = V_{a}^{\pi}$ has a unique solution.
\end{theorem}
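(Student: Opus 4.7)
The plan is to establish that $\mathcal{T}_a$ is a $\gamma$-contraction in the sup-norm on the space of bounded real-valued functions on $\mathcal{S}$, then invoke the Banach fixed point theorem to get existence and uniqueness of the fixed point, and finally verify the fixed point coincides with $V_a^\pi = \sup_{P^m \in \mathcal{P}^m} V_a^{\pi, P^m}$.

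First I would pick any two bounded functions $V_1, V_2 : \mathcal{S} \to \mathbb{R}$ and write
$$\mathcal{T}_a V_1(s) - \mathcal{T}_a V_2(s) = \gamma \mathbb{E}_{a \sim \pi}\Bigl[\sup_{P \in \mathcal{P}_{s,a}^m} \mathbb{E}_{s' \sim P}[V_1(s')] - \sup_{P \in \mathcal{P}_{s,a}^m} \mathbb{E}_{s' \sim P}[V_2(s')]\Bigr],$$
since the reward terms cancel. The key elementary inequality $|\sup_x f(x) - \sup_x g(x)| \leq \sup_x |f(x) - g(x)|$ lets me bound each inner difference by $\sup_{P \in \mathcal{P}_{s,a}^m} |\mathbb{E}_{s' \sim P}[V_1(s') - V_2(s')]|$, which in turn is at most $\|V_1 - V_2\|_\infty$ since $P$ is a probability measure. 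Taking $\sup_s$ of the absolute value yields $\|\mathcal{T}_a V_1 - \mathcal{T}_a V_2\|_\infty \leq \gamma \|V_1 - V_2\|_\infty$, so $\mathcal{T}_a$ is a contraction on the Banach space of bounded functions with the sup-norm. Banach's fixed point theorem then gives a unique fixed point $V^\star$ and guarantees $\mathcal{T}_a^n V \to V^\star$ for any starting $V$.

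It remains to identify $V^\star$ with $V_a^\pi$. The rectangularity of $\mathcal{P}^m$, which is stated right after Equation~\ref{equ:mix_set} and which mirrors the standard rectangularity used in robust MDPs \citep{iyengar2005robust}, lets the state-wise supremum be exchanged with the product-measure supremum: for every $V$,
$$\sup_{P^m \in \mathcal{P}^m} \mathbb{E}_{a \sim \pi, s' \sim P^m_{s,a}}[V(s')] = \mathbb{E}_{a \sim \pi}\Bigl[\sup_{P \in \mathcal{P}^m_{s,a}} \mathbb{E}_{s' \sim P}[V(s')]\Bigr].$$
This means $\mathcal{T}_a V(s) = \sup_{P^m \in \mathcal{P}^m}\{\mathbb{E}_{a\sim \pi}[r(s,a)] + \gamma \mathbb{E}_{a\sim\pi, s' \sim P^m_{s,a}}[V(s')]\}$, which is the Bellman-style envelope whose unique fixed point is precisely $V_a^\pi$: taking $V = V_a^\pi$ and using that the optimizing $P^{m,\star}$ is attained state-by-state by rectangularity shows $\mathcal{T}_a V_a^\pi = V_a^\pi$, and uniqueness from contraction finishes the argument.

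The main obstacle I anticipate is the rectangularity/exchange-of-suprema step: one must justify carefully that the state-wise optima can be glued into a single feasible $P^m \in \mathcal{P}^m$, which is exactly what the Cartesian-product definition of $\mathcal{P}^m$ in Equation~\ref{equ:mix_set} is designed to ensure. The contraction computation itself is routine once the sign of $\gamma$ and the $\sup$-vs-$\sup$ inequality are handled correctly; likewise, measurability and attainment of the inner supremum (needed to make $V_a^\pi$ well-defined pointwise) are standard given that each $\mathcal{P}^m_{s,a}$ is a closed subset of the probability simplex under the TV-induced topology.
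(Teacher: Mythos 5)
Your proposal is correct and follows essentially the same route as the paper's own proof: a sup-norm contraction bound obtained from the elementary inequality $|\sup_x f(x)-\sup_x g(x)|\le\sup_x|f(x)-g(x)|$ (the paper instantiates this with the two maximizing kernels $P^{m,1}_{s,a},P^{m,2}_{s,a}$), followed by uniqueness of the fixed point. You go slightly further than the paper by explicitly invoking Banach's theorem for existence and by spelling out the rectangularity argument identifying the fixed point with $V_a^\pi$, both of which the paper leaves implicit.
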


\begin{proof}
For any given policy $\pi$ and its corresponding worst-case dynamics $P^{w}$, the adjustable robust Bellman operator $\mathcal{T}_{a}$ is defined as:
\begin{equation}
\label{equ:adjust_value}
\mathcal{T}_{a} V_{a}^{\pi}(s) = \mathbb{E}_{a\sim\pi}[r(s,a) + \gamma \sup_{P_{s,a}^{m}\in\mathcal{P}_{s,a}^{m}}\mathbb{E}_{s^{\prime}\sim P_{s,a}^{m}}[V_{a}^{\pi}(s^{\prime})]],
\end{equation}
where the uncertainty set is defined as $\mathcal{P}^{m} = \otimes_{(s,a) \in \mathcal{S}\times\mathcal{A}}\mathcal{P}_{s,a}^{m}; \; \mathcal{P}_{s,a}^{m} = \{ P_{s,a} \in \Delta(\mathcal{S}): D_{TV}(P_{s,a}||P^{o}_{s,a}) + \alpha D_{TV}(P_{s,a}||P^{w}_{s,a}) \leq \rho^{\prime} \}$.

For any two adjustable robust value functions $V_{1}$ and $V_{2}$, we have:
\begin{equation}
\label{equ:contraction}
\begin{split}
    &|| \mathcal{T}_{a} V_{1} - \mathcal{T}_{a} V_{2} ||_{\infty} \\
    &= \max_{s} |\mathbb{E}_{a\sim\pi}[r(s,a) + \gamma \sup_{P_{s,a}^{m}\in\mathcal{P}_{s,a}^{m}}\mathbb{E}_{s^{\prime}\sim P_{s,a}^{m}}[V_{1}(s^{\prime})] - r(s,a) - \gamma \sup_{P_{s,a}^{m}\in\mathcal{P}_{s,a}^{m}}\mathbb{E}_{s^{\prime}\sim P_{s,a}^{m}}[V_{2}(s^{\prime})]]| \\
    &=\gamma\max_{s} | \mathbb{E}_{a\sim\pi}[ \sup_{P_{s,a}^{m}\in\mathcal{P}_{s,a}^{m}}\mathbb{E}_{s^{\prime}\sim P_{s,a}^{m}}[V_{1}(s^{\prime})] - \sup_{P_{s,a}^{m}\in\mathcal{P}_{s,a}^{m}}\mathbb{E}_{s^{\prime}\sim P_{s,a}^{m}}[V_{2}(s^{\prime})] ] | \\
    &\leq\gamma\max_{s} \mathbb{E}_{a\sim\pi} |\sup_{P_{s,a}^{m}\in\mathcal{P}_{s,a}^{m}}\mathbb{E}_{s^{\prime}\sim P_{s,a}^{m}}[V_{1}(s^{\prime})] - \sup_{P_{s,a}^{m}\in\mathcal{P}_{s,a}^{m}}\mathbb{E}_{s^{\prime}\sim P_{s,a}^{m}}[V_{2}(s^{\prime})]| \\
    &=\gamma\max_{s} \mathbb{E}_{a\sim\pi} |\mathbb{E}_{s^{\prime}\sim P_{s,a}^{m, 1}}[V_{1}(s^{\prime})] - \mathbb{E}_{s^{\prime}\sim P_{s,a}^{m, 2}}[V_{2}(s^{\prime})]| \\
    &\leq \gamma\max_{s} \mathbb{E}_{a\sim\pi} | \mathbb{E}_{s^{\prime}\sim P_{s,a}^{m, 1}}[V_{1}(s^{\prime}) - V_{2}(s^{\prime})] | \\
    &\leq \gamma\max_{s} \mathbb{E}_{a\sim\pi}\mathbb{E}_{s^{\prime}\sim P_{s,a}^{m, 1}} | V_{1}(s^{\prime}) - V_{2}(s^{\prime}) |  \\
    &\leq \gamma\max_{s} \mathbb{E}_{a\sim\pi} \max_{s}| V_{1}(s^{\prime}) - V_{2}(s^{\prime}) | \\
    &=\gamma\max_{s} \mathbb{E}_{a\sim\pi} || V_{1} - V_{2} ||_{\infty} \\
    &=\gamma|| V_{1} - V_{2} ||_{\infty},
\end{split}
\end{equation}
where $P_{s,a}^{m, 1} = \arg\sup_{P_{s,a}^{m}}\mathbb{E}_{s \sim P_{s,a}^{m}}V_{1}(s)$ and $P_{s,a}^{m, 2} = \arg\sup_{P_{s,a}^{m}}\mathbb{E}_{s \sim P_{s,a}^{m}}V_{2}(s)$.

Let us assume the existence of two different fixed points $V_{1}^{\ast}$ and $V_{2}^{\ast}$ for the adjustable Bellman operator. As both of these are fixed points, the following equality holds:
\begin{equation}
\label{equ:unique_1}
    || V_{1}^{\ast} - V_{2}^{\ast} ||_{\infty} = || \mathcal{T}_{a}V_{1}^{\ast} - \mathcal{T}_{a}V_{2}^{\ast} ||_{\infty}
\end{equation}
According to the proof in~\ref{equ:contraction}, the following inequality holds:
\begin{equation}
\label{equ:unique_2}
    || \mathcal{T}_{a}V_{1}^{\ast} - \mathcal{T}_{a}V_{2}^{\ast} ||_{\infty} \leq \gamma || V_{1}^{\ast} - V_{2}^{\ast} ||_{\infty}
\end{equation}
Thus, for both Equation~\ref{equ:unique_1} and Equation~\ref{equ:unique_2} to strictly hold true simultaneously, it must be the case that $V_{1}^{\ast}$ is equal to $V_{2}^{\ast}$.

\end{proof}

\begin{definition}
Let $f: (0,\infty) \rightarrow \mathbb{R}$ be a convex function with $f(1)=1$. Let $P$ and $Q$ be two probability distributions on a measurable space. Then, the $f$-divergence is defined as
\begin{equation}
    D_{f}(P||Q) = \mathbb{E}_{Q}[f(\frac{\mathrm{d}Q}{\mathrm{d}P})],
\end{equation}
where $\frac{\mathrm{d}P}{\mathrm{d}Q}$ is a Radon-Nikodym derivative.
\end{definition}

\begin{proposition}
    The policy evaluation of AR2L can be formulated as a constrained optimization problem. The objective is to maximize the value function over the uncertainty set $\mathcal{P}^{m}$, subject to the constraint defined in $\mathcal{P}^{m}$. By representing the TV distance using the $f$-divergence~\citep{shapiro2017distributionally}, the adjustable robust Bellman operator $\mathcal{T}_{a}$ can be equivalently written as
    \begin{equation}
    \label{equ:approx_value}
    \begin{split}
        \mathcal{T}_{a}V_{a}^{\pi}(s) = \mathbb{E}_{a\sim\pi}[r(s,a) + \frac{\gamma}{1+\alpha}
        &\inf_{\lambda, \mu_{1}, \mu_{2}, \mu}(\mathbb{E}_{s^{\prime}\sim P_{s,a}^{o}}[V_{a}^{\pi}(s^{\prime})-\mu_{1}(s^{\prime})]_{+} + \\ &\alpha\mathbb{E}_{s^{\prime}\sim P_{s,a}^{w}}[V_{a}^{\pi}(s^{\prime})-\mu_{2}(s^{\prime})]_{+}
        + \mu + \lambda\rho(1+\alpha))],
    \end{split}
    \end{equation}
    where $[x]_{+}$ denotes $\max\{x, 0\}$, $\mu_{1}(s)$ and $\mu_{2}(s)$ are Lagrangian multipliers for each $s\in\mathcal{S}$; $\mu = \mu_{1}(s) + \alpha\mu_{2}(s)$ holds for each $s\in\mathcal{S}$, and $\lambda=\max_{s}\{ V_{a}^{\pi}(s)-\mu_{1}(s), V_{a}^{\pi}(s)-\mu_{2}(s), 0 \}$.
\end{proposition}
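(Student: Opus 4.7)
The plan is to start from the inner maximization problem embedded in $\mathcal{T}_{a}$ and convert it into the displayed dual form by following the Lagrangian/conjugate-duality recipe that \citet{ho2022robust} and \citet{panaganti2022robust} use for the RfMDP operator, suitably generalized to handle the two-center constraint in $\mathcal{P}^{m}_{s,a}$. Concretely, for fixed $(s,a)$ and fixed value function $V=V_{a}^{\pi}$, I would write
\begin{equation*}
\sup_{P_{s,a}^{m}\in\mathcal{P}_{s,a}^{m}}\;\mathbb{E}_{s'\sim P_{s,a}^{m}}[V(s')] \;=\; \sup_{q\in\Delta(\mathcal{S})}\;\bigl\langle q,V\bigr\rangle \quad\text{s.t.}\quad D_{TV}(q\|P_{s,a}^{o})+\alpha D_{TV}(q\|P_{s,a}^{w})\le\rho',
\end{equation*}
where $\langle q,V\rangle=\sum_{s'}q(s')V(s')$. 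I would then represent each TV distance through its $f$-divergence form ($f(t)=\tfrac{1}{2}|t-1|$) so that the constraint becomes jointly convex in $q$, rendering the primal a convex program. Slater's condition is trivially satisfied by $q=P_{s,a}^{o}$ for any $\rho'>0$, so strong duality holds.

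Next, I would attach a single multiplier $\lambda\ge 0$ to the combined divergence constraint and form the Lagrangian. Because TV distance is the support function of the set $\{g:\|g\|_{\infty}\le 1/2,\ \int g\,dP=0\}$, I would invoke the standard variational identity
\begin{equation*}
D_{TV}(q\|P) \;=\; \sup_{\|g\|_{\infty}\le 1/2}\Bigl(\mathbb{E}_{q}[g]-\mathbb{E}_{P}[g]\Bigr),
\end{equation*}
and dualize each of the two TV terms separately, introducing auxiliary potentials that play the role of the multipliers $\mu_{1}(s')$ and $\mu_{2}(s')$ in the statement; the normalization constraint on $q$ contributes the scalar $\mu$. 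Computing the inner supremum over $q$ state-by-state produces positive-part thresholding, which is exactly where the $[V-\mu_{1}]_{+}$ and $[V-\mu_{2}]_{+}$ terms come from. I expect the $\tfrac{1}{1+\alpha}$ prefactor and the $\rho(1+\alpha)$ term to fall out naturally after rescaling $\lambda$ so that the combined budget $\rho'=\rho(1+\alpha)$ matches the normalization convention used elsewhere in the paper.

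The key algebraic step, and the one that I expect to be the main obstacle, is correctly identifying the pointwise coupling between the two multipliers and the scalar $\lambda$. Specifically, after dualization the inner problem in $q(s')$ has the form $\max_{q(s')\ge 0}\,q(s')\,\bigl(V(s')-\mu_{1}(s')-\alpha\mu_{2}(s')-\lambda(1+\alpha)\bigr)$ subject to the $L^{\infty}$ envelope coming from $f^{*}$; extracting the finiteness condition on this maximum is what pins down the relations $\mu=\mu_{1}(s')+\alpha\mu_{2}(s')$ and $\lambda=\max_{s'}\{V(s')-\mu_{1}(s'),V(s')-\mu_{2}(s'),0\}$ stated in the proposition. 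Once this bookkeeping is done, substituting back into $\mathcal{T}_{a}V(s)=\mathbb{E}_{a\sim\pi}[r(s,a)+\gamma\,\sup_{P_{s,a}^{m}}\mathbb{E}[V(s')]]$ and taking the infimum over $(\lambda,\mu_{1},\mu_{2},\mu)$ yields exactly Equation~\ref{equ:approx_value}. Finally I would verify the edge cases $\alpha=0$ (recovering the single-constraint RfMDP dual of \citet{panaganti2022robust}) and $\rho'=0$ (forcing $P_{s,a}^{m}=P_{s,a}^{o}=P_{s,a}^{w}$ and collapsing the dual to the nominal Bellman operator) as consistency checks.
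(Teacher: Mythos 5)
Your overall strategy---pose the inner maximization as a convex program, encode the TV constraint via the $f$-divergence with $f(t)=\tfrac12|t-1|$, and pass to the Lagrangian dual where the conjugate $\phi^{*}$ produces the positive-part terms---is the same as the paper's. But you are missing the structural device that actually makes the dual come out in the stated two-expectation form. The paper does not keep a single variable $q$ and dualize the two TV terms via the support-function identity; it introduces \emph{two} likelihood ratios $\zeta^{o}=\mathrm{d}q/\mathrm{d}P^{o}_{s,a}$ and $\zeta^{w}=\mathrm{d}q/\mathrm{d}P^{w}_{s,a}$, treats them as formally independent decision variables, and re-imposes the fact that they describe the same measure through a pointwise consistency constraint $\zeta^{o}(s')\,\mathrm{d}P^{o}(s')-\zeta^{w}(s')\,\mathrm{d}P^{w}(s')=0$ carrying its own multiplier $\mu_{3}(s')$. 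It is this splitting that lets the supremum decompose (by decomposability of the measurable function space) into two separate per-state conjugate computations, one integrated against $P^{o}$ and one against $P^{w}$, yielding $\mathbb{E}_{P^{o}}[V-\mu_{1}]_{+}$ and $\alpha\,\mathbb{E}_{P^{w}}[V-\mu_{2}]_{+}$ after the substitutions $\mu_{1}'(s)=\mu_{1}+\mu_{3}(s)-\tfrac{\lambda}{2}$ and $\mu_{2}'(s)=\mu_{2}-\tfrac{1}{\alpha}\mu_{3}(s)-\tfrac{\lambda}{2}$; the identity $\mu=\mu_{1}(s)+\alpha\mu_{2}(s)$ is then an immediate consequence of how $\mu_{3}(s)$ cancels, not a ``finiteness condition.''

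With a single $q$ your sketch cannot reach the displayed form: the per-state inner problem you write, $\max_{q(s')\ge0}q(s')\bigl(V(s')-\mu_{1}(s')-\alpha\mu_{2}(s')-\lambda(1+\alpha)\bigr)$, has only one variable per state and so cannot split into two positive parts weighted by two different base measures. Moreover, dualizing the constraint with $\lambda\ge0$ and then substituting the variational identity $D_{TV}(q\|P)=\sup_{\|g\|_{\infty}\le1/2}(\mathbb{E}_{q}[g]-\mathbb{E}_{P}[g])$ puts a $-\lambda\sup_{g}(\cdot)$ term inside an outer supremum over $q$, so exchanging the resulting $\sup_{q}\inf_{g}$ requires a minimax argument you do not supply. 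The fix is exactly the paper's reparametrization; once you adopt it, the rest of your outline (Slater/strong duality, $\phi^{*}(y)=\max\{y,-\tfrac12\}$, the bound $\lambda=\max_{s}\{V(s)-\mu_{1}(s),\,V(s)-\mu_{2}(s),\,0\}$ from requiring the conjugate arguments to stay below $\tfrac12$, and the rescaling $\rho=\rho'/(1+\alpha)$) goes through as you anticipate. Your $\alpha=0$ sanity check is also outside the stated range $\alpha\in(0,1]$ and the $\tfrac{1}{\alpha}\mu_{3}(s)$ term degenerates there, so it is not a valid limit of this derivation.
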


\begin{proof}
    The $f$-divergence can be utilized to express the constraint defined in the uncertainty set $\mathcal{P}^{m}$ as follows:
    \begin{equation}
    \label{equ:f_cons}
    \begin{split}
        &D_{TV}(P_{s,a}(s^{\prime})||P^{o}_{s,a}(s^{\prime})) + \alpha D_{TV}(P_{s,a}(s^{\prime})||P^{w}_{s,a}(s^{\prime})) \\
       =&\frac{1}{2}\int_{\mathcal{S}}|\mathrm{d}P_{s,a}(s^{\prime}) - \mathrm{d}P^{o}_{s,a}(s^{\prime})| + \frac{\alpha}{2}\int_{\mathcal{S}}|\mathrm{d}P_{s,a}(s^{\prime}) - \mathrm{d}P^{w}_{s,a}(s^{\prime})| \\
       =&\frac{1}{2}\int_{\mathcal{S}}|\frac{\mathrm{d}P_{s,a}(s^{\prime})}{\mathrm{d}P^{o}_{s,a}(s^{\prime})} - 1|\mathrm{d}P^{o}_{s,a} + \frac{\alpha}{2}\int_{\mathcal{S}}|\frac{\mathrm{d}P_{s,a}(s^{\prime})}{\mathrm{d}P^{w}_{s,a}(s^{\prime})} - 1|\mathrm{d}P^{w}_{s,a} \\
       =&\frac{1}{2}\int_{\mathcal{S}}|\zeta^{o}(s^{\prime}) - 1|\mathrm{d}P^{o}_{s,a} + \frac{\alpha}{2}\int_{\mathcal{S}}|\zeta^{w}(s^{\prime}) - 1|\mathrm{d}P^{w}_{s,a} \leq \rho^{\prime},
    \end{split}
    \end{equation}
where $\zeta: \mathcal{S} \rightarrow \mathbb{R}_{+}$ is the measurable function. $\zeta^{o}(s)=\frac{\mathrm{d}P(s)}{\mathrm{d}P^o(s)}$ measures the difference between two dynamics at each sample $s \in \mathcal{S}$, similarly for $\zeta^{w}$.

In the policy evaluation step of the adjustable robust Bellman operator, the goal is to obtain a mixture dynamics $P^{m}_{s,a}$ that maximizes the adjustable value function while satisfying the constraint defined in Equation~\ref{equ:f_cons}. We hereby formulate this procedure as a constrained optimization problem, written as:
\begin{equation}
\label{cons_optm_1}
\begin{split}
\max_{\zeta^{o}, \zeta^{w}} \quad &\int_{\mathcal{S}}V(s)\zeta^{o}(s)\mathrm{d}P^{o}(s) + \alpha \int_{\mathcal{S}}V(s)\zeta^{w}(s)\mathrm{d}P^{w}(s)\\
\mathrm{s.t.} \quad &\int_{\mathcal{S}}\phi(\zeta^{o}(s))\mathrm{d}P^{o} + \int_{\mathcal{S}}\phi(\zeta^{w}(s))\mathrm{d}P^{w} \leq (1+\alpha)\rho \\
&\int_{\mathcal{S}}\zeta^{o}(s)\mathrm{d}P^{o}(s) = 1; \quad \alpha\int_{\mathcal{S}}\zeta^{w}(s)\mathrm{d}P^{w}(s) = \alpha \\
&\zeta^{o}(s)\mathrm{d}P^{o}(s) - \alpha\frac{1}{\alpha}\zeta^{w}(s)\mathrm{d}P^{w}(s) = 0, \forall s \in \mathcal{S},
\end{split} 
\end{equation}
where the value function $V: \mathcal{S} \rightarrow \mathbb{R}$ is also a measurable function. We use $P^{o}$ and $P^{w}$ to denote $P_{s,a}^{o}$ and $P_{s,a}^{w}$ for brevity, $\rho = \frac{\rho^{\prime}}{1+\alpha}$ is the normalized radius constant, and $\phi(x(s)) = \frac{1}{2}|x(s) - 1|$ is a convex and lower semicontinuous function. We thus can use the Lagrangian multiplier method to solve the optimization problme in~\ref{cons_optm_1}. The Lagrangian function is written as:
\begin{equation}
\label{equ:lag_func}
\begin{split}
L(\zeta^{o}, \zeta^{w}, \lambda, &\mu_{1}, \mu_{2}, \mu_{3}(s)) 
= \int_{\mathcal{S}}[(V(s) - \mu_{1} - \mu_{3}(s))\zeta^{o} - \lambda\phi(\zeta^{o})]\mathrm{d}P^{o} \\
+ &\alpha \int_{\mathcal{S}}[(V(s) - \mu_{2} + \frac{1}{2}\mu_{3}(s))\zeta^{w} - \lambda\phi(\zeta^{w})]\mathrm{d}P^{w} + \lambda\rho(1+\alpha) + \mu_1 + \mu_2\alpha
\end{split}
\end{equation}
The Lagrangian dual of problem~\ref{cons_optm_1} is
\begin{equation}
\label{equ:dual}
    \inf_{\lambda\geq0, \mu_1, \mu_2, \mu_3(s)}\sup_{\zeta^{o}, \zeta^{w}} L(\zeta^{o}, \zeta^{w}, \lambda, \mu_{1}, \mu_{2}, \mu_{3}(s))
\end{equation}

Since the measurable function space $L(\mathcal{S}, \Sigma(\mathcal{S}), P)$ is decomposable, the supremum in~\ref{equ:dual} can be taken inside the integral~\citep{shapiro2017distributionally}, that is
\begin{equation}
\label{sup_in}
\begin{split}
    \sup_{\zeta^{o}, \zeta^{w}} L(\zeta^{o}, \zeta^{w}, &\lambda, \mu_{1}, \mu_{2}, \mu_{3}(s)) = \int_{\mathcal{S}}\sup_{\zeta^{o}}[(V(s) - \mu_{1} - \mu_{3}(s))\zeta^{o} - \lambda\phi(\zeta^{o})]\mathrm{d}P^{o} \\
+ &\alpha\int_{\mathcal{S}}\sup_{\zeta^{w}}[(V(s) - \mu_{2} + \frac{1}{2}\mu_{3}(s))\zeta^{w} - \lambda\phi(\zeta^{w})]\mathrm{d}P^{w} + \lambda\rho(1+\alpha) + \mu_1 + \mu_2\alpha
\end{split}
\end{equation}
We can observe that the two terms inside the integral are identical to the definition of the conjugate function $\phi^{\ast}(y) = \sup_{x\in\mathbb{R}_{+}} \{ xy - \phi(x) \}$. And $\phi(x)$, as a convex, semicontinuous function, is defined as $\frac{1}{2}|x(s) - 1|$, resulting in $\phi^{\ast}(y) = \max\{ y, -\frac{1}{2} \}$. The dual problem can be transformed as:
\begin{equation}
\label{trans_dual_1}
\begin{split}
    \inf_{\lambda\geq0, \mu_1, \mu_2, \mu_3(s)} &L(\lambda, \mu_{1}, \mu_{2}, \mu_{3}(s))  = \inf_{\lambda\geq0, \mu_1, \mu_2, \mu_3(s)} \int_{\mathcal{S}} \max\{ V(s)-\mu_{1}-\mu_{3}(s), -\frac{1}{2}\lambda \} \mathrm{d}P^{o} \\
    + &\alpha \int_{\mathcal{S}} \max\{ V(s)-\mu_{2}+\frac{1}{\alpha}\mu_{3}(s), -\frac{1}{2}\lambda \} \mathrm{d}P^{w} + \lambda\rho(1+\alpha) + \mu_1 + \mu_2\alpha
\end{split}
\end{equation}
We use $[x]_{+}$ to represent $\max\{x, 0\}$ for brevity. It is then transformed as
\begin{equation}
\label{trans_dual_2}
\begin{split}
    \inf_{\lambda\geq0, \mu_1, \mu_2, \mu_3(s)} &L(\lambda, \mu_{1}, \mu_{2}, \mu_{3}(s))  = \inf_{\lambda\geq0, \mu_1, \mu_2, \mu_3(s)} \int_{\mathcal{S}} [ V(s)-\mu_{1}-\mu_{3}(s) + \frac{1}{2}\lambda ]_{+} \mathrm{d}P^{o} \\
    + &\alpha \int_{\mathcal{S}} [ V(s)-\mu_{2}+\frac{1}{\alpha}\mu_{3}(s) + \frac{1}{2}\lambda ]_{+} \mathrm{d}P^{w} + \lambda\rho(1+\alpha) + \mu_1-\frac{\lambda}{2} + (\mu_2 - \frac{\lambda}{2})\alpha
\end{split}
\end{equation}
Next, we set $\mu_{1}^{\prime}(s) = \mu_{1}+\mu_{3}(s) - \frac{1}{2}\lambda$ and $\mu_{2}^{\prime}(s) = \mu_{2} - \frac{1}{\alpha}\mu_{3}(s) - \frac{1}{2}\lambda$. And we observe that $\forall s \in \mathcal{S}, \mu_{1}^{\prime}(s) + \alpha\mu_{2}^{\prime}(s) = \mu_1-\frac{\lambda}{2} + (\mu_2 - \frac{\lambda}{2})\alpha$. We thus set $\mu = \mu_{1}^{\prime}(s) + \alpha\mu_{2}^{\prime}(s)$. 
\begin{equation}
\label{trans_dual_3}
\begin{split}
    &\inf_{\lambda\geq0, \mu_1, \mu_2, \mu_3(s)} L(\lambda, \mu_{1}, \mu_{2}, \mu_{3}(s))  \\
    = &\inf_{\lambda\geq0, \mu_1, \mu_2, \mu_3(s)} \int_{\mathcal{S}} [ V(s)-\mu_{1}^{\prime}(s)]_{+} \mathrm{d}P^{o} 
    + \alpha \int_{\mathcal{S}} [ V(s)-\mu_{2}^{\prime}(s)]_{+} \mathrm{d}P^{w} + \lambda\rho(1+\alpha) + \mu
\end{split}
\end{equation}
In addition, based on the analysis of the conjugate function $\phi^{\ast}(y)$ in~\citep{panaganti2022robust}, we can derive the following two inequalities:
\begin{equation}
\begin{split}
\frac{V(s)-\mu_1-\mu_3(s)}{\lambda} \leq \frac{1}{2}; \quad
\frac{V(s)-\mu_2+\frac{1}{2}\mu_3(s)}{\lambda} \leq \frac{1}{2}
\end{split}
\end{equation}
We thus can obtain $\lambda=\max_{s}\{ V_{a}^{\pi}(s)-\mu_{1}(s), V_{a}^{\pi}(s)-\mu_{2}(s), 0 \}$. Building on the proofs above, we can define the adjustable robust Bellman operator as shown in Equation~\ref{equ:approx_value}.

\end{proof}

\subsection{Pseudocode of Algorithms}
The pseudocodes for both the exact AR2L algorithm and the approximate AR2L algorithm are presented in Algorithm~\ref{alg:exact_ar2l} and Algorithm~\ref{alg:approx_ar2l}, respectively. Note that in practice, the Kullback-Leibler (KL) divergence is adopted to constrain the deviation
between distributions in the exact AR2L algorithm.

\begin{algorithm}[t]
    \caption{Exact AR2L Algorithm}
    \label{alg:exact_ar2l}
    \textbf{Initialize}: packing policy $\pi_{\mathrm{pack}}$, permutation-based attacker $\pi_{\mathrm{perm}}$, mixture-dynamics model $\pi_{\mathrm{mix}}$, and their corresponding value functions $V^{\pi_{\mathrm{pack}}}$, $V^{\pi_{\mathrm{perm}}}$, $V^{\pi_{\mathrm{mix}}}$; \\
    \textbf{Input}: stationary item distribution $p_{b}$;  \\
    \textbf{Parameter}: robustness weight $\alpha$, number of observable items $N_B$;\\
    \textbf{Output}: packing policy $\pi_{\mathrm{pack}}$;
    \begin{algorithmic}[1] 
        \FOR{$i=0$ to $max\_iteration$}
        \STATE \texttt{\# train the permutation-based attacker}
        \STATE Reset the environment and observe $\bm{\mathrm{C}}_{0}, \bm{\mathrm{B}}_{0} \sim p_b$;
        \FOR{$t=0$ to $max\_step$}
        \STATE Get permuted item sequence: $\bm{\mathrm{B}}_{t}^{\prime} = \pi_{\mathrm{perm}}(\bm{\mathrm{C}}_{t}, \bm{\mathrm{B}}_{t})$;
        \STATE Get location for $b_{t,1}^{\prime}$: $l_{t} = \pi_{\mathrm{pack}}(\bm{\mathrm{C}}_{t}, \bm{\mathrm{B}}_{t}^{\prime})$;
        \STATE Pack $b_{t,1}^{\prime}$ into the bin, and observe $\bm{\mathrm{C}}_{t+1}, \bm{\mathrm{B}}_{t+1} \sim p_b$, reward $ r^{\mathrm{pack}}_{t}$, termination $d_{t}$;
        \IF{$d_{t} == \mathrm{True}$}
        \STATE Update $\pi_{\mathrm{perm}}$ on episode samples to maximize $-\sum r^{\mathrm{pack}}_{t}$;
        \ENDIF
        \ENDFOR

        \STATE \texttt{\# train the mixture-dynamics model and the packing policy}
        \STATE Reset the environment and observe $\bm{\mathrm{C}}_{0}, \bm{\mathrm{B}}_{0} \sim p_b$;
        \FOR{$t=0$ to $max\_step$}
        \STATE Get permuted item sequence: $\bm{\mathrm{B}}_{t}^{\prime\prime} = \pi_{\mathrm{mix}}(\bm{\mathrm{C}}_{t}, \bm{\mathrm{B}}_{t})$;
        \STATE Get location for $b_{t,1}^{\prime\prime}$: $l_{t} = \pi_{\mathrm{pack}}(\bm{\mathrm{C}}_{t}, \bm{\mathrm{B}}_{t}^{\prime\prime})$;
        \STATE Pack $b_{t,1}^{\prime\prime}$ into the bin, and observe $\bm{\mathrm{C}}_{t+1}, \bm{\mathrm{B}}_{t+1} \sim p_b$, reward $ r^{\mathrm{pack}}_{t}$, termination $d_{t}$;
        \IF{$d_{t} == \mathrm{True}$}
        \STATE Update $\pi_{\mathrm{mix}}, V^{\pi_{\mathrm{mix}}}$ on episode samples to maximize \\
        $\sum r^{\mathrm{pack}}_{t} - (D_{KL}(\pi_{\mathrm{mix}}||\mathbbm{1}_{\{x=b_{t+1,1}})+\alpha D_{KL}(\pi_{\mathrm{mix}}||\pi_{\mathrm{perm}}))$;
        \STATE Update $\pi_{\mathrm{pack}}, V^{\pi_{\mathrm{pack}}}$ on episode samples to maximize $\sum r^{\mathrm{pack}}_{t}$;
        \ENDIF
        \ENDFOR
        \ENDFOR
    \end{algorithmic}
\end{algorithm}

\begin{algorithm}[t]
    \caption{Approximate AR2L Algorithm}
    \label{alg:approx_ar2l}
    \textbf{Initialize}: packing policy $\pi_{\mathrm{pack}}$, permutation-based attacker $\pi_{\mathrm{perm}}$, and their corresponding value functions $V^{\pi_{\mathrm{pack}}}_{a}$, $V^{\pi_{\mathrm{perm}}}$; \\
    \textbf{Input}: stationary item distribution $p_{b}$;  \\
    \textbf{Parameter}: robustness weight $\alpha$, number of observable items $N_B$;\\
    \textbf{Output}: packing policy $\pi_{\mathrm{pack}}$;
    \begin{algorithmic}[1] 
        \FOR{$i=0$ to $max\_iteration$}
        \STATE \texttt{\# train the permutation-based attacker}
        \STATE Reset the environment and observe $\bm{\mathrm{C}}_{0}, \bm{\mathrm{B}}_{0} \sim p_b$;
        \FOR{$t=0$ to $max\_step$}
        \STATE Get permuted item sequence: $\bm{\mathrm{B}}_{t}^{\prime} = \pi_{\mathrm{perm}}(\bm{\mathrm{C}}_{t}, \bm{\mathrm{B}}_{t})$;
        \STATE Get location for $b_{t,1}^{\prime}$: $l_{t} = \pi_{\mathrm{pack}}(\bm{\mathrm{C}}_{t}, \bm{\mathrm{B}}_{t}^{\prime})$;
        \STATE Pack $b_{t,1}^{\prime}$ into the bin, and observe $\bm{\mathrm{C}}_{t+1}, \bm{\mathrm{B}}_{t+1} \sim p_b$, reward $ r^{\mathrm{pack}}_{t}$, termination $d_{t}$;
        \IF{$d_{t} == \mathrm{True}$}
        \STATE Update $\pi_{\mathrm{perm}}$ on episode samples to maximize $-\sum r^{\mathrm{pack}}_{t}$;
        \ENDIF
        \ENDFOR

        \STATE \texttt{\# train the packing policy}
        \STATE Reset the environment and observe $\bm{\mathrm{C}}_{0}, \bm{\mathrm{B}}_{0} \sim p_b$;
        \FOR{$t=0$ to $max\_step$}
        \STATE Get location for $b_{t,1}$: $l_{t} = \pi_{\mathrm{pack}}(\bm{\mathrm{C}}_{t}, \bm{\mathrm{B}}_{t})$;
        \STATE Pack $b_{t,1}$ into the bin, and observe $\bm{\mathrm{C}}_{t+1}, \bm{\mathrm{B}}_{t+1} \sim p_b$, reward $ r^{\mathrm{pack}}_{t}$, termination $d_{t}$;
        \IF{$d_{t} == \mathrm{True}$}
        \STATE Permute $\bm{\mathrm{B}}_{t}, t\in[0, T]$ to $\bm{\mathrm{B}}_{t}^{\prime}, t\in[0, T]$ using $\pi_{\mathrm{perm}}$;
        \STATE Use~\ref{equ:approx_value} to compute target values for adjustable Bellman values on \\
        $\{ \bm{\mathrm{C}}_{t}, \bm{\mathrm{B}}_{t}, \bm{\mathrm{B}}_{t}^{\prime}, l_{t}, r^{\mathrm{pack}}_{t}, d_{t}  \}, t\in[0, T]$;
        \STATE Update $\pi_{\mathrm{pack}}, V^{\pi_{\mathrm{pack}}}$ on episode samples to maximize $\sum r^{\mathrm{pack}}_{t}$;
        \ENDIF
        \ENDFOR
        \ENDFOR
    \end{algorithmic}
\end{algorithm}

\newpage
\subsection{Implementation Details}
\begin{wrapfigure}{r}{0pt}
\includegraphics[width=0.3\textwidth]{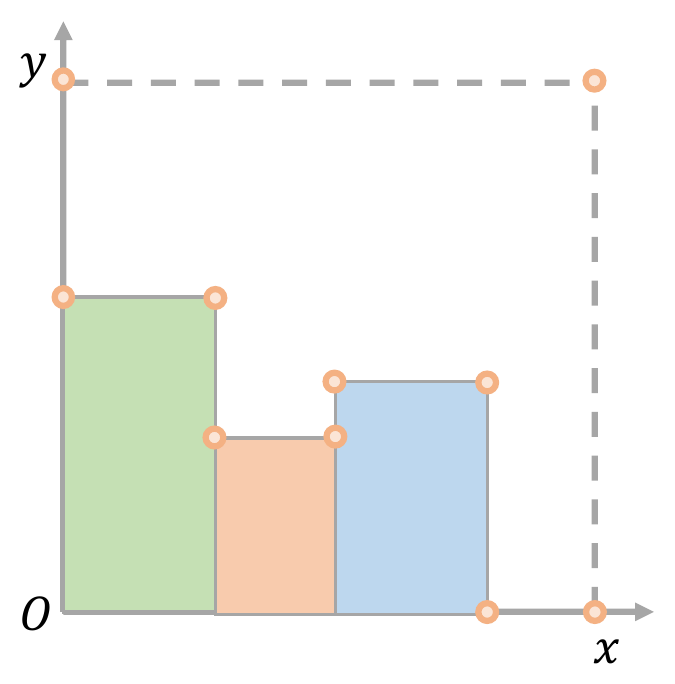}
\caption{In the $xoy$ plane, circles represent the potential positions generated by the IPs heuristics.}
\label{fig:IPs}
\end{wrapfigure}

To implement a packing policy that can work in both the discretized grid world and the continuous solution space, we use the packing configuration tree (PCT)\citep{zhao2022learning_iclr} to represent the state, and employ a transformer\citep{vaswani2017attention} network as the packing policy. PCT is utilized to represent the bin configuration and available empty spaces for the most recently packed item, while the versatile transformer network is capable of handling the variable number of nodes in PCT. We adopt the same network configuration as reported in PCT for the packing policy, with the addition of positional encoding to encode the position information of each item in the observed item sequence.

In addition, those available spaces are generated using a heuristic method that takes into account the packing constraints and preferences for the item to be packed. Specifically, in the discrete setting, we utilize the empty maximal spaces (EMSs)~\citep{ha2017online} heuristics to generate potential positions, and subsequently apply the same constraints used in PCT to check the stability of these positions, thereby identifying the feasible positions $\bm{\mathrm{L}}$. In the continuous setting, we propose a new heuristic methods called Intersection Points (IPs) to generate potential positions. Given the 2D packed items in the $xoy$ plane, the intersection points are generated from the intersection of the sides of cuboid-shaped items and the bin, as shown in Figure~\ref{fig:IPs}. To extend this method from 2D to 3D cases, we allow candidate positions to be obtained from the intersection points of the sides in the top view.

In our implementations of both the permutation-based attacker and the mixture-dynamics model, we utilize the transformer~\citep{vaswani2017attention} network to model these policies. This choice is motivated by the variable number of observed boxes in different real packing systems, as well as the dynamic changes in the number of packed items during each episode. The transformer is capable of capturing long-range dependencies in spatial data and sequential data. Specifically, we first use two independent element-wise fully-connected (FC) layers to project heterogenous elements of the state into homogenous features. Then, the scaled dot-product attention is relied to calculate the weighted sum of features based on the relevance between given input features. Next, we use the skip-connection operation and the element-wise feed-forward FC layer to obtain the final features $\bm{x} = \{\bm{x}_{C}, \bm{x}_{B}\} \in\mathbb{R}^{d_{x} \times N}$, where $\bm{x}_{C} \in \mathbb{R}^{d_{x} \times N_{C}}$ and $\bm{x}_{B} \in \mathbb{R}^{d_{x} \times N_{B}}$ denote features of $\bm{\mathrm{C}}_{t}$ and $\bm{\mathrm{B}}_{t}$ respectively, $d_x$ is the feature size, and $N=N_C + N_B$ denotes the variable feature number. Finally, the probability distribution over $\bm{\mathrm{B}}_{t}$ is described as
\begin{equation}
    \pi_{\mathrm{perm}}(\bm{\mathrm{B}}_{t}|\bm{\mathrm{C}}_{t}, \bm{\mathrm{B}}_{t}) = \mathrm{softmax}(c_{\mathrm{temp}} \cdot \mathrm{tanh}(y)), \quad y = \frac{\Bar{x}^{T}\bm{x}_{B}}{\sqrt{d_{x}}} \in \mathbb{R}^{N_{B}},
\end{equation}
where $\Bar{x}=\frac{1}{N}\sum_{i=1}^{N}x_{i}$ and $c_{\mathrm{temp}}$ is the temperature constant.

As the number of packed items $\bm{\mathrm{C}}$ and the number of available feasible positions $\bm{\mathrm{L}}$ both vary at different time steps and have irregular shapes, we combine this data into one batch by filling $\bm{\mathrm{C}}$ and $\bm{\mathrm{L}}$ to fixed lengths of 80 and 120 in the discrete setting, and 100 and 120 in the continuous setting, respectively, using dummy elements. Each valid element in $\bm{\mathrm{C}}$ is a vector that specifies the size and position of an item that has already been packed. Each vector in $\bm{\mathrm{B}}$ represents the sizes of an item that has yet to be packed. Each vector in $\bm{\mathrm{L}}$ contains the position of a feasible empty space that can accommodate a new item. To permute the observed item sequence, we feed both $\bm{\mathrm{C}}$ and $\bm{\mathrm{B}}$ to a transformer network that serves as the attacker. In this transformer network, the sizes of the two element-wise fully connected (FC) layers are fixed at 64. Additionally, there is one attention layer with one head attention, and the sizes of the query, key, and value vectors are set to 64. The size of the element-wise feed-forward FC layer is also set to 64. In our experiment, we utilize the PPO~\citep{schulman2017proximal} algorithm to effectively train the packing policy, permutation-based attacker, and mixture-dynamics model. PPO employs multiple parallel processes (64 in this case) to interact with their respective environments and collect data. The rollout length for each iteration is set to 30, and the learning rate is set to 0.0003. To ensure a fair comparison, we keep the same hyperparameter settings mentioned above for each of the methods that we have reproduced.

\subsection{Related Work}

\textbf{Heuristic Methods.} In heuristic methods proposed for solving 3D-BPP, the score function is often designed to rank item placements based on expert knowledge, which can represent the packing requirements and preferences to some extent.~\cite{ha2017online} relied on the deep-bottom-left (DBL)~\citep{karabulut2005hybrid} rule to sort empty maximal spaces (EMSs)~\citep{parreno2008maximal} for the current item. The best-match-first strategy~\citep{li2015hybrid} gives priority to feasible placements that have the smallest margin for the observed item.~\cite{hu2017solving} introduced the least surface area heuristic, which selects the maximal empty space and orientation that result in the smallest surface area. The Heightmap-Minimization method~\citep{wang2019stable} favors items placement that result in the smallest occupied volume as observed from the loading direction. The maximize-accessible-convex-space method, as described by~\cite{hu2020tap}, aims to optimize the empty space available for future, potentially large items. Despite their simplicity and effectiveness, these methods find difficulties in handling complex packing preferences and adapting to diverse scenarios for solving online 3D-BPP. This is because their packing strategies rely heavily on expert experience and may not be flexible enough to accommodate different situations.

\textbf{DRL-based Methods.} To further develop highly effective policies, DRL-based methods are broadly proposed by formulating the online 3D-BPP as a sequential decision-making problem. In this formulation, the state usually includes the bin configuration, which contains information about both the container and the packed items, as well as the observed item on the conveyor. To learn a DRL-based policy, the container is discretized to represent the bin configuration, and a convolutional neural network (CNN) is served as a policy~\citep{hu2017solving, verma2020generalized, zhao2021online, zhao2022learning, yang2021packerbot, song2023towards}. However, these learning-based methods only work in a grid world with limited spatial discretization accuracy, which reduces their practical applicability. To deploy DRL-based methods on solving online 3D-BPP with continuous solution space, ~\cite{zhao2022learning_iclr} utilized the Packing Configuration Tree (PCT) to represent the bin configuration and potential available spaces. They then employed a graph attention network~\citep{velivckovic2017graph} to encode the PCT and observed boxes, to encode the PCT and observed boxes, and to select a location from available spaces generated for the most preceding item. However, these methods often focus on optimizing average performance and do not account for worst-case scenarios.


\end{document}